\documentclass{article}

\usepackage[final]{corl_2023} 
\usepackage{multicol}
\usepackage{graphicx}
\usepackage{subcaption}
\usepackage{amsthm}
\newtheorem{definition}{Definition}

\newtheorem{theorem}{Theorem}
\newtheorem{lemma}{Lemma}
\usepackage{mathtools}
\usepackage{booktabs}
\usepackage{multirow}
\usepackage{pythonhighlight}
\usepackage{wrapfig}
\usepackage{soul}

\usepackage[capitalise]{cleveref}
\usepackage[all]{abaisero}

\newcommand{\edit}[1]{{\color{black}#1}}

\title{Equivariant Reinforcement Learning under Partial Observability}

%


\author{
  Hai Nguyen, Andrea Baisero, David Klee, Dian Wang, Robert Platt, Christopher Amato \\
  Khoury College of Computer Sciences, 
  Northeastern University, Boston, MA, United States\\
  \texttt{nguyen.hai1@northeastern.edu}\\
  \url{https://sites.google.com/view/equi-rl-pomdp}
}

\begin{document}
\maketitle


\begin{abstract}
Incorporating inductive biases is a promising approach for tackling challenging robot learning domains with sample-efficient solutions. This paper identifies partially observable domains where symmetries can be a useful inductive bias for efficient learning. Specifically, by encoding the equivariance regarding specific group symmetries into the neural networks, our actor-critic reinforcement learning agents can reuse solutions in the past for related scenarios. Consequently, our equivariant agents outperform non-equivariant approaches significantly in terms of sample efficiency and final performance, demonstrated through experiments on a range of robotic tasks in simulation and real hardware. 
\end{abstract}

\keywords{Partial Observability, Equivariant Learning, Symmetry}


\section{Introduction}

\begin{wrapfigure}[18]{R}{0.45\linewidth}
  \centering
  \includegraphics[width=0.9\linewidth]{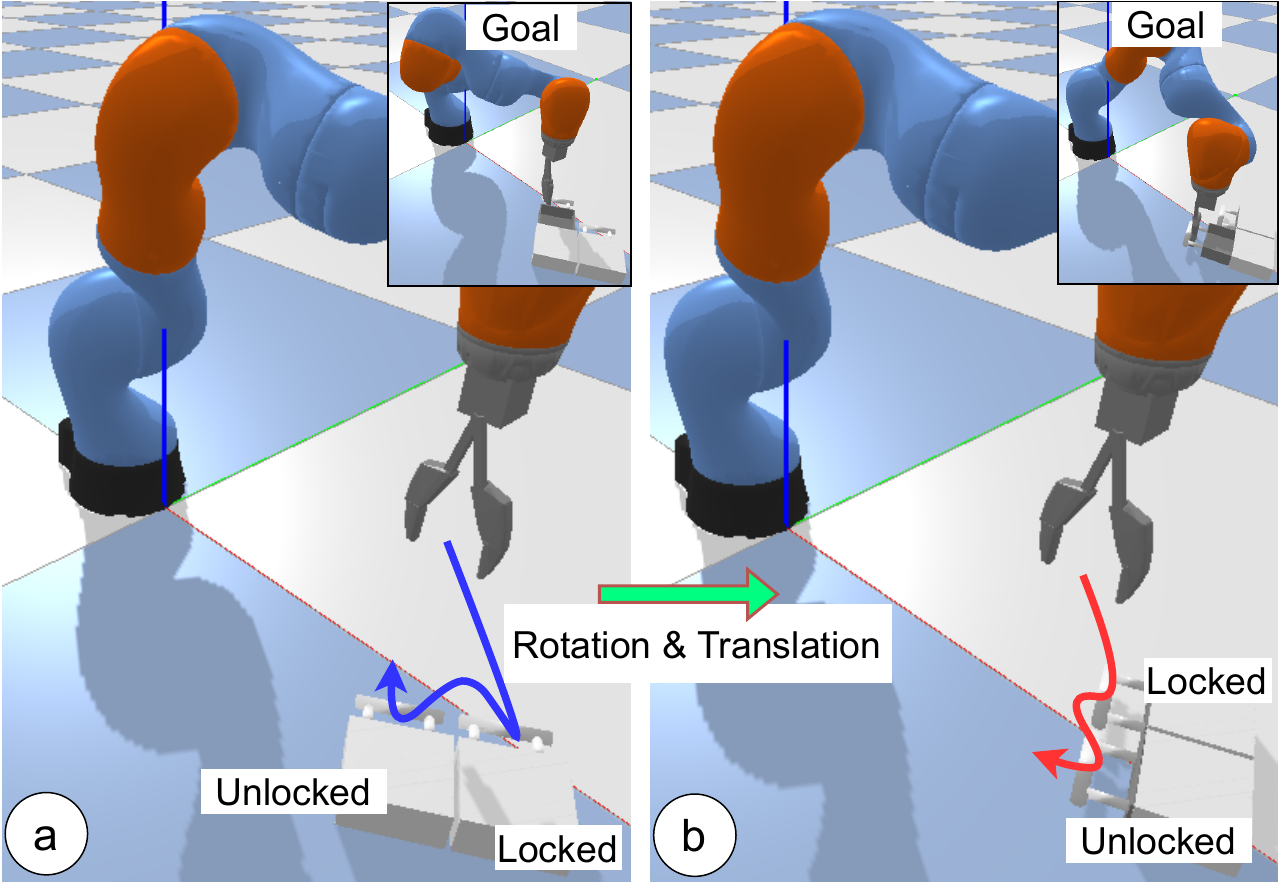}
  \caption{\texttt{Drawer-Opening}: This POMDP is rotationally symmetric in the sense that an optimal solution to the problem on the left (in {\color{blue}{blue}}) can be rotated to obtain an optimal solution to a rotated version of the problem on the right (in {\color{red}{red}}). }
  \label{fig:motivating_example}
\end{wrapfigure}

A key challenge in robot learning is to improve sample efficiency, i.e., to reduce the number of experiences or demonstrations needed to learn a good policy. One way to do this is to identify domain symmetries that can structure the policy space. Recent works have demonstrated that symmetry-preserving (equivariant) neural network models are a particularly effective way of accomplishing this~\citep{wang2022so, wang2022equivariant, wang2022robot, huang2022equivariant, zhu2022sample}. However, these works have focused primarily on fully observable Markov decision processes (MDPs) rather than partially observable systems encoded as partially observable MDPs (POMDPs)~\citep{astrom1965optimal}. The question arises whether symmetric neural models can also be used to solve Partially Observable Reinforcement Learning (PORL) problems. This paper identifies the theoretical conditions under which this is indeed the case and describes an equivariant recurrent model that works well in practice.

To motivate, \cref{fig:motivating_example} illustrates the \texttt{Drawer-Opening} problem where a robot is presented with a chest containing two drawers, one locked and one unlocked. To solve this task, the robot must determine which drawer is unlocked and then open that drawer, relying only on top-down image observations. This task reflects a common POMDP when physical properties (whether a drawer is unlocked) are hidden from the visual input. The only way for the robot to distinguish between the two drawers is to attempt to open one of them. This is a classic feature of a POMDP -- that the agent must perform \emph{information gathering actions} to obtain information needed to solve the task. Notice that this problem is rotationally symmetric in the sense that its optimal solution (the {\color{blue}{blue}} end-effector trajectory in \cref{fig:motivating_example}a) rotates (the {\color{red}{red}} trajectory in \cref{fig:motivating_example}b) when the scene itself rotates and is an example of the type of symmetry that we want our agents to embed in their architectures.

We make three contributions in this work. First, we extend the framework of group-invariant Markov decision processes~\citep{wang2022so} to the partially observable setting, resulting in a new theory and solution method. Specifically, we prove the optimal policy and the value function must be equivariant and invariant in this new setting. Second, backed by the proof, we introduce equivariant actor-critic agents that inherently embed the domain symmetry in their architectures. Finally, we apply the agents in realistic robot manipulation tasks with sparse rewards, where our agents are shown to significantly outperform non-equivariant approaches in both sample efficiency and final performance. Our approach's effectiveness is shown through simulated and real-robot experiments with equivariant and recurrent versions of Advantage Actor-Critic (A2C)~\citep{mnih2016asynchronous} and Soft Actor-Critic (SAC)~\citep{haarnoja2018soft}.

\section{Related Works}
\noindent \textbf{Learning under Partial Observability }Unlike classical planning-based methods~\citep{kurniawati2008sarsop, somani2013despot, pineau2003point} that impractically require the complete dynamics of the environment, learning-based methods~\citep{hausknecht2015deep, ni2021recurrent, ma2020discriminative, han2019variational, yang2021recurrent, heess2015memory, igl2018deep, meng2021memory} utilize recurrent versions of common reinforcement learning (RL) algorithms for policy learning by directly interacting with the environment. To speed up learning, some methods leverage privileged information assumed available during training, such as the states, the belief about the environment states, or the fully observable policy~\citep{nguyen2020belief, nguyen2022leveraging, baisero2022asymmetric, baisero2022unbiased}\edit{, which are orthogonal to our approach}. Only a few prior works exploited domain symmetries under partial observability.~\citet{kang2012exploiting, doshi2008permutable} leveraged the invariance of the value function of some POMDPs given a state permutation and experimented on a classical planning-based method~\citep{pineau2003point} with the above limitations. Recently,~\citet{muglich2022equivariant} used equivariant networks to enforce symmetry when multiple agents coordinate. In contrast, we use model-free RL agents in a single-agent setting.

\noindent \textbf{Equivariant Learning }Equivariant networks have been successfully applied to a range of tasks such as point cloud analysis~\citep{thomas2018tensor} and molecular dynamics~\citep{satorras2021n, batzner20223}. A common approach is to build networks with group equivariant convolutions~\citep{cohen2016group} which are equivariant to arbitrary symmetry groups, such as 2D~\citep{lecun1995convolutional, weiler2019general} and 3D~\citep{thomas2018tensor, chen2021equivariant, cohen2018spherical, deng2021vector} transformations. Recently, for MDPs, equivariant networks have been applied to robotics~\citep{wang2022equivariant, wang2022so, zhu2022sample} and reinforcement learning~\citep{mondal2020group, van2020mdp} to improve sample efficiency. 
Closest to our work is~\citep{wang2022so}, which formalized group-invariant MDPs and used equivariant networks to perform robotic manipulation tasks. In contrast, this work extends equivariant reinforcement learning to partially observable environments, resulting in a new theory and method.

\noindent \textbf{Equivariance v.s.~Data Augmentation }Both methods leverage known domain symmetry to improve learning, but in different ways. On the one hand, data augmentation artificially expands the training data distribution with transformed versions of the data using the symmetry (e.g., rotating, cropping, or translating images~\citep{laskin2020curl, zhan2020framework}); then training a non-equivariant model. On the other hand, an equivariant approach bakes the domain symmetry directly into the model's weights, so an equivariant model can automatically generalize across input transformations even before training. Compared to an equivariant approach, a model trained using data augmentation alone is often less sample efficient~\citep{weiler2019general, cohen2018spherical}, generalizes worse~\citep{wang2020incorporating}, and requires a bigger architecture and longer training time for the same performance due to the extra work of learning symmetry injected in the data.

\section{Background}
Here, we review some background about POMDPs, some specific group theories used in our work, and finally, the basis of our approach --- the framework of group-invariant MDPs~\citep{wang2022so}.

\subsection{Partially Observable Markov Decision Processes}

A POMDP is defined by a tuple $(\mathcal{S}, \mathcal{A}, \Omega, b_0, T, R, O)$, where $\mathcal{S}$, $\mathcal{A}$, and $\Omega$ are the state space, the action space, and the observation space, respectively. $b_0\in\Delta\sset$ is the starting state distribution (a.k.a. the initial belief), states change and observations are emitted according to the stochastic dynamics function $T(s, a, s')$ and the stochastic observation function $O(a, s', o)$, respectively. Generally, an optimal agent may need to choose actions based on the entire observable action-observation history $h_t = (o_0, a_0, \ldots, a_{t-1}, o_t)$~\citep{singh1994learning}. Denoting the space of all histories as $\hset$, the goal is to find a history-policy $\pi\colon\hset\to\Delta\aset$ which maximizes the expected discounted return $J = \Exp\left[ \sum_{t=0}^\infty \gamma^t R(s_t, a_t) \right]$, where $\gamma \in [0, 1)$ is a discounting factor. An important concept in POMDPs is the belief $b(s) = \Pr (s \mid h)$, which is the probability that the true state is $s$ given an observed history $h$. The belief state is a sufficient statistic of the history, sufficient for optimal control. However, updating the belief state requires complete knowledge of the POMDP dynamic models, which are often hard to obtain.

\begin{figure}[htbp]
    \centering
    \includegraphics[width=0.8\linewidth]{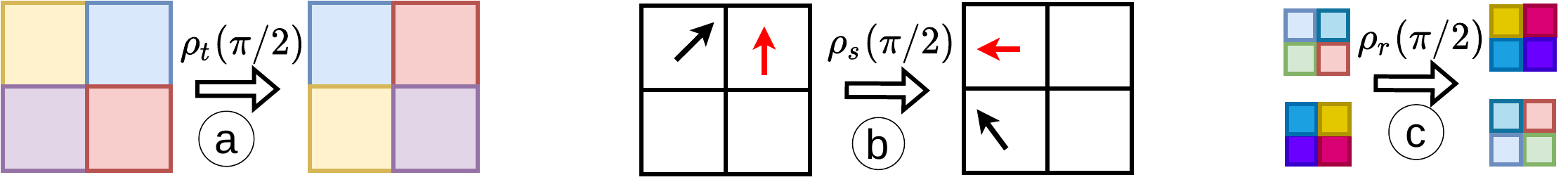}
    \caption{Illustration of a \edit{pixel-wise rotation} (characterized by a \emph{fixed} representation $\rho_f$) and a channel-wise rotation (characterized by the representation $\rho$). When $g$ is a $\pi/2$ CCW rotation, $\rho_f$ \emph{always} rotates the pixels while the effect of $\rho$ varies, e.g., the effect when $\rho$: \textbf{(a)} being a trivial representation ($\rho_t$) acting on a 1-channel feature map, \textbf{(b)} being a standard representation ($\rho_s$) acting on a vector field, and \textbf{(c)} being a regular representation ($\rho_r$) acting on a 2-channel feature map.}
    \label{fig:repr_viz}
    \vspace{-5pt}
\end{figure}

\subsection{$C_n$ and SO(2) Symmetry Groups}
In this work, we are mainly concerned with the symmetry group $G=\text{SO(2)}$ of continuous planar rotation, defined as $\text{SO(2)}=\text{Rot}_\theta: \{ 0 \leq \theta < 2\pi\}$. For a reduced computation complexity, we use the cyclic subgroup $C_n \leq \text{SO(2)}$ to approximate SO(2), which is defined as $C_n = \{ \text{Rot}_\theta: \theta \in \{ \frac{2\pi i }{n} \mid 0 \leq i < n\} \} $. In other words, $C_n$ defines $n$ rotations (i.e., group elements), which are multiples of $\frac{2\pi}{n}$. For instance, $C_4 = \{0, \pi/2, 2\pi/2, 3\pi/2\}$ and $C_8 = \{0, \pi/8, \dots, 6\pi/8, 7\pi/8\}$.

\subsection{Group Representations}

A \emph{group representation} is a mapping from a group $G$ to a $d$-dimensional general linear (GL) group, i.e., $\rho: G \rightarrow \text{GL}_d$ by assigning each group element $g \in G$ with an invertible matrix $\rho(g) \in \mathbb{R}^{d \times d}$. 

When $G = C_n$, the effect of a rotation $g \in C_n$ on a signal $x$ (i.e., $gx$) \edit{starts with a \emph{pixel-wise} rotation $\rho_f(g)^{-1} x$} (with a \emph{fixed} group representation $\rho_f$), followed by a \emph{channel-wise} rotation, i.e., $g x = \rho(g) (\rho_f(g)^{-1} x)$ (with the choice of group representation $\rho$). In this work, we consider three choices of the channel-wise representation $\rho$:

\noindent \textbf{Trivial Representation ($\rho=\rho_t$): }For $\forall g \in G$, $\rho_t$ associates $g$ with an identity matrix. For example, in~\cref{fig:repr_viz}a when $g$ is a $\pi/2$ counter-clockwise (CCW) rotation, and $x$ is a 1-channel feature map, \edit{$\rho_f$} rotates the pixels of $x$ while $\rho_t$ does not change the pixel values (i.e., the colors are unchanged). 

\noindent \textbf{Standard Representation ($\rho=\rho_s$): }For $\forall g \in G$, $\rho_s$ associates $g$ with a rotational matrix, i.e., $\rho_s(g)=g$. As in~\cref{fig:repr_viz}b, when $g$ is a $\pi/2$ CCW rotation and $x$ is a vector field input, $\rho_f$ rotates the positions of vectors (denoted as colored arrows), and $\rho_s$ rotates their orientations.

\noindent \textbf{Regular Representation ($\rho=\rho_r$): } For each $g \in G$, when acting on an input $x$, $p_r$ will cyclically permute the coordinates of $x$. \cref{fig:repr_viz}c illustrates when $g$ is a $\pi/2$ CCW rotation and $x$ is a 2-channel feature map, $\rho_f$ rotates each channel's pixels and $\rho_r$ permutes the orders of the channels.

\textbf{An Illustrative Example }Combining the group and the group representation fully characterizes how a signal will be transformed. For an illustrative example in a grid-world domain, see~\cref{app:simple_example}.

\subsection{Equivariance, Invariance, and Group-invariant MDPs}

Given $\phi\colon \mathcal{X} \to\mathcal{Y}$ and a symmetric group $G$ that acts on $\mathcal{X}$ and $\mathcal{Y}$, we say that $\phi$ is \emph{$G$-equivariant} if $\phi(g x) = g \phi(x)$, and \emph{$G$-invariant} if $\phi(g x) = \phi(x)$.  For the remainder of this document, we drop the prefix $G$ and simply refer to these properties as invariance and equivariance.

These notions have been adopted in the framework of group-invariant MDPs~\citep{wang2022so}. Specifically, an MDP $M_G = (\mathcal{S}, \mathcal{A}, T, R)$ is invariant if the transition and the reward function are invariant, i.e., 
$T(g s, g a, g s') = T(s, a, s')$ and $R(g s, g a) = R(s,a)$.  Group-invariant MDPs are associated with an invariant optimal Q-function, i.e.,
$Q^*(g s, g a) = Q^*(s, a)$, and at least one equivariant deterministic optimal policy, i.e., $\pi^*(g s) = g \pi^*(s)$.
These properties were exploited to build very sample-efficient equivariant agents under full observability~\citep{wang2022equivariant, wang2022so, zhu2022sample, wang2022robot}.

\begin{figure}[htbp]
    \centering
    \includegraphics[width=0.85\linewidth]{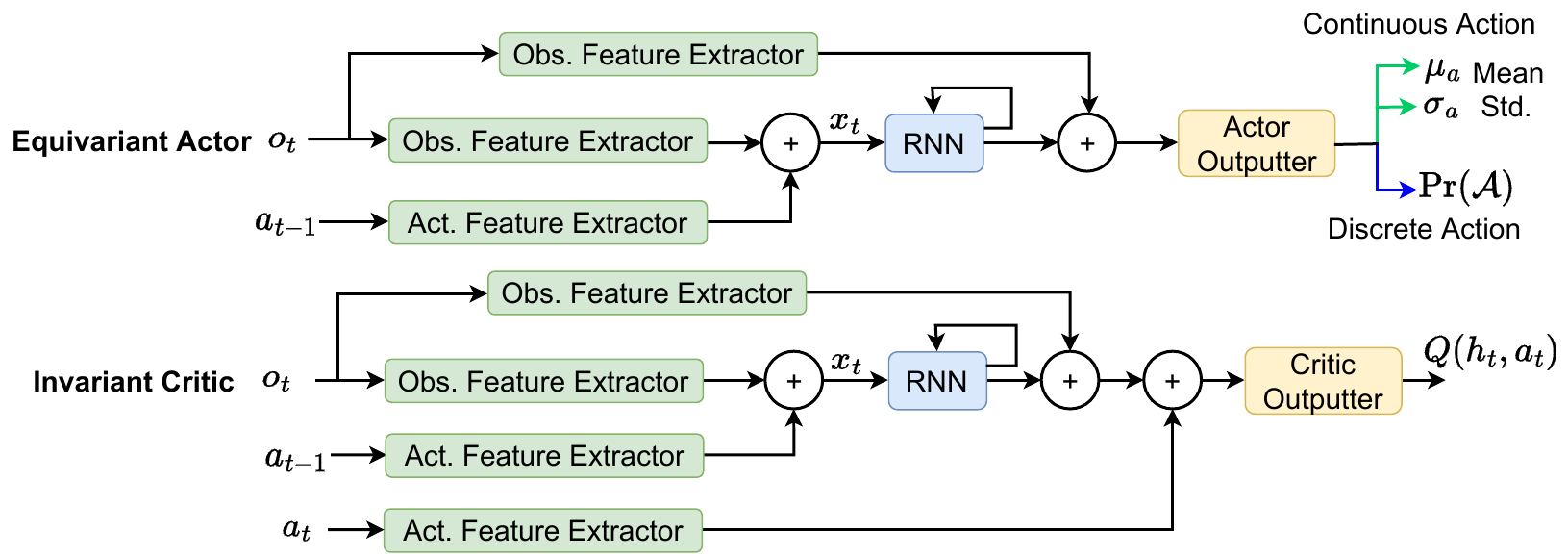}
    \caption{Our equivariant agent takes the \edit{commonly used} structure of a memory-based actor-critic agent~\citep{ni2021recurrent, ha2018world, zintgraf2019varibad, hung2019optimizing} but consists of an equivariant actor and an invariant critic, each constructed by equivariant modules. The actor's output can be learned means and standard deviations (for continuous action spaces) or a categorical distribution over the action space (for discrete action spaces).}
    \label{fig:agent}
    \vspace{-10pt}
\end{figure}

\section{Group-Invariant POMDPs}
\label{sect:analysis}

In this section, we extend the ideas from~\citep{wang2022so} to POMDPs and identify the basic set of assumptions that a POMDP needs to satisfy to have analogous invariance properties. We also note that while other assumptions might also lead to an invariant POMDP, ours are probably the most natural.  
\begin{definition}
\label{def:group-invariant-pomdp}
  We say a POMDP $P_G = (\mathcal{S}, \mathcal{A}, \Omega, b_0, T, R, O)$ is group-invariant with respect to group $G$ if it satisfies the following invariant properties for all $g \in G$:
  \begin{equation}\label{eq:requirements}
  \begin{aligned}
      T(g s, g a, g s') &= T(s, a, s') \quad
      R(g s, g a) = R(s, a) \\
      O(g a, g s', g o) &=  O(a, s', o) \quad
      b_0(g s) = b_0(s) \,.
  \end{aligned}
  \end{equation}
  \end{definition}
  \noindent 
  This extends the definition of the group invariant MDP from~\citep{wang2022so} by incorporating additional constraints on the observation function and the initial belief distribution.
  We also extend the group operations on histories.

\begin{definition}
    \label{def:history-action}
    Group operation $g$ acts on history $h_t$ according to $g h_t \coloneqq (g o_0, g a_0, \ldots, g a_{t-1}, g o_t)$.
\end{definition}

Finally, we show that group-invariant POMDPs exhibit similar properties and benefits as group-invariant MDPs.

\begin{theorem}
\label{thm:group-invariant-values-and-policies}
A group-invariant POMDP has an invariant optimal Q-function $Q^*(g h, g a)=Q^*(h, a)$, an invariant optimal value function $V^*(g h) = V^*(h)$, and at least one equivariant deterministic optimal policy $\pi^*(g h) = g\pi^*(h)$.
\end{theorem}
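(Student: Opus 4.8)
The plan is to reduce the partially observable problem to a fully observable one and then invoke the group-invariant MDP result of~\citet{wang2022so}. Concretely, I would lift $P_G$ to its \emph{history-MDP} $\tilde{M} = (\mathcal{H}, \mathcal{A}, \tilde{T}, \tilde{R})$, whose states are histories $h \in \mathcal{H}$, whose reward is the belief-averaged reward $\tilde{R}(h,a) = \sum_{s} b_h(s)\,R(s,a)$ with $b_h(s) = \Pr(s \mid h)$, and whose transition $\tilde{T}(h,a,h')$ vanishes unless $h' = (h,a,o)$ for some $o \in \Omega$, in which case it equals $\Pr(o \mid h,a) = \sum_{s,s'} b_h(s)\,T(s,a,s')\,O(a,s',o)$. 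Since the history is a sufficient statistic, $Q^*$, $V^*$, and $\pi^*$ of $P_G$ coincide with those of $\tilde{M}$; and by \cref{def:history-action} the $G$-action on $\mathcal{H}$ carries the one-step extensions of $h$ bijectively onto the one-step extensions of $gh$, so it is compatible with the dynamical structure of $\tilde{M}$. It therefore suffices to prove that $\tilde{M}$ is a group-invariant MDP, i.e.\ that $\tilde{T}(gh,ga,gh') = \tilde{T}(h,a,h')$ and $\tilde{R}(gh,ga) = \tilde{R}(h,a)$; the theorem then follows by applying the MDP result to $\tilde{M}$, using also $V^*(h) = \max_a Q^*(h,a)$ and the fact that $a \mapsto ga$ permutes $\mathcal{A}$.

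The first and central step I would establish is the \emph{equivariance of the belief}, $b_{gh}(gs) = b_h(s)$ for all $g \in G$, proved by induction on the length of $h$. The base case reduces, via Bayes' rule, to the invariance of $b_0$ and of $O$ in \cref{eq:requirements}. For the inductive step I would write the Bayesian belief update for $h' = (h,a,o)$ in terms of $b_h$, $T$, and $O$, substitute $h \mapsto gh$, $a \mapsto ga$, $o \mapsto go$, $s' \mapsto gs'$, and reindex the internal summation over the previous state by $s \mapsto gs$; the invariance of $T$ and $O$ together with the inductive hypothesis then collapse both the numerator and the normalizing constant back to those defining $b_{h'}(s')$.

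Given the belief lemma, the group-invariance of $\tilde{M}$ is a short computation in the same spirit: $\tilde{R}(gh,ga) = \sum_s b_{gh}(s)R(s,ga) = \sum_s b_{gh}(gs)R(gs,ga) = \sum_s b_h(s)R(s,a) = \tilde{R}(h,a)$ using $R(gs,ga)=R(s,a)$, and likewise $\tilde{T}(gh,ga,gh') = \tilde{T}(h,a,h')$ for $h' = (h,a,o)$ and $gh' = (gh,ga,go)$, using $T(gs,ga,gs')=T(s,a,s')$, $O(ga,gs',go)=O(a,s',o)$, the belief lemma, and the reindexing $s,s' \mapsto gs,gs'$. Applying the group-invariant MDP theorem of~\citet{wang2022so} to $\tilde{M}$ then yields $Q^*(gh,ga) = Q^*(h,a)$ and an equivariant deterministic optimal policy $\pi^*(gh) = g\pi^*(h)$, and finally $V^*(gh) = \max_a Q^*(gh,a) = \max_{a'} Q^*(gh,ga') = \max_{a'} Q^*(h,a') = V^*(h)$.

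I expect essentially all the difficulty to sit in the belief lemma, and more precisely in the change-of-variables step $s \mapsto gs$ inside Bayes' rule. For a finite group acting on a discrete $\mathcal{S}$ this is a harmless relabeling of the summation index; in the continuous state-space case it additionally requires the $G$-action on $\mathcal{S}$ to preserve the reference measure (true for planar rotations under Lebesgue measure), so that the normalizing constants transform consistently --- I would either restrict to this case or adopt measure-preservation as a standing assumption. A secondary, purely bookkeeping concern is whether the equivariant optimal policy can be selected consistently across the $G$-orbits of $\mathcal{H}$, but this is precisely what the cited MDP result already guarantees once $\tilde{M}$ is known to be group-invariant, so it introduces no new work.
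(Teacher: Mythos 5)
Your proposal is correct and follows essentially the same route as the paper's own proof: convert the POMDP into a History-MDP, prove belief invariance $\Pr(gs\mid gh)=\Pr(s\mid h)$ by induction using the invariance of $b_0$, $O$, and $T$, conclude that the History-MDP is a group-invariant MDP via reindexing $s\mapsto gs$, and then invoke the result of~\citet{wang2022so}. The only cosmetic difference is that you derive $V^*(gh)=V^*(h)$ via $\max_a$ over the permuted action set, whereas the paper evaluates $Q^*$ at the equivariant optimal policy; both are immediate given the earlier steps.
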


\begin{proof}
See~\cref{app:proof}.
\end{proof}

The above analysis allows us to constrain the value function and policy for a $G$-invariant POMDP to be invariant and equivariant, respectively, without eliminating optimal solutions.

\section{Equivariant Actor-Critic RL for POMDPs}

In this section, we introduce an equivariant agent that directly exhibits the desired properties of the optimal value function and policy, backed by the analysis in~\cref{thm:group-invariant-values-and-policies}. \cref{fig:agent} shows the agent, which takes a typical memory-based agent~\citep{ni2021recurrent, ha2018world, zintgraf2019varibad, hung2019optimizing} but has an equivariant actor and an invariant critic, each consisting of equivariant models. We later show that this very \edit{generic} agent, when embedded with the domain symmetry, can outperform significantly strong POMDP methods.

\subsection{Equivariant Modules}

We describe the details of the equivariant modules within our agent below, with the core components being \emph{equivariant CNNs}~\citep{e2cnn, weiler2019general}. For the implementation details, please see~\cref{app:implementation_details}.

\begin{figure}[htbp]
    \centering
    \includegraphics[width=0.9\linewidth]{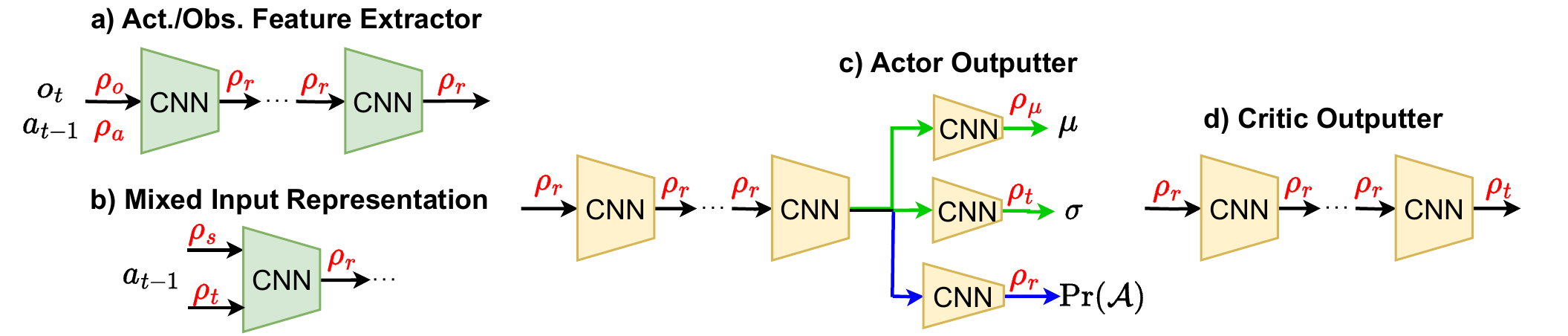}
    \caption{Equivariant Feature Extractor and Actor/Critic Outputter modules.}
    \label{fig:agent_details_2}
    \vspace{-10pt}
\end{figure}

\noindent \textbf{Equivariant Feature Extractor }This module takes observations or actions and outputs intermediate features for further processing. It comprises multiple equivariant CNN components chained sequentially as shown in~\cref{fig:agent_details_2}a. Its input representation is the observation representation $\rho_o$ or the action representation $\rho_a$. The intermediate and output representations are chosen to be the regular representation $\rho_r$, which empirically outperforms other representations~\citep{weiler2019general}. The input representation can be a single representation type or \emph{mixed}, i.e., a sum of different representations. A mixed representation is necessary when the input has different components that transform differently under a group transformation, e.g., one component rotates with the transform, and one component is unchanged. Such a case can be seen in~\cref{sect:explain_mixed} and is simplified in~\cref{fig:agent_details_2}b, where $\rho_a = \rho_s + \rho_t$.

\begin{wrapfigure}[10]{R}{0.3\linewidth}
  \centering
    \includegraphics[width=1.0\linewidth]{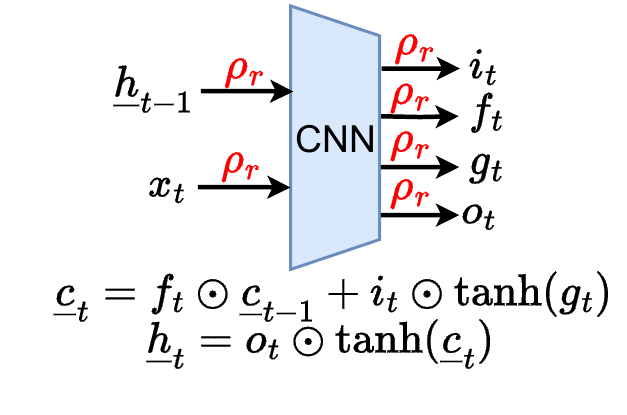}
    \vspace{-15pt}
    \caption{Equi. LSTM cell.}
    \label{fig:agent_details_1}
\end{wrapfigure}
\noindent \textbf{Equivariant Actor Outputter }\cref{fig:agent_details_2}c shows the input representation is regular because the signal coming in (the output of the RNN and the observation feature extractor modules) uses a regular representation. The output representation varies depending on the action type (discrete/continuous) and how a group transformation will affect an action. For \emph{discrete actions}, the module produces a categorical distribution over the action space. In this case, the output representation is the regular one $\rho_r$ as we want to change the (discrete) action when the history is transformed (see~\cref{app:simple_example} for an illustration). For \emph{continuous actions}, this module outputs the means with some representation $\rho_\mu$ and the standard deviations of actions with some representation $\rho_\sigma$ (as in A2C~\citep{mnih2016asynchronous}, PPO~\citep{schulman2017proximal}, or SAC~\citep{haarnoja2018soft}). The representations used for $\rho_\mu$ and $\rho_\sigma$ are mixed, as each action component might change differently under a group transformation (see~\cref{sect:explain_mixed}).

\noindent \textbf{Equivariant Critic Outputter }Because the optimal critic is invariant, this module (\cref{fig:agent_details_2}d) uses the trivial representation $\rho_t$ at the output to keep the output the same under a group transformation. Its input representation is regular, enforced by the output of the RNN and the action feature extractor.

\noindent \textbf{Equivariant Recurrent Neural Network }This is our contribution needed for constructing a POMDP equivariant agent (there is another similar component in~\citep{muglich2022equivariant}, but only model weights are released). We utilize an LSTM~\citep{hochreiter1997long} for this module, but the approach can also be modified for other types of RNNs. Specifically, given an input $x_t$ (e.g., the concatenated obs-action feature) and the previous hidden state $\underline{h}_{t-1}$, the input gate $i_t$, the forget gate $f_t$, the memory cell candidate $g_t$, and the output $o_t$ are computed as follows with $W$s and $b$s being learnable weights and biases:
\begin{equation}
\begin{aligned}
    i_t &= \text{sigmoid} (W_{xi}x_t + W_{hi}\underline{h}_{t-1} + b_i) \quad
    f_t = \text{sigmoid}(W_{xf}x_t + W_{hf}\underline{h}_{t-1} + b_f) \\ 
    o_t &= \text{sigmoid} (W_{xo}x_t + W_{ho}\underline{h}_{t-1} + b_o) \quad
    g_t = \text{tanh}(W_{xg}x_t + W_{hg}\underline{h}_{t-1} + b_g) \,.
\end{aligned}  
\end{equation}

\edit{The above equations do not make an equivariant RNN module. To enforce the equivariance, we compute all equations at once using an equivariant CNN module (\cref{fig:agent_details_1}), similar to the ConvLSTM network~\citep{shi2015convolutional}}. The input representation $\rho_r$ is determined by the output of the feature extractors ($x_t$) and the previous hidden state ($\underline{h}_{t-1}$), and the output representation is also regular. Next, we compute the next hidden state $\underline{h}_t$ and cell state $\underline{c}_t$ using the common LSTM equations with $\odot$ denoting the Hadamard product:
\begin{align}
    \underline{c}_t = f_t \odot \underline{c}_{t-1} + i_t \odot \text{tanh}(g_t) \quad 
    \underline{h}_t = o_t \odot \text{tanh}(\underline{c}_t) \,.
\end{align}
Finally, as the output of the RNN is an approximation of the belief state, to satisfy the condition of an invariant initial belief distribution, we set $\underline{c}_0$ and $\underline{h}_0$ with zero vectors.

\section{Experiments}
We compare the performance of learning agents on two grid-world domains (discrete actions and feature-based observations) and four robot domains (continuous actions and pixel observations).
\subsection{Domains}
We briefly describe our domains below. Please refer to~\cref{app:domain_details} for more specific details.

\begin{figure}[htbp]
  \centering
  \begin{subfigure}[t]{0.22\linewidth}
    \includegraphics[width=\linewidth]{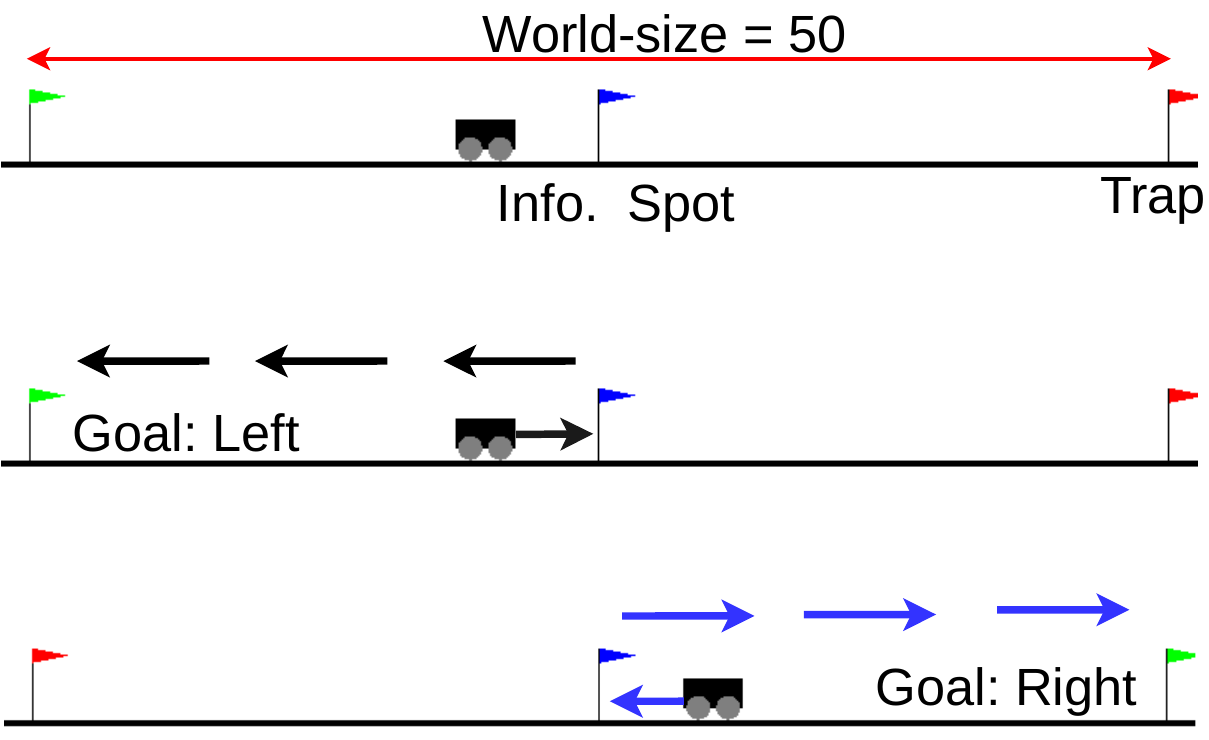}
    \caption*{\texttt{CarFlag-1D}} \label{fig:carflag-1d}
  \end{subfigure}
  \begin{subfigure}[t]{0.14\linewidth}
    \includegraphics[width=\linewidth]{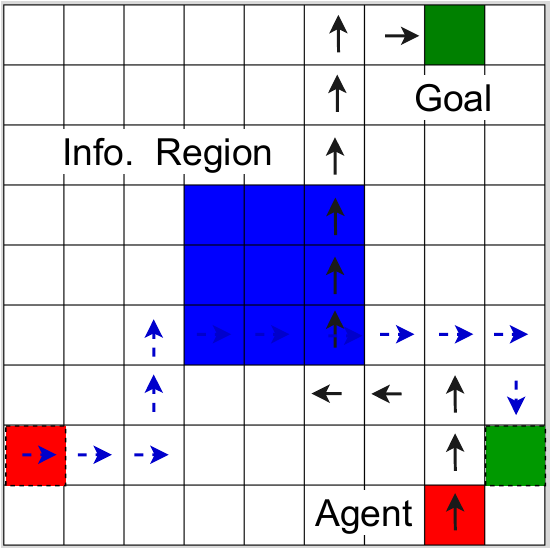}
    \caption*{\texttt{CarFlag-2D}} \label{fig:carflag-2d}
  \end{subfigure}
  \begin{subfigure}[t]{0.16\linewidth}
    \includegraphics[width=\linewidth]{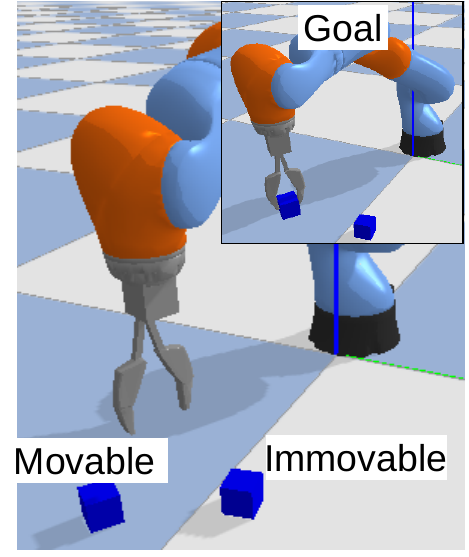}
    \caption*{\texttt{Block-Picking}} \label{fig:block-picking}
  \end{subfigure}
  \begin{subfigure}[t]{0.16\linewidth}
    \includegraphics[width=\linewidth]{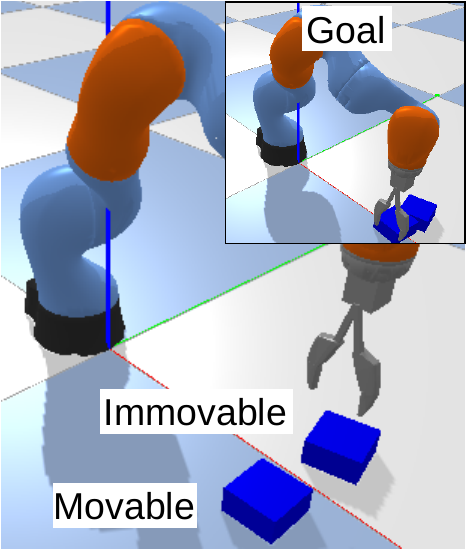}
    \caption*{\texttt{Block-Pulling}} \label{fig:block-pulling}
  \end{subfigure}
  \begin{subfigure}[t]{0.15\linewidth}
    \includegraphics[width=\linewidth]{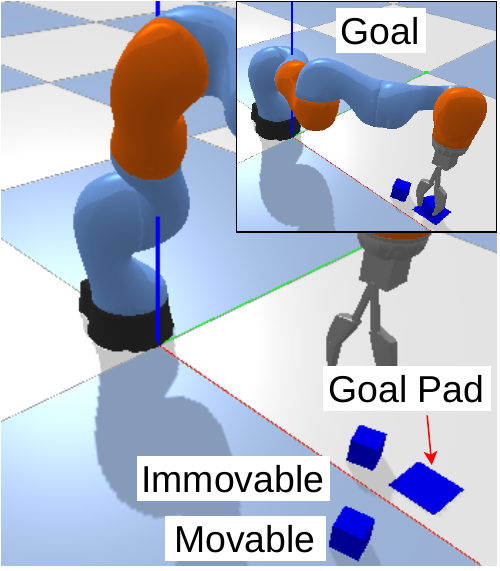}
    \caption*{\texttt{Block-Pushing}} \label{fig:block-pushing}
  \end{subfigure}
  \begin{subfigure}[t]{0.14\linewidth}
    \includegraphics[width=\linewidth]{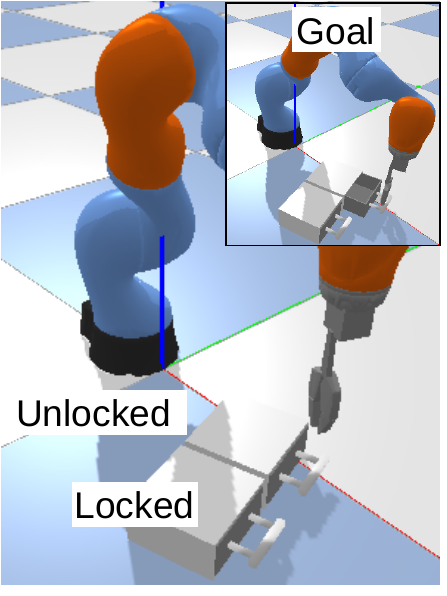}
    \caption*{\texttt{Drawer-Opening}}\label{fig:drawer-opening}
  \end{subfigure}
\caption{Our domains. The first two domains have feature-based observation and discrete action spaces. The last four domains have pixel-based observations and continuous action spaces.}
\label{fig:domains}
\vspace{-10pt}
\end{figure}

\subsubsection{Grid-World Domains}

There are two versions of \texttt{CarFlag}~\citep{nguyen2021penvs} in~\cref{fig:domains}, where an agent must reach a goal (green), whose position is visible \emph{only} when the agent visits an unknown information region. For instance, in \texttt{CarFlag-1D}, the agent must visit the central blue flag to get the side (left/right) of the goal; or in \texttt{CarFlag-2D}, the agent must visit the central blue region to see the coordinate of the goal cell. We also illustrate the domain symmetry in the figure: in these domains, when the starting position and the goal location are transformed (flipped in \texttt{CarFlag-1D} or rotated by $\pi/2$ radians clockwise in \texttt{CarFlag-2D}, the optimal trajectories will be transformed similarly, i.e., black $\rightarrow$ blue trajectories.

\subsubsection{Robot Manipulation Domains}
\cref{fig:domains} shows our robot manipulation domains (extended from the BulletArm suite~\citep{wang2022bulletarm}), where a robot arm must perform individual manipulation tasks (i.e., picking, pulling, pushing, and opening) using \emph{top-down} depth images to win a sparse reward. In these domains, only one object is manipulable, but both objects are the same if only relying on the current image. Therefore, the agent must actively check the objects' mobility and remember past interactions with the objects to determine the next action. Specifically, in \texttt{Block-Picking}, the agent needs to pick the movable block up. In \texttt{Block-Pulling}, the agent needs to pull the movable block to be in contact with the other block. In \texttt{Block-Pushing}, the goal is to push the movable block to a goal pad. In \texttt{Drawer-Opening}, the agent is tasked to open an unlocked drawer between a locked and an unlocked one.

In these domains, the transition function is invariant because the Newtonian physics applied to the interaction is invariant to the location of the reference frame. The reward function is invariant by definition. Using top-down depth images makes the observation function invariant. If the initial belief is assumed invariant, then according to~\cref{def:group-invariant-pomdp}, these domains are group-invariant POMDPs.
 
\subsection{Agents}
We compare our proposed agents (instances of the structure in~\cref{fig:agent} applied to A2C~\citep{mnih2016asynchronous} and SAC~\citep{haarnoja2018soft}) against a diverse set of baselines, including on-policy/off-policy, model-based/model-free, and \edit{generic}/specialized POMDP methods (see~\cref{app:implementation_details} and~\cref{app:baseline_details} for more details).
\subsubsection{Grid-world Domains}
\textbf{RA2C}~\citep{pytorchrl} is a recurrent version of A2C~\citep{mnih2016asynchronous}. \textbf{Equi-RA2C} is our proposed architecture applied to A2C. \textbf{DPFRL}~\citep{ma2020discriminative} is a state-of-the-art model-based POMDP baseline where an A2C agent is given features produced by a differentiable particle filter. \textbf{DreamerV2}~\citep{hafner2020mastering} and \textbf{DreamerV3}~\citep{hafner2023mastering} are strong model-based methods that learn a recurrent world model, thus, can work with POMDPs.

\noindent \textbf{No Data Augmentation for All }Since all methods are on-policy algorithms \edit{, augmented data using the domain symmetry only becomes on-policy \emph{only} for \textbf{Equi-RA2C} dues to its unique symmetry-awareness. Therefore, for a fair comparison, we do not perform any data augmentation.}

\subsubsection{Robot Manipulation Domains}
While on-policy RA2C or Dreamer-v2 can handle continuous action spaces in these domains, there is no clear way to leverage expert demonstrations necessary to efficiently solve the robot manipulation tasks with sparse rewards in~\cref{fig:domains}. Thus, we switch to SAC~\citep{haarnoja2018soft} as the base RL algorithm, where we can pre-populate its replay buffer with demonstration episodes. In our experiments, \textbf{RSAC}~\citep{ni2021recurrent} is a non-equivariant recurrent SAC agent. \textbf{Equi-RSAC} is our proposed method applied to SAC. We also compare with \emph{recurrent} versions of two strong data augmentation baselines: \textbf{RAD-Crop-RSAC}~\citep{laskin2020reinforcement} and \textbf{DrQ-Shift-RSAC}~\citep{kostrikov2020image}. These specific data augmentation techniques, i.e., random cropping and shifting (see~\cref{app:viz_data_aug} for visualizations), are chosen among others because they were reported to perform best~\citep{kostrikov2020image}. To train RAD-Crop-RSAC, for each training episode, an auxiliary episode is created by using the same random cropping for every depth image inside the original episode. DrQ-Shift-RSAC applies two random shifts to each depth image in a training episode to create two. The Q-target and the Q-values are then computed by averaging the values computed on the two episodes. Finally, \textbf{SLAC}~\citep{lee2020stochastic} learns a latent model from pixels and then uses SAC on the latent space by using the observation-action history (instead of the latent state) for the actor and the latent state samples for the critic. This enables SLAC to scale to more difficult tasks.

\noindent \textbf{Demonstrations + Rotational Data Augmentation for All} All replay buffers are pre-populated with 80 expert episodes to overcome the reward sparsity. Moreover, we augment the training data by applying the same random rotation for every action and observation inside a training episode (see~\cref{app:viz_data_aug} for visualizations). Note that these rotational data augmentations are applied in addition to the existing data augmentation techniques in RAD-Crop-RSAC and DrQ-Shift-RSAC.

\subsection{Results}

\begin{figure}
    \centering
    \includegraphics[width=\linewidth]{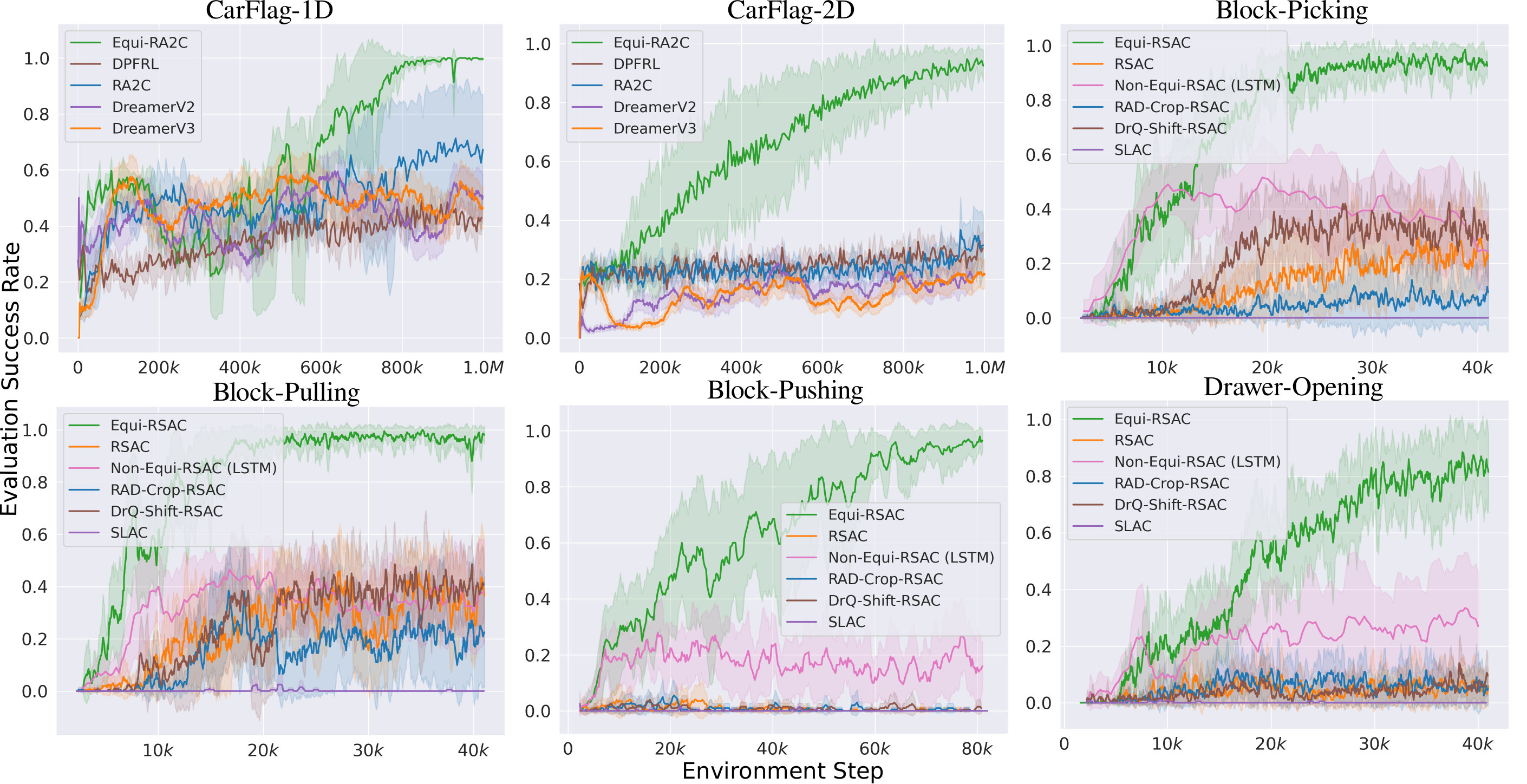}
    \caption{Evaluated success rates (four seeds, shaded areas denote one std.). No data augmentation is used in \texttt{CarFlag} domains. Rotational data augmentation is used for \emph{all} agents in robot domains.}
    \label{fig:all_res}
    \vspace{-15pt}
\end{figure}

\noindent \textbf{Grid-world Domains}
\cref{fig:all_res} shows that Equi-RA2C is significantly more sample efficient than the baselines. Moreover, the dominance of our method is also seen with variants of these domains with different sizes (see~\cref{app:extra_results}). DPFRL did not perform well potentially because of the reward sparsity, which was also previously reported in~\citep{nguyen2022hierarchical}. DreamerV2 and DreamerV3 also perform poorly, even with many more learnable parameters of the models, potentially indicating that learning a good model under partial observability and sparse rewards might be more challenging than in the domains originally tested. For instance, most Atari games and locomotion tasks in the DeepMind Control suite~\citep{tassa2018deepmind} have low levels of partial observability and provide dense rewards.

\begin{table}[htbp]
\centering
\begin{tabular}{rcc|rcc}
\textbf{$d_{1D}$} & \textbf{Equi-RA2C 1M ($\uparrow$)} & \textbf{RA2C 1M ($\uparrow$)} & \textbf{$d_{2D}$} & \textbf{Equi-RA2C 2M ($\uparrow$)} & \textbf{RA2C 2M ($\uparrow$)} \\ \midrule
         -10 & 0.51 $\pm$ 0.06 & \textbf{0.78} $\pm$ 0.08 & -2 & \textbf{0.38} $\pm$ 0.11 & \textbf{0.31} $\pm$ 0.04  \\
         10 & 0.44 $\pm$ 0.07 & \textbf{0.70} $\pm$ 0.22 & 2 & \textbf{0.41} $\pm$ 0.12 & 0.28 $\pm$ 0.02 \\
         -5 & \textbf{0.95} $\pm$ 0.03 & 0.72 $\pm$ 0.33 & -1 & \textbf{0.58} $\pm$ 0.24 & 0.38 $\pm$ 0.13 \\
         5 & \textbf{0.99} $\pm$ 0.03 & 0.76 $\pm$ 0.24 & 1 & \textbf{0.71} $\pm$ 0.14 & 0.30 $\pm$ 0.03 
\end{tabular}
    \caption{The convergent success rates (mean $\pm$ one standard deviation) of Equi-RA2C and RA2C agents in asymmetric variants of \texttt{CarFlag} (after 1M and 2M training steps). $d_{1D}$ and $d_{2D}$ refer to the distance from the information region to the world center (see~\cref{app:domain_details} for illustrations).}
    \label{tab:asym_res}
    \vspace{-20pt}
\end{table}

\noindent \textbf{Robot Manipulation Domains}
Clearly from \cref{fig:all_res}, Equi-RSAC strongly outperforms other baselines in all domains, with itself being the only agent that can reach a satisfactory performance. Across all domains, without the equivariant LSTM module (denoted as Non-Equi-RSAC (LSTM), which can be considered as a naive extension of~\citep{wang2022so}), the performance degrades significantly, even though it starts pretty well. SLAC, surprisingly, performs the worst. A possible reason is that SLAC was originally only tested on domains with dense rewards and low levels of partial observability (e.g., locomotion domains in DeepMind Control Suite~\citep{tassa2018deepmind} and OpenAI Gym~\citep{brockman2016openai}). Another potential reason is the usage of concatenated feature vectors across an episode for the actor, which can be very high-dimensional for a long episode. Moreover, we also found that the trained latent model failed to sufficiently reconstruct the observation in \texttt{Block-Pulling} (see~\cref{app:slac_viz} for more details).

\noindent \textbf{Additional Results} See \cref{app:ablation_studies} for the performance when using a different group symmetry ($C_8$ instead of $C_4$), utilizing symmetry partially (for either actor or critic only), and $\underline{c}_0$ and $\underline{h}_0$ being random instead of zero vectors. Other additional results are shown in~\cref{app:extra_results}. 

\subsection{Using Equivariant Models on Domains with Imperfect Symmetry}
\label{sect:asym}
We investigate the performance when the perfect symmetry does not hold in asymmetric variants of \texttt{CarFlag}, created by offsetting the information region a distance $d$ from the world center (see~\cref{app:domain_details}). From the final success rates shown in~\cref{tab:asym_res} (see~\cref{app:extra_asym_results} for learning curves), we can see that equivariant Equi-RA2C still outperforms non-equivariant RA2C when the domains are close to perfect symmetry, i.e., when $d = d_{1D}=5$ in \texttt{CarFlag-1D} or $d = d_{2D}=1, 2$ in \texttt{CarFlag-2D}. However, a bigger symmetry gap might lead to the sub-optimality of equivariant agents. As evidence, Equi-RA2C performs worse than RA2C when $d=10, -10$ in \texttt{CarFlag-1D}.

\subsection{Zero-shot Transfers to Real Hardware}

\begin{minipage}{\textwidth}
\begin{minipage}[b]{0.49\textwidth}
  \centering
    \includegraphics[width=0.9\linewidth]{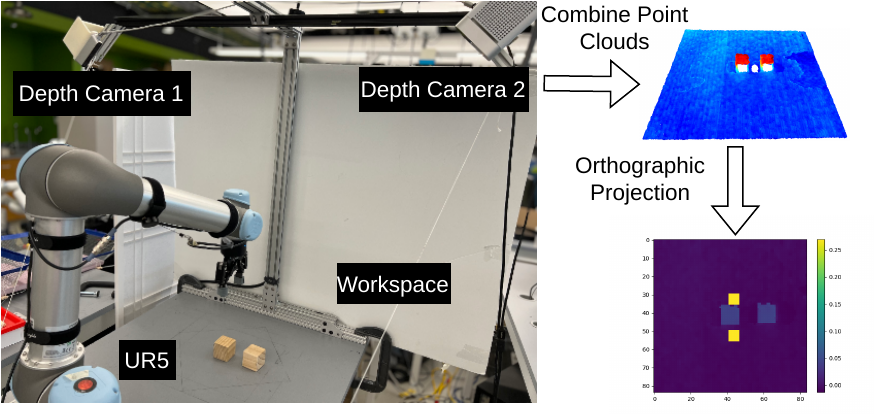}
    \captionof{figure}{Experimental robot setup.}
    \label{fig:workspace}
\end{minipage}
\hfill
\begin{minipage}[b]{0.49\textwidth}
\centering
\begin{tabular}{lc}
\textbf{Domain} & \textbf{Success Sim / Real} ($\uparrow$)\\ \midrule
\texttt{Block-Picking} & 1.00 / 0.90 \\
\texttt{Block-Pulling} & 1.00 / 0.88 \\
\texttt{Block-Pushing} & 0.96 / 0.92 \\
\texttt{Drawer-Opening} & 0.95 / 0.80 \\
\end{tabular}
\captionof{table}{Average success rates of sim2real transfers over 50 episodes.}
\label{tab:sim2real}
\end{minipage}
\end{minipage}

Because only our agents can perform well in simulation, we transfer their best policies in simulation to a UR5 robot (see~\cref{fig:workspace}). We combine the point clouds from two side-view cameras to create a top-down depth image using a projection at the gripper's position. We roll out 50 episodes, divided equally into test cases when the agents first manipulate the immovable or movable objects. \cref{tab:sim2real} shows that the learned policies can be zero-shot transferred well in the real world regardless of small performance drops in all domains (see our supplementary video for policy visualizations). The biggest performance drop is in \texttt{Draw-Opening}, in which the transferred policies sometimes clumsily move one drawer far away from the other, creating a novel scene never seen in simulation.

\section{Conclusion and Limitations}

\textbf{Conclusion }In this work, we introduced group-invariant POMDPs and proposed equivariant actor-critic RL agents as an effective solution method. Through extensive experiments, our proposed equivariant agents can tackle realistic and challenging robotic manipulation domains much better than non-equivariant approaches with learned policies zero-shot transferable to a real robot.

\textbf{Limitations }A limitation of most equivariant approaches, including ours, is the requirement of imperfect symmetry, which might be present when images are affected by non-symmetric factors, e.g., side view instead of top-down view or asymmetric noises. Fortunately, under full observability, recent empirical~\citep{wang2022surprising, yang2023neural} and theory work~\citep{wang2023general} show that an equivariant model can still outperform non-equivariant approaches in many such cases. Together with the results in~\cref{sect:asym}, our approach might still perform better than unstructured agents even under imperfect symmetry.



\clearpage

\acknowledgments{We are grateful to Elise van der Pol for her early contributions to this project, including her suggestions on incorporating symmetries in the history space and her ideas for implementing an equivariant LSTM. This material is supported by the Army Research Office under award number W911NF20-1-0265; the U.S. Office of Naval Research under award number N00014-19-1-2131; NSF grants 1816382, 1830425, 1724257, and 1724191.
}


\bibliography{refs}  

\begin{thebibliography}{67}
\providecommand{\natexlab}[1]{#1}
\providecommand{\url}[1]{\texttt{#1}}
\expandafter\ifx\csname urlstyle\endcsname\relax
  \providecommand{\doi}[1]{doi: #1}\else
  \providecommand{\doi}{doi: \begingroup \urlstyle{rm}\Url}\fi

\bibitem[Wang et~al.(2022{\natexlab{a}})Wang, Walters, and Platt]{wang2022so}
D.~Wang, R.~Walters, and R.~Platt.
\newblock {SO}(2) equivariant reinforcement learning.
\newblock In \emph{International Conference on Learning Representations
  (ICLR)}, 2022{\natexlab{a}}.

\bibitem[Wang et~al.(2022{\natexlab{b}})Wang, Walters, Zhu, and
  Platt]{wang2022equivariant}
D.~Wang, R.~Walters, X.~Zhu, and R.~Platt.
\newblock Equivariant {Q} learning in spatial action spaces.
\newblock In \emph{Conference on Robot Learning}, pages 1713--1723. PMLR,
  2022{\natexlab{b}}.

\bibitem[Wang et~al.(2022{\natexlab{c}})Wang, Jia, Zhu, Walters, and
  Platt]{wang2022robot}
D.~Wang, M.~Jia, X.~Zhu, R.~Walters, and R.~Platt.
\newblock On-robot learning with equivariant models.
\newblock In \emph{Conference On Robot Learning}, 2022{\natexlab{c}}.

\bibitem[Huang et~al.(2022)Huang, Wang, Walters, and
  Platt]{huang2022equivariant}
H.~Huang, D.~Wang, R.~Walters, and R.~Platt.
\newblock {Equivariant Transporter Network}.
\newblock In \emph{Proceedings of Robotics: Science and Systems}, 2022.

\bibitem[Zhu et~al.(2022)Zhu, Wang, Biza, Su, Walters, and
  Platt]{zhu2022sample}
X.~Zhu, D.~Wang, O.~Biza, G.~Su, R.~Walters, and R.~Platt.
\newblock Sample efficient grasp learning using equivariant models.
\newblock In \emph{Proceedings of Robotics: Science and Systems}, 2022.

\bibitem[Astrom(1965)]{astrom1965optimal}
K.~J. Astrom.
\newblock Optimal control of markov decision processes with incomplete state
  estimation.
\newblock \emph{J. Math. Anal. Applic.}, 10:\penalty0 174--205, 1965.

\bibitem[Mnih et~al.(2016)Mnih, Badia, Mirza, Graves, Lillicrap, Harley,
  Silver, and Kavukcuoglu]{mnih2016asynchronous}
V.~Mnih, A.~P. Badia, M.~Mirza, A.~Graves, T.~Lillicrap, T.~Harley, D.~Silver,
  and K.~Kavukcuoglu.
\newblock Asynchronous methods for deep reinforcement learning.
\newblock In \emph{International Conference on Machine Learning}, pages
  1928--1937. PMLR, 2016.

\bibitem[Haarnoja et~al.(2018)Haarnoja, Zhou, Abbeel, and
  Levine]{haarnoja2018soft}
T.~Haarnoja, A.~Zhou, P.~Abbeel, and S.~Levine.
\newblock Soft actor-critic: Off-policy maximum entropy deep reinforcement
  learning with a stochastic actor.
\newblock In \emph{International Conference on Machine Learning}, pages
  1861--1870. PMLR, 2018.

\bibitem[Kurniawati et~al.(2008)Kurniawati, Hsu, and Lee]{kurniawati2008sarsop}
H.~Kurniawati, D.~Hsu, and W.~S. Lee.
\newblock Sarsop: Efficient point-based {POMDP} planning by approximating
  optimally reachable belief spaces.
\newblock In \emph{Robotics: Science and systems}, 2008.

\bibitem[Somani et~al.(2013)Somani, Ye, Hsu, and Lee]{somani2013despot}
A.~Somani, N.~Ye, D.~Hsu, and W.~S. Lee.
\newblock Despot: Online {POMDP} planning with regularization.
\newblock \emph{Advances in neural information processing systems}, 26, 2013.

\bibitem[Pineau et~al.(2003)Pineau, Gordon, Thrun, et~al.]{pineau2003point}
J.~Pineau, G.~Gordon, S.~Thrun, et~al.
\newblock Point-based value iteration: An anytime algorithm for {POMDPs}.
\newblock In \emph{Ijcai}, volume~3, pages 1025--1032, 2003.

\bibitem[Hausknecht and Stone(2015)]{hausknecht2015deep}
M.~Hausknecht and P.~Stone.
\newblock Deep recurrent {Q}-learning for partially observable {MDPs}.
\newblock In \emph{2015 aaai fall symposium series}, 2015.

\bibitem[Ni et~al.(2022)Ni, Eysenbach, and Salakhutdinov]{ni2021recurrent}
T.~Ni, B.~Eysenbach, and R.~Salakhutdinov.
\newblock Recurrent model-free {RL} can be a strong baseline for many {POMDPs}.
\newblock In \emph{International Conference on Machine Learning}, pages
  16691--16723. PMLR, 2022.

\bibitem[Ma et~al.(2020)Ma, Karkus, Hsu, Lee, and Ye]{ma2020discriminative}
X.~Ma, P.~Karkus, D.~Hsu, W.~S. Lee, and N.~Ye.
\newblock Discriminative particle filter reinforcement learning for complex
  partial observations.
\newblock In \emph{International Conference on Learning Representations
  (ICLR)}, 2020.

\bibitem[Han et~al.(2020)Han, Doya, and Tani]{han2019variational}
D.~Han, K.~Doya, and J.~Tani.
\newblock Variational recurrent models for solving partially observable control
  tasks.
\newblock In \emph{International Conference on Learning Representations
  (ICLR)}, 2020.

\bibitem[Yang and Nguyen(2021)]{yang2021recurrent}
Z.~Yang and H.~Nguyen.
\newblock Recurrent off-policy baselines for memory-based continuous control.
\newblock In \emph{Deep RL Workshop NeurIPS 2021}, 2021.

\bibitem[Heess et~al.(2015)Heess, Hunt, Lillicrap, and Silver]{heess2015memory}
N.~Heess, J.~J. Hunt, T.~P. Lillicrap, and D.~Silver.
\newblock Memory-based control with recurrent neural networks.
\newblock \emph{arXiv preprint arXiv:1512.04455}, 2015.

\bibitem[Igl et~al.(2018)Igl, Zintgraf, Le, Wood, and Whiteson]{igl2018deep}
M.~Igl, L.~Zintgraf, T.~A. Le, F.~Wood, and S.~Whiteson.
\newblock Deep variational reinforcement learning for {POMDPs}.
\newblock In \emph{International Conference on Machine Learning}, pages
  2117--2126. PMLR, 2018.

\bibitem[Meng et~al.(2021)Meng, Gorbet, and Kuli{\'c}]{meng2021memory}
L.~Meng, R.~Gorbet, and D.~Kuli{\'c}.
\newblock Memory-based deep reinforcement learning for {POMDPs}.
\newblock In \emph{2021 IEEE/RSJ International Conference on Intelligent Robots
  and Systems (IROS)}, pages 5619--5626. IEEE, 2021.

\bibitem[Nguyen et~al.(2021)Nguyen, Daley, Song, Amato, and
  Platt]{nguyen2020belief}
H.~Nguyen, B.~Daley, X.~Song, C.~Amato, and R.~Platt.
\newblock Belief-grounded networks for accelerated robot learning under partial
  observability.
\newblock In \emph{Proceedings of the 2020 Conference on Robot Learning},
  volume 155, pages 1640--1653. PMLR, 2021.

\bibitem[Nguyen et~al.(2022)Nguyen, Baisero, Wang, Amato, and
  Platt]{nguyen2022leveraging}
H.~Nguyen, A.~Baisero, D.~Wang, C.~Amato, and R.~Platt.
\newblock Leveraging fully observable policies for learning under partial
  observability.
\newblock In \emph{6th Annual Conference on Robot Learning}, 2022.

\bibitem[Baisero et~al.(2022)Baisero, Daley, and Amato]{baisero2022asymmetric}
A.~Baisero, B.~Daley, and C.~Amato.
\newblock Asymmetric dqn for partially observable reinforcement learning.
\newblock In \emph{The 38th Conference on Uncertainty in Artificial
  Intelligence}, 2022.

\bibitem[Baisero and Amato(2022)]{baisero2022unbiased}
A.~Baisero and C.~Amato.
\newblock Unbiased asymmetric reinforcement learning under partial
  observability.
\newblock In \emph{Proceedings of the 21st International Conference on
  Autonomous Agents and Multiagent Systems}, pages 44--52, 2022.

\bibitem[Kang and Kim(2012)]{kang2012exploiting}
B.~K. Kang and K.-E. Kim.
\newblock Exploiting symmetries for single-and multi-agent partially observable
  stochastic domains.
\newblock \emph{Artificial Intelligence}, 182:\penalty0 32--57, 2012.

\bibitem[Doshi and Roy(2008)]{doshi2008permutable}
F.~Doshi and N.~Roy.
\newblock The permutable {POMDP}: fast solutions to {POMDPs} for preference
  elicitation.
\newblock In \emph{Proceedings of the 7th international joint conference on
  Autonomous agents and multiagent systems-Volume 1}, pages 493--500, 2008.

\bibitem[Muglich et~al.(2022)Muglich, Schroeder~de Witt, van~der Pol, Whiteson,
  and Foerster]{muglich2022equivariant}
D.~Muglich, C.~Schroeder~de Witt, E.~van~der Pol, S.~Whiteson, and J.~Foerster.
\newblock Equivariant networks for zero-shot coordination.
\newblock \emph{Advances in Neural Information Processing Systems},
  35:\penalty0 6410--6423, 2022.

\bibitem[Thomas et~al.(2018)Thomas, Smidt, Kearnes, Yang, Li, Kohlhoff, and
  Riley]{thomas2018tensor}
N.~Thomas, T.~Smidt, S.~Kearnes, L.~Yang, L.~Li, K.~Kohlhoff, and P.~Riley.
\newblock Tensor field networks: Rotation-and translation-equivariant neural
  networks for 3d point clouds.
\newblock \emph{arXiv preprint arXiv:1802.08219}, 2018.

\bibitem[Satorras et~al.(2021)Satorras, Hoogeboom, and Welling]{satorras2021n}
V.~G. Satorras, E.~Hoogeboom, and M.~Welling.
\newblock E (n) equivariant graph neural networks.
\newblock In \emph{International Conference on Machine Learning}, pages
  9323--9332. PMLR, 2021.

\bibitem[Batzner et~al.(2022)Batzner, Musaelian, Sun, Geiger, Mailoa,
  Kornbluth, Molinari, Smidt, and Kozinsky]{batzner20223}
S.~Batzner, A.~Musaelian, L.~Sun, M.~Geiger, J.~P. Mailoa, M.~Kornbluth,
  N.~Molinari, T.~E. Smidt, and B.~Kozinsky.
\newblock E (3)-equivariant graph neural networks for data-efficient and
  accurate interatomic potentials.
\newblock \emph{Nature communications}, 13\penalty0 (1):\penalty0 1--11, 2022.

\bibitem[Cohen and Welling(2016)]{cohen2016group}
T.~Cohen and M.~Welling.
\newblock Group equivariant convolutional networks.
\newblock In \emph{International Conference on Machine Learning}, pages
  2990--2999. PMLR, 2016.

\bibitem[LeCun et~al.(1995)LeCun, Bengio, et~al.]{lecun1995convolutional}
Y.~LeCun, Y.~Bengio, et~al.
\newblock Convolutional networks for images, speech, and time series.
\newblock \emph{The handbook of brain theory and neural networks},
  3361\penalty0 (10):\penalty0 1995, 1995.

\bibitem[Weiler and Cesa(2019)]{weiler2019general}
M.~Weiler and G.~Cesa.
\newblock General e (2)-equivariant steerable cnns.
\newblock \emph{Advances in Neural Information Processing Systems}, 32, 2019.

\bibitem[Chen et~al.(2021)Chen, Liu, Chen, Li, and Hill]{chen2021equivariant}
H.~Chen, S.~Liu, W.~Chen, H.~Li, and R.~Hill.
\newblock Equivariant point network for 3d point cloud analysis.
\newblock In \emph{Proceedings of the IEEE/CVF Conference on Computer Vision
  and Pattern Recognition}, pages 14514--14523, 2021.

\bibitem[Cohen et~al.(2018)Cohen, Geiger, K{\"o}hler, and
  Welling]{cohen2018spherical}
T.~S. Cohen, M.~Geiger, J.~K{\"o}hler, and M.~Welling.
\newblock Spherical cnns.
\newblock \emph{arXiv preprint arXiv:1801.10130}, 2018.

\bibitem[Deng et~al.(2021)Deng, Litany, Duan, Poulenard, Tagliasacchi, and
  Guibas]{deng2021vector}
C.~Deng, O.~Litany, Y.~Duan, A.~Poulenard, A.~Tagliasacchi, and L.~J. Guibas.
\newblock Vector neurons: A general framework for so (3)-equivariant networks.
\newblock In \emph{Proceedings of the IEEE/CVF International Conference on
  Computer Vision}, pages 12200--12209, 2021.

\bibitem[Mondal et~al.(2020)Mondal, Nair, and Siddiqi]{mondal2020group}
A.~K. Mondal, P.~Nair, and K.~Siddiqi.
\newblock Group equivariant deep reinforcement learning.
\newblock \emph{arXiv preprint arXiv:2007.03437}, 2020.

\bibitem[van~der Pol et~al.(2020)van~der Pol, Worrall, van Hoof, Oliehoek, and
  Welling]{van2020mdp}
E.~van~der Pol, D.~Worrall, H.~van Hoof, F.~Oliehoek, and M.~Welling.
\newblock {MDP} homomorphic networks: Group symmetries in reinforcement
  learning.
\newblock \emph{Advances in Neural Information Processing Systems},
  33:\penalty0 4199--4210, 2020.

\bibitem[Laskin et~al.(2020)Laskin, Srinivas, and Abbeel]{laskin2020curl}
M.~Laskin, A.~Srinivas, and P.~Abbeel.
\newblock Curl: Contrastive unsupervised representations for reinforcement
  learning.
\newblock In \emph{International Conference on Machine Learning}, pages
  5639--5650. PMLR, 2020.

\bibitem[Zhan et~al.(2021)Zhan, Zhao, Pinto, Abbeel, and
  Laskin]{zhan2020framework}
A.~Zhan, R.~Zhao, L.~Pinto, P.~Abbeel, and M.~Laskin.
\newblock A framework for efficient robotic manipulation.
\newblock In \emph{Deep RL Workshop NeurIPS 2021}, 2021.

\bibitem[Wang et~al.(2021)Wang, Walters, and Yu]{wang2020incorporating}
R.~Wang, R.~Walters, and R.~Yu.
\newblock Incorporating symmetry into deep dynamics models for improved
  generalization.
\newblock In \emph{International Conference on Learning Representations
  (ICLR)}, 2021.

\bibitem[Singh et~al.(1994)Singh, Jaakkola, and Jordan]{singh1994learning}
S.~P. Singh, T.~Jaakkola, and M.~I. Jordan.
\newblock Learning without state-estimation in partially observable markovian
  decision processes.
\newblock In \emph{Machine Learning Proceedings 1994}, pages 284--292.
  Elsevier, 1994.

\bibitem[Ha and Schmidhuber(2018)]{ha2018world}
D.~Ha and J.~Schmidhuber.
\newblock World models.
\newblock \emph{arXiv preprint arXiv:1803.10122}, 2018.

\bibitem[Zintgraf et~al.(2020)Zintgraf, Shiarlis, Igl, Schulze, Gal, Hofmann,
  and Whiteson]{zintgraf2019varibad}
L.~Zintgraf, K.~Shiarlis, M.~Igl, S.~Schulze, Y.~Gal, K.~Hofmann, and
  S.~Whiteson.
\newblock Varibad: a very good method for bayes-adaptive deep rl via
  meta-learning.
\newblock In \emph{International Conference on Learning Representations
  (ICLR)}, 2020.

\bibitem[Hung et~al.(2019)Hung, Lillicrap, Abramson, Wu, Mirza, Carnevale,
  Ahuja, and Wayne]{hung2019optimizing}
C.-C. Hung, T.~Lillicrap, J.~Abramson, Y.~Wu, M.~Mirza, F.~Carnevale, A.~Ahuja,
  and G.~Wayne.
\newblock Optimizing agent behavior over long time scales by transporting
  value.
\newblock \emph{Nature communications}, 10\penalty0 (1):\penalty0 1--12, 2019.

\bibitem[Weiler and Cesa(2019)]{e2cnn}
M.~Weiler and G.~Cesa.
\newblock {General E(2)-Equivariant Steerable CNNs}.
\newblock In \emph{Conference on Neural Information Processing Systems
  (NeurIPS)}, 2019.

\bibitem[Schulman et~al.(2017)Schulman, Wolski, Dhariwal, Radford, and
  Klimov]{schulman2017proximal}
J.~Schulman, F.~Wolski, P.~Dhariwal, A.~Radford, and O.~Klimov.
\newblock Proximal policy optimization algorithms.
\newblock \emph{arXiv preprint arXiv:1707.06347}, 2017.

\bibitem[Hochreiter and Schmidhuber(1997)]{hochreiter1997long}
S.~Hochreiter and J.~Schmidhuber.
\newblock Long short-term memory.
\newblock \emph{Neural computation}, 9\penalty0 (8):\penalty0 1735--1780, 1997.

\bibitem[Shi et~al.(2015)Shi, Chen, Wang, Yeung, Wong, and
  Woo]{shi2015convolutional}
X.~Shi, Z.~Chen, H.~Wang, D.-Y. Yeung, W.-K. Wong, and W.-c. Woo.
\newblock Convolutional lstm network: A machine learning approach for
  precipitation nowcasting.
\newblock \emph{Advances in neural information processing systems}, 28, 2015.

\bibitem[Nguyen(2021)]{nguyen2021penvs}
H.~Nguyen.
\newblock {POMDP} robot domains.
\newblock \url{https://github.com/hai-h-nguyen/pomdp-domains}, 2021.

\bibitem[Wang et~al.(2022)Wang, Kohler, Zhu, Jia, and Platt]{wang2022bulletarm}
D.~Wang, C.~Kohler, X.~Zhu, M.~Jia, and R.~Platt.
\newblock Bulletarm: An open-source robotic manipulation benchmark and learning
  framework.
\newblock In \emph{The International Symposium of Robotics Research}, 2022.

\bibitem[Kostrikov(2018)]{pytorchrl}
I.~Kostrikov.
\newblock Pytorch implementations of reinforcement learning algorithms.
\newblock \url{https://github.com/ikostrikov/pytorch-a2c-ppo-acktr-gail}, 2018.

\bibitem[Hafner et~al.(2021)Hafner, Lillicrap, Norouzi, and
  Ba]{hafner2020mastering}
D.~Hafner, T.~Lillicrap, M.~Norouzi, and J.~Ba.
\newblock Mastering atari with discrete world models.
\newblock In \emph{International Conference on Learning Representations
  (ICLR)}, 2021.

\bibitem[Hafner et~al.(2023)Hafner, Pasukonis, Ba, and
  Lillicrap]{hafner2023mastering}
D.~Hafner, J.~Pasukonis, J.~Ba, and T.~Lillicrap.
\newblock Mastering diverse domains through world models.
\newblock \emph{arXiv preprint arXiv:2301.04104}, 2023.

\bibitem[Laskin et~al.(2020)Laskin, Lee, Stooke, Pinto, Abbeel, and
  Srinivas]{laskin2020reinforcement}
M.~Laskin, K.~Lee, A.~Stooke, L.~Pinto, P.~Abbeel, and A.~Srinivas.
\newblock Reinforcement learning with augmented data.
\newblock \emph{Advances in neural information processing systems},
  33:\penalty0 19884--19895, 2020.

\bibitem[Kostrikov et~al.(2021)Kostrikov, Yarats, and
  Fergus]{kostrikov2020image}
I.~Kostrikov, D.~Yarats, and R.~Fergus.
\newblock Image augmentation is all you need: Regularizing deep reinforcement
  learning from pixels.
\newblock In \emph{International Conference on Learning Representations
  (ICLR)}, 2021.

\bibitem[Lee et~al.(2020)Lee, Nagabandi, Abbeel, and Levine]{lee2020stochastic}
A.~X. Lee, A.~Nagabandi, P.~Abbeel, and S.~Levine.
\newblock Stochastic latent actor-critic: Deep reinforcement learning with a
  latent variable model.
\newblock \emph{Advances in Neural Information Processing Systems},
  33:\penalty0 741--752, 2020.

\bibitem[Nguyen et~al.(2022)Nguyen, Yang, Baisero, Ma, Platt, and
  Amato]{nguyen2022hierarchical}
H.~Nguyen, Z.~Yang, A.~Baisero, X.~Ma, R.~Platt, and C.~Amato.
\newblock Hierarchical reinforcement learning under mixed observability.
\newblock In \emph{Algorithmic Foundations of Robotics XV: Proceedings of the
  Fifteenth Workshop on the Algorithmic Foundations of Robotics}, pages
  188--204. Springer, 2022.

\bibitem[Tassa et~al.(2018)Tassa, Doron, Muldal, Erez, Li, Casas, Budden,
  Abdolmaleki, Merel, Lefrancq, et~al.]{tassa2018deepmind}
Y.~Tassa, Y.~Doron, A.~Muldal, T.~Erez, Y.~Li, D.~d.~L. Casas, D.~Budden,
  A.~Abdolmaleki, J.~Merel, A.~Lefrancq, et~al.
\newblock Deepmind control suite.
\newblock \emph{arXiv preprint arXiv:1801.00690}, 2018.

\bibitem[Brockman et~al.(2016)Brockman, Cheung, Pettersson, Schneider,
  Schulman, Tang, and Zaremba]{brockman2016openai}
G.~Brockman, V.~Cheung, L.~Pettersson, J.~Schneider, J.~Schulman, J.~Tang, and
  W.~Zaremba.
\newblock Openai gym.
\newblock \emph{arXiv preprint arXiv:1606.01540}, 2016.

\bibitem[Wang et~al.(2022)Wang, Park, Sortur, Wong, Walters, and
  Platt]{wang2022surprising}
D.~Wang, J.~Y. Park, N.~Sortur, L.~L. Wong, R.~Walters, and R.~Platt.
\newblock The surprising effectiveness of equivariant models in domains with
  latent symmetry.
\newblock \emph{arXiv preprint arXiv:2211.09231}, 2022.

\bibitem[Yang et~al.(2023)Yang, Yang, and Wang]{yang2023neural}
R.~Yang, G.~Yang, and X.~Wang.
\newblock Neural volumetric memory for visual locomotion control.
\newblock \emph{arXiv preprint arXiv:2304.01201}, 2023.

\bibitem[Wang et~al.(2023)Wang, Zhu, Park, Platt, and Walters]{wang2023general}
D.~Wang, X.~Zhu, J.~Y. Park, R.~Platt, and R.~Walters.
\newblock A general theory of correct, incorrect, and extrinsic equivariance.
\newblock \emph{arXiv preprint arXiv:2303.04745}, 2023.

\bibitem[Nguyen(2021)]{nguyen2021converting}
K.~X. Nguyen.
\newblock Converting {POMDPs} into {MDPs} using history representation.
\newblock 2021.

\bibitem[Kaelbling et~al.(1998)Kaelbling, Littman, and
  Cassandra]{kaelbling1998planning}
L.~P. Kaelbling, M.~L. Littman, and A.~R. Cassandra.
\newblock Planning and acting in partially observable stochastic domains.
\newblock \emph{Artificial intelligence}, 101\penalty0 (1-2):\penalty0 99--134,
  1998.

\bibitem[Cesa et~al.(2022)Cesa, Lang, and Weiler]{cesa2022a}
G.~Cesa, L.~Lang, and M.~Weiler.
\newblock A program to build {E(N)}-equivariant steerable {CNN}s.
\newblock In \emph{International Conference on Learning Representations
  (ICLR)}, 2022.

\bibitem[Paszke et~al.(2017)Paszke, Gross, Chintala, Chanan, Yang, DeVito, Lin,
  Desmaison, Antiga, and Lerer]{paszke2017automatic}
A.~Paszke, S.~Gross, S.~Chintala, G.~Chanan, E.~Yang, Z.~DeVito, Z.~Lin,
  A.~Desmaison, L.~Antiga, and A.~Lerer.
\newblock Automatic differentiation in pytorch.
\newblock 2017.

\bibitem[Kingma and Ba(2014)]{kingma2014adam}
D.~P. Kingma and J.~Ba.
\newblock Adam: A method for stochastic optimization.
\newblock \emph{arXiv preprint arXiv:1412.6980}, 2014.

\end{thebibliography}

\clearpage
\appendix

\section{Illustration of Group and Group Representation in CarFlag-2D}
\label{app:simple_example}
\paragraph{Domain} We consider a small version of \texttt{CarFlag-2D} (see~\cref{fig:domain_illustration}) with a grid size of 3x3, where the agent (red) must navigate to an unknown target cell (green) in a grid world. The agent can always observe its current location but only observe the target cell when it visits the information cell (blue), which is also unknown to the agent.

\paragraph{Observation} The observation is a two-channel image size 2x3x3, where the first channel encodes the agent's location and the second encodes the target location. The values of the second channel are non-zero only when the agent is at the information cell (\cref{fig:domain_illustration}).

\paragraph{Actions} Movements in four directions (the location does not change if going out of the world).

\begin{figure}[htbp]
    \centering
    \includegraphics[width=0.9\linewidth]{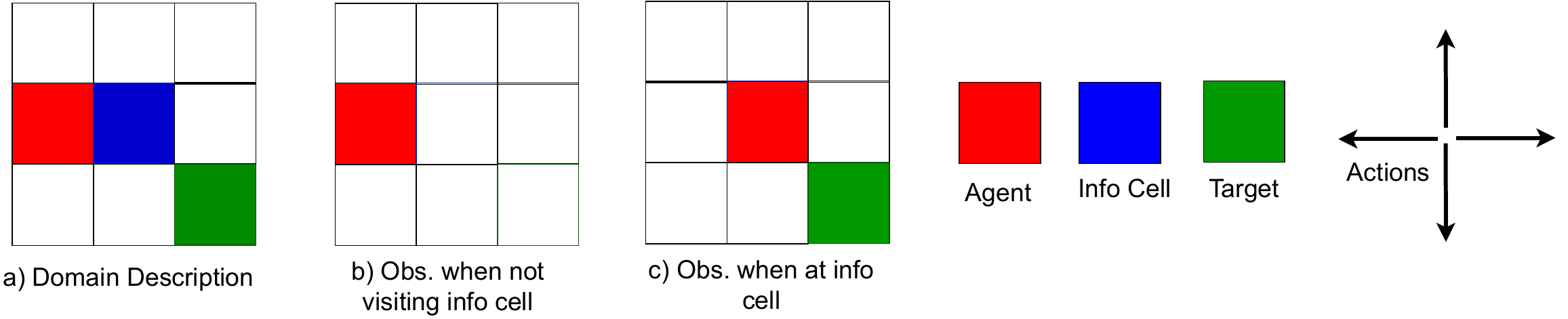}
    \caption{Illustration of the domain, action, and observation.}
    \label{fig:domain_illustration}
    \vspace{-10pt}
\end{figure}

\paragraph{Domain Symmetry} Consider Scenario 1 and Scenario 2 in~\cref{fig:domain_symmetry}: Scenario 2 is the rotated version of Scenario 1 after a $90^\circ$ counter-clockwise (CCW) rotation. Therefore, an optimal path (denoted with colored arrows) in Scenario 1 is equally optimal in Scenario 2 if we rotate the path similarly. The same happens if the rotation angle is $180^\circ$ or $270^\circ$. We can capture the rotational symmetry using group $C_4 = \{0^\circ, 90^\circ, 180^\circ, 270^\circ\}$.
\begin{figure}[htbp]
    \centering
    \includegraphics[width=0.4\linewidth]{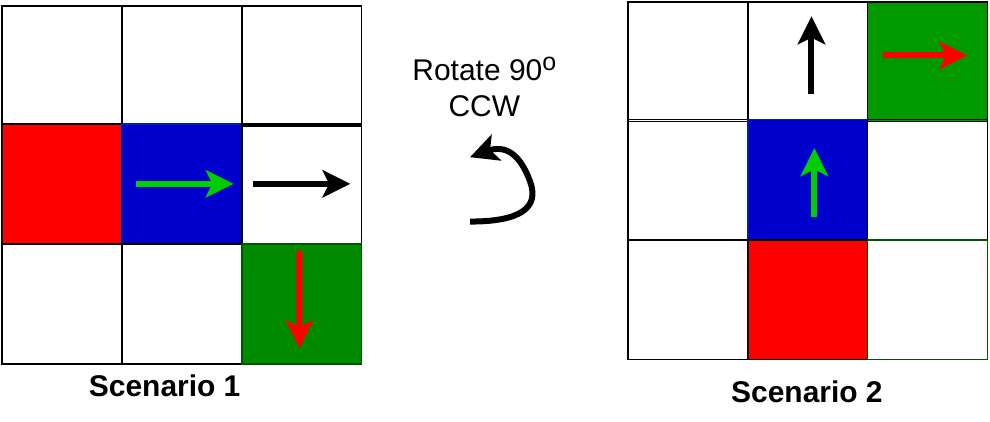}
    \caption{Illustration of domain symmetry. Scenario 2 is the rotated version of Scenario 1 after a $90^\circ$ counter-clockwise rotation. An optimal path in Scenario 1 can be rotated similarly to become optimal in Scenario 2.}
    \label{fig:domain_symmetry}
    \vspace{-10pt}
\end{figure}

\begin{figure}[htbp]
    \centering
    \includegraphics[width=0.4\linewidth]{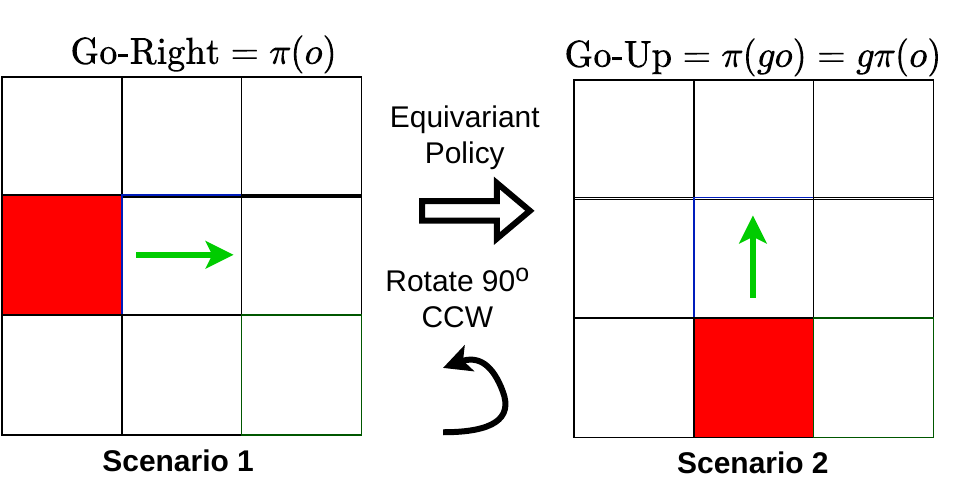}
    \caption{Illustration of the effect of an equivariant policy: the action is automatically rotated when the input observation is rotated.}
    \label{fig:equi_policy}
\end{figure}

\paragraph{Equivariant Policy} We want our policy to automatically capture the domain symmetry above by making it equivariant. In~\cref{fig:equi_policy}, we illustrate the property of an equivariant policy $\pi$ with $g$ being a $90^\circ$ CCW rotation. In Scenario 1, given the first observation $o$, we assume that $\pi$ already knows it should go to the right $\texttt{Go-Right} = \pi(o)$ towards the information cell. Now, moving to Scenario 2, when the first observation is the rotated version of $o$, denoted as $go$. An equivariant policy automatically calculates the next action in Scenario 2 as:
\begin{equation} \label{eq:0}
    \pi(go) = g\pi(o) = g(\texttt{Go-Right}) = \texttt{Go-Up}
\end{equation}

\paragraph{Group Representation}
From~\cref{eq:0}, to construct an equivariant policy, we need to define how a rotation $g$ acts on an observation at the input (i.e., define $go$) and on an action at the output (i.e., define $g\pi(o)$). For that purpose, besides defining the group, we need to specify the group representation, i.e., defining an \emph{observation group representation} $\rho_o$ and an \emph{action group representation} $\rho_a$ for $\pi$ (see below).

\paragraph{Example of Group Acting on Observation and Action}
The effect of a $90^0$ CCW rotation $g$ on the observation via a \emph{trivial} representation $\rho_o = \rho_t$ and the action via a \emph{regular} representation $\rho_a = \rho_r$ is illustrated in~\cref{fig:equi_policy_demon}. A trivial representation $\rho_t$ rotates the observation (like rotating the normal image) while keeping the pixel values unchanged (the value in cell 0 is still $(0_0, 0_1)$). In contrast, a regular representation $\rho_r$ permutes the action distribution output, resulting in a different action, i.e., \texttt{Go-Right} $\rightarrow$ \texttt{Go-Up}). This automatic change only happens if the policy is equivariant.  
\begin{figure}[htbp]
    \centering
    \includegraphics[width=0.8\linewidth]{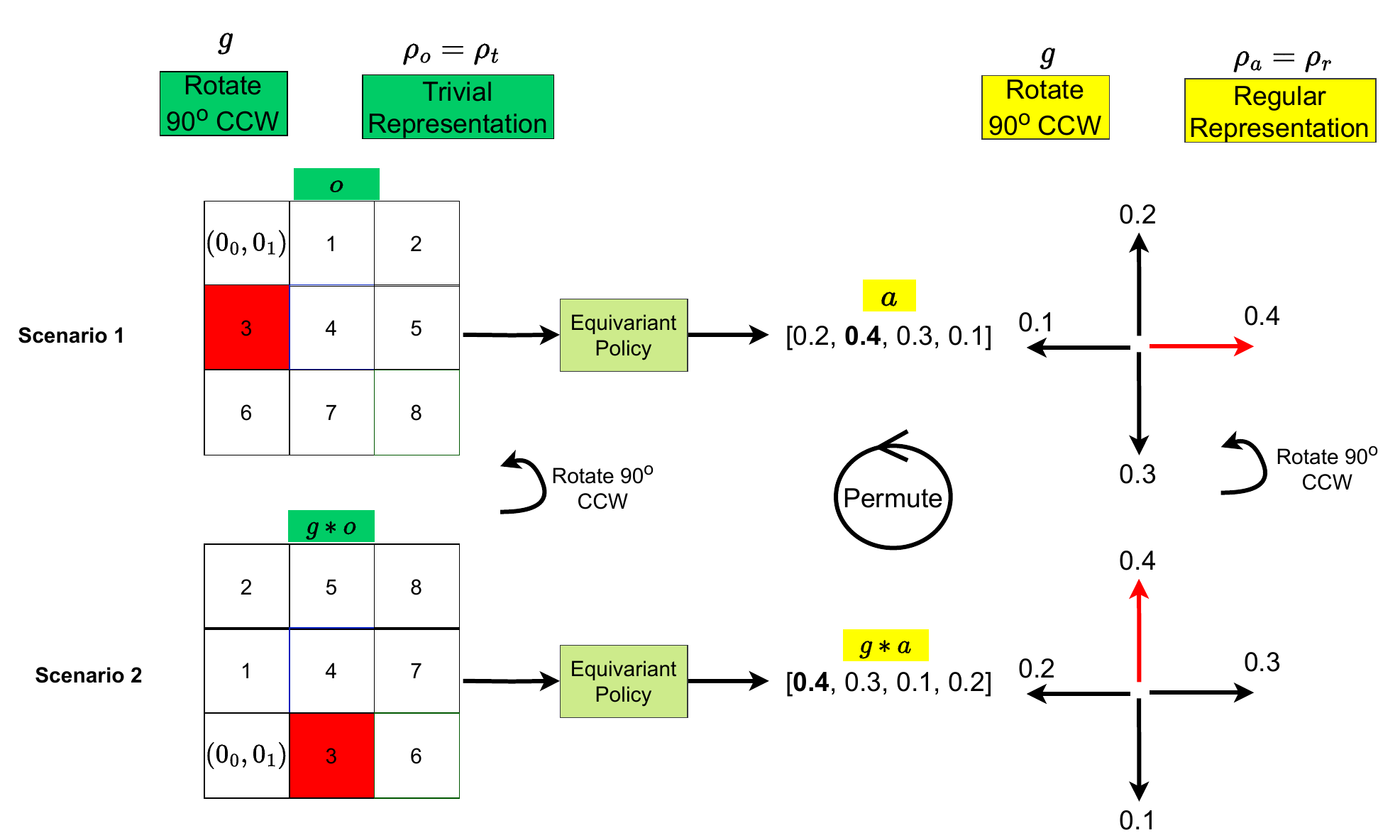}
    \caption{Illustration of the effect of a trivial representation acting on the observation ($\rho_o = \rho_t$) and a regular representation acting on the action ($\rho_a = \rho_r$) with $g$ being a $90^0$ CCW rotation. $\rho_o$ rotates the observation and keeps the pixel values unchanged. $\rho_a$ permutes the action distribution output, resulting in a different action (\texttt{Go-Right} $\rightarrow$ \texttt{Go-Up}).}
    \label{fig:equi_policy_demon}
\end{figure}

\clearpage
\section{Proof of~\cref{thm:group-invariant-values-and-policies}}
\label{app:proof}
In this section, we introduce the framework of history representation MDP~\citep{nguyen2021converting} and prove a supporting lemma before arriving at the proof.
\subsection{History Representation MDP}
\label{sect:hr-mdp}

A POMDP can be converted into a \emph{history representation MDP} (HR-MDP)~\citep{nguyen2021converting} whose state is a sufficient statistic of the POMDP history for control, e.g., the well-known \emph{Belief-MDP}~\citep{kaelbling1998planning} construct is a special case of an HR-MDP based on the belief representation.   Useful representations such as the belief might require a known POMDP model;  however, we adopt a model-free approach with no such knowledge and, therefore, use the trivial identity representation whereby the history is represented by itself.  This effectively converts the POMDP into an equivalent \emph{History-MDP}, which is defined by the tuple $(\mathcal{H}, \mathcal{A}, \bar{T}, \bar{R})$, where:
\begin{align}
    \bar{T}(h, a, h') = \Exp_{o\mid h, a} \left[ \Ind \{ h' = hao \} \right] \quad 
    \bar{R}(h, a) = \Exp_{s\mid h} \left[ R(s, a) \right] \,, \label{eq:history-mdp-reward-transition}
\end{align}
where $\Ind\{\cdot\}$ is the indicator function, and
\begin{align}
\Pr(o \mid h, a) &= \Exp_{s\mid h} \left[ \sum_{s'} T(s, a, s') O(a, s', o) \right] \,, \label{eq:pr:o:ha} \\
\Pr(s'\mid h') &\propto \Exp_{s\mid h} \left[ T(s, a, s') \right] O(a, s', o) \,. \label{eq:pr:s:h}
\end{align}
\subsection{Supporting Lemma}
\setcounter{lemma}{0}
\begin{lemma}
    The belief function of a group-invariant POMDP (as defined by \cref{def:group-invariant-pomdp}) is group-invariant,
    \begin{equation}
        \Pr(g s\mid g h) = \Pr(s\mid h) \,.
    \end{equation}
\end{lemma}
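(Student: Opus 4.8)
The plan is to prove the belief invariance $\Pr(gs \mid gh) = \Pr(s \mid h)$ by induction on the length of the history $h$, using the recursive structure of the belief update together with the four invariance properties in \cref{def:group-invariant-pomdp}. This mirrors how the belief state is actually computed in a POMDP: the initial belief is $b_0$, and each step incorporates a new action-observation pair via a Bayes-style update built from $T$ and $O$.

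\textbf{Base case.} For the empty history $h = ()$ (or a history consisting only of $o_0$), the belief is determined by $b_0$ and possibly the first observation. For $h = ()$, $\Pr(gs \mid g h) = b_0(gs) = b_0(s) = \Pr(s \mid h)$ directly from the assumption $b_0(gs) = b_0(s)$. If one prefers to start from $h = (o_0)$, then $\Pr(s \mid o_0) \propto b_0(s)\,O^{\text{init}}(s, o_0)$ for the appropriate initial observation model, and invariance follows from combining $b_0(gs) = b_0(s)$ with the observation invariance (plus the fact that the proportionality constant, being a sum over all states, is itself invariant since we can reindex the sum by $s \mapsto g^{-1}s$).

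\textbf{Inductive step.} Assume $\Pr(gs \mid gh) = \Pr(s \mid h)$ for a history $h$ of length $t$, and consider $h' = hao$. Using the belief-update formula implicit in \cref{eq:pr:s:h},
\begin{equation}
\Pr(s' \mid h') \propto \sum_s \Pr(s \mid h)\, T(s, a, s')\, O(a, s', o) \,.
\end{equation}
Now evaluate $\Pr(gs' \mid gh')$ where $gh' = (gh)(ga)(go)$:
\begin{equation}
\Pr(gs' \mid gh') \propto \sum_{\tilde s} \Pr(\tilde s \mid gh)\, T(\tilde s, ga, gs')\, O(ga, gs', go) \,.
\end{equation}
Reindex the sum by $\tilde s = gs$; since $g$ acts as a bijection on $\mathcal{S}$ this is legitimate. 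Then apply the inductive hypothesis $\Pr(gs \mid gh) = \Pr(s \mid h)$, the transition invariance $T(gs, ga, gs') = T(s, a, s')$, and the observation invariance $O(ga, gs', go) = O(a, s', o)$, to conclude that the (unnormalized) right-hand side equals $\sum_s \Pr(s \mid h)\, T(s, a, s')\, O(a, s', o)$, i.e.\ the unnormalized belief for $\Pr(s' \mid h')$. Finally, I must check the normalizing constants agree: the constant for $gh'$ is $\sum_{s'} \sum_{\tilde s} \Pr(\tilde s \mid gh)\, T(\tilde s, ga, gs')\, O(ga, gs', go)$, and reindexing both sums ($\tilde s = gs$, $s' \mapsto g^{-1}s'$) shows it equals the constant for $h'$. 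Hence the normalized beliefs coincide.

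\textbf{Main obstacle.} The routine algebra is the reindexing; the one subtlety to get right is the normalization argument — one must argue that the partition function (the sum over all states) is itself $g$-invariant, which again reduces to the bijectivity of the group action on $\mathcal{S}$ and the invariances of $T$ and $O$. A secondary point requiring care is setting up the base case consistently with whatever convention \cref{eq:history-mdp-reward-transition} uses for the first observation $o_0$ (whether histories start empty or with $o_0$), so that the induction is anchored correctly. Everything else follows mechanically from \cref{def:group-invariant-pomdp}.
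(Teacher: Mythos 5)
Your proposal is correct and follows essentially the same route as the paper: induction on history length, anchoring at the first observation using the invariance of $b_0$ and $O$, then an inductive step that applies the belief-update formula, reindexes the state sum via the bijectivity of the group action, and invokes the invariance of $T$ and $O$. Your explicit check that the normalizing constants match is the same fact the paper uses implicitly when it argues that two normalized distributions proportional to the same quantity must be equal.
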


\begin{proof}[Proof By Induction]
    $ $\newline
    \textbf{Base Case}.
    We first prove that the belief after the first observation is invariant. We note here that the observation function for the first timestep takes the form $O(s, o)$, with no preceding action.
    \begin{align}
        \Pr(g s_0\mid g o_0) &\propto b_0(g s_0) O(g s_0, g o_0) 
        = b_0(s_0) O(s_0, o_0) 
        \propto \Pr(s_0\mid o_0) \,.
    \end{align}
    Since $\Pr(g s_0\mid g o_0)$ and $\Pr(s_0\mid o_0)$ are both proportional to the same quantity, and they are both normalized to be distributions over states, then they are themselves equal.
    
    \textbf{Inductive Step.}
    We then prove that if $\Pr(s_t\mid h_t)$ is invariant, then $\Pr(s_{t+1}\mid h_{t+1})$ is also invariant.  Per \cref{eq:pr:s:h},
    \begin{align}
        \phantom{\propto} \Pr(g s_{t+1}\mid g h_{t+1}) 
        &\propto \Pr(g s_t\mid g h_t) T(g s_t, g a_t, g s_{t+1}) O(g a_t, g s_{t+1}, g o_{t+1}) \nonumber \\
        &= \Pr(s_t\mid h_t) T(s_t, a_t, s_{t+1}) O(a_t, s_{t+1}, o_{t+1}) 
        \propto \Pr(s_{t+1}\mid h_{t+1}) \,.
    \end{align}
    Since $\Pr(g s_{t+1}\mid g h_{t+1})$ and $\Pr(s_{t+1}\mid h_{t+1})$ are both proportional to the same quantity, and they are both normalized to be distributions over states, then they are themselves equal.
    By induction, given the base case and the inductive step, the belief function $\Pr(s_t\mid h_t)$ is invariant for any $t$.
\end{proof}

\subsection{Proof}
\begin{proof}
We begin by constructing the History-MDP associated with a group-invariant POMDP and showing that it is itself a group-invariant MDP.
The transition and reward functions of the History-MDP are shown in \cref{eq:history-mdp-reward-transition} and satisfy the group invariance properties.

For this proof, it is simpler to express the history transition function as $\bar T(h, a, h') = \Pr(o\mid h, a)$, where $o$ is the observation (if any exists) s.t. $h' = hao$.  If no such observation exists, then $\bar T(h, a, h') = 0$ is trivially invariant.  If it does exist, then it is necessarily the last observation of $h'$,
\begin{align}
    \bar T(gh, ga, gh') &= \Pr(go\mid gh, ga) 
    = \sum_{s,s'} \Pr(s\mid gh) \Pr(s'\mid s, ga) \Pr(go\mid ga, s') \nonumber \\
    \intertext{since $g$ permutes the elements of $\sset$, we can re-index using $s = g \bar s$ and $s' = g \bar s'$,}
    &= \sum_{\bar s, \bar s'} \Pr(g\bar s\mid gh) T(g\bar s'\mid g\bar s, ga) O(go\mid ga, g\bar s') \nonumber \\
    &= \sum_{s, s'} \Pr(s\mid h) T(s'\mid s, a) O(o\mid a, s') 
    = \bar T(h, a, h') \,.
\end{align}

By using $s=g\bar s$, we proceed similarly for history rewards,
\begin{align}
    \bar R(gh, ga) &= \sum_s \Pr(s\mid gh) R(s, ga) 
    %
    %
    = \sum_{\bar s} \Pr(g \bar s\mid gh) R(g\bar s, ga) \nonumber \\
    &= \sum_s \Pr(s\mid h) R(s, a) 
    = \bar R(h, a) \,.
\end{align}

Therefore, $\bar T(h, a, h')$ and $\bar R(h, a)$ are invariant, and History-MDPs are group-invariant MDPs.
By the theory developed in~\citep{wang2022so}, this implies that the optimal Q-value function $Q^*(h, a)$ is invariant and that there exists at least one equivariant deterministic optimal policy $\pi^*(h)$. Moreover, 
%
\begin{align}
    V^*(gh) &= Q^*(gh, \pi^*(gh)) = Q^*(gh, g\pi^*(h)) \nonumber \\
    &= Q^*(h, \pi^*(h)) = V^*(h) \,,
\end{align}
this ends our proof by showing that $V^*(h)$ is invariant.
\end{proof}

\clearpage
\section{Environment Details}
\label{app:domain_details}

\subsection{Grid-world Domains}
\subsubsection{\texttt{CarFlag-1D}}
\begin{itemize}
    \item Action: Go-Left or Go-Right
    \item Observation (Discrete): The position of the car, the side of the green flag (-1 or 1 if the car is at the blue flag, and 0 otherwise)
    \item Reward: step reward: -0.01, reaching the green flag: 1.0, and reaching the red flag: -1.0
    \item Episode Initialization: The car is randomized such that it is not at the information location (blue flag). The goal (green flag) is always either at the leftmost or rightmost end. The red flag is on the opposite end
    \item Episode Termination: Reaching either flags or an episode lasts more than 50 timesteps
    \item World size: The distance between the red and the green flag is 50
\end{itemize}
\begin{figure}[htbp]
    \centering
    \includegraphics[width=0.8\linewidth]{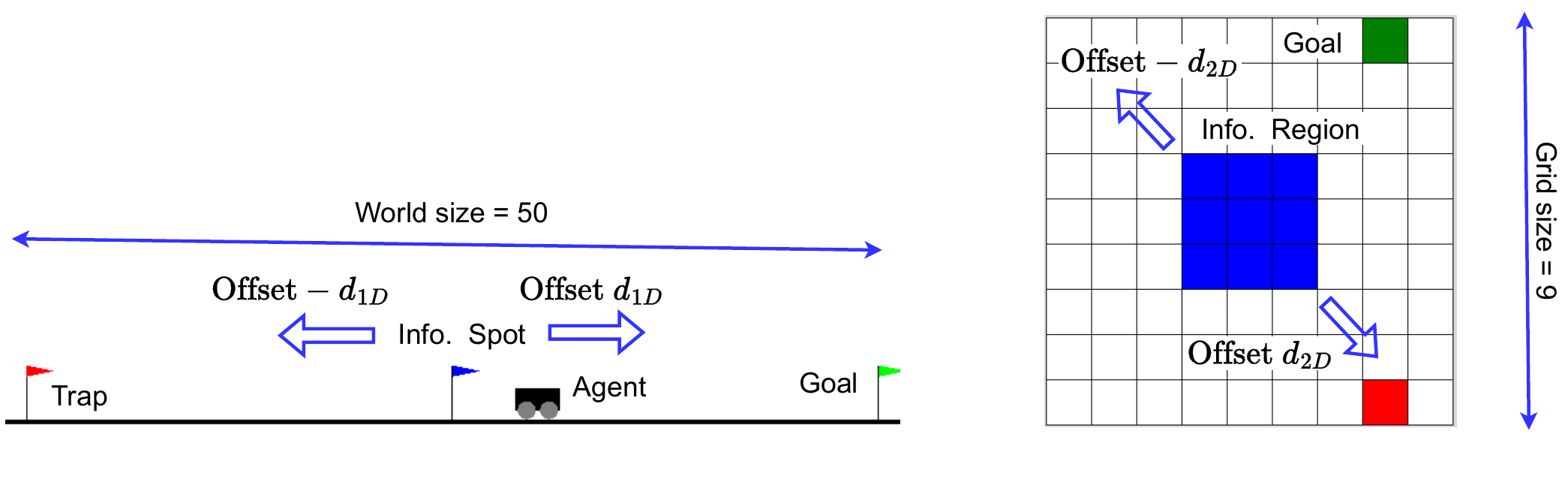}
    \caption{\texttt{CarFlag-1D} and \texttt{CarFlag-2D} domains. The information regions are not visible to the agent. These domains become asymmetric when the offsets from the information region to the world center, i.e., $d_{1D}$ and $d_{2D}$, are non-zero.}
    \label{fig:app_carflag_1d}
\end{figure}
\subsubsection{\texttt{CarFlag-2D}}

\begin{itemize}
    \item Action: Right/Left/Up/Down
    \item Observation: The observation is encoded as an $N\times N \times 2$ image, where $N$ is the grid size, the first channel encodes the car's position, and the second channel encodes the position of the green cell. The second channel is only informative when the agent is inside the information region (blue)
    \item Reward: Reaching the green cell: 1.0, otherwise 0.0
    \item Episode Initialization: The agent and the goal cell are randomized such that the minimum distance between them is at least two steps. Moreover, both the agent and the goal are not initialized inside the information region (blue)
    \item Episode Termination: Reached the goal or an episode lasts more than 50 timesteps
\end{itemize}

\subsection{Robot Manipulation Domains}
\label{app:robot_manip}
An episode is terminated for these domains when it lasts over 50 timesteps or the task is achieved. Because all robot domains share the same observation and action, we only describe them below.
\begin{figure}[htbp]
    \centering
    \includegraphics[width=0.5\linewidth]{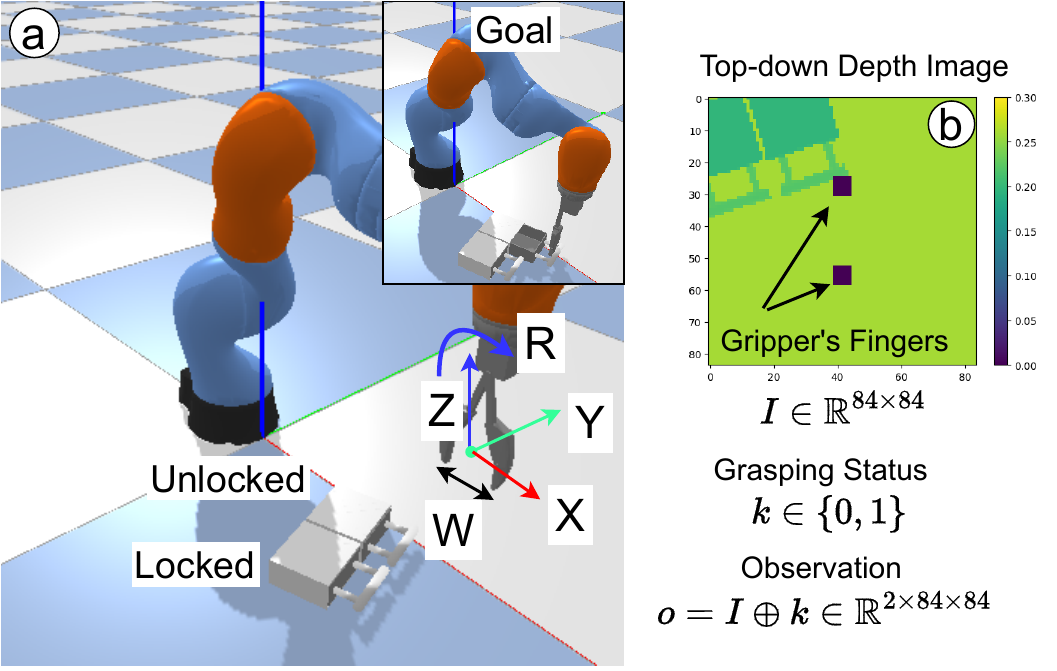}
    \caption{Visual description of \texttt{Drawer-Opening}.}
    \label{fig:intro_visual}
\end{figure}

\noindent \textbf{Action.} An action $a = (\delta_w, \delta_x, \delta_y, \delta_z, \delta_r)$, where $\delta_w \in [0, 1]$ is the absolute openness of the gripper (0: fully open, 1: fully closed), $\delta_{x,y,z} \in [-0.05, 0.05]$ are the displacements of the gripper in the X, Y, and Z axis, and $\delta_r \in [-\pi/8, \pi/8]$ is the angular rotation around the Z axis (see~\cref{fig:intro_visual}a)

\noindent \textbf{Observation.} An observation is a top-down depth image taken from a camera located at the end-effector. Specifically, an observation $o = (I, k)$, where $ I \in \mathbb{R}^{84 \times 84}$ is the depth image and $k \in \{1, 0 \}$ indicates the current holding status of the gripper. $I$ and $k$ are combined to create a unified depth observation $o\in \mathbb{R}^{2\times84\times84}$. Moreover, two fingers of the gripper are also projected on $I$ (black squares in~\cref{fig:intro_visual}b)

\noindent \textbf{Partial Observability. }These domains characterize the natural partial observability when certain physical properties of objects, e.g., whether a drawer in~\cref{fig:intro_visual}a is unlocked or not, are often unobservable using pixel observations alone

\subsubsection{\texttt{Block-Picking}}
\begin{itemize}
    \item Reward: A reward of 1.0 only when the movable block is picked and brought higher than 8cm
    \item Episode Initialization: The poses of the two blocks are randomized. The arm is initialized at a fixed pose
    \item Expert Generation: An expert (a planner with access to all object poses) randomly chooses one block to pick. If the expert picks the movable block, it will bring the block up to achieve the task. Otherwise, the expert keeps trying for several timesteps before switching to pick the movable block to achieve the task
\end{itemize}

\begin{figure}[htbp]
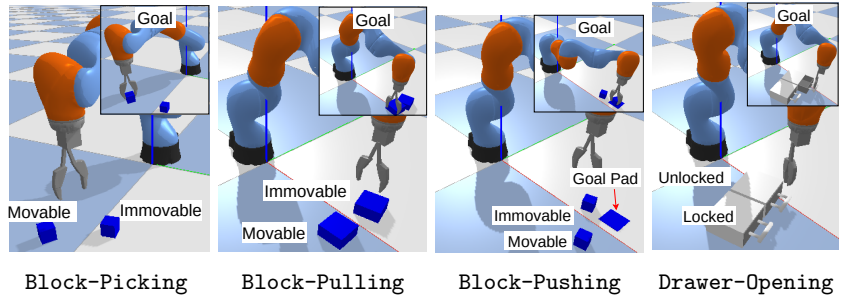

  \centering
  \begin{subfigure}[t]{0.2\linewidth}
    \includegraphics[width=\linewidth]{enc/block_picking.pdf}
    \caption*{\texttt{Block-Picking}} 
  \end{subfigure}
  \begin{subfigure}[t]{0.2\linewidth}
    \includegraphics[width=\linewidth]{enc/block_pulling.pdf}
    \caption*{\texttt{Block-Pulling}}
  \end{subfigure}
  \begin{subfigure}[t]{0.2\linewidth}
    \includegraphics[width=\linewidth]{enc/block_pushing.pdf}
    \caption*{\texttt{Block-Pushing}} 
  \end{subfigure}
  \begin{subfigure}[t]{0.18\linewidth}
    \includegraphics[width=\linewidth]{enc/drawer_opening.pdf}
    \caption*{\texttt{Drawer-Opening}}
  \end{subfigure}
  \caption{Robot manipulation domains.}
    \label{fig:app_blockpicking}
\end{figure}
\subsubsection{\texttt{Block-Pulling}}
\begin{itemize}
    \item Reward: A reward of 1.0 only when two blocks are in contact
    \item Expert Generation: An expert randomly chooses one block to pull towards the other block. If the block is pullable, it will be pulled towards the other block to achieve the task. Otherwise, the expert keeps trying for a while before pulling the other block
\end{itemize}

\subsubsection{\texttt{Block-Pushing}}

\begin{itemize}
    \item Reward: A reward of 1.0 only when the pushable block is within 5cm from the center of the goal pad. The agent additionally receives a penalty of 0.1 per timestep if it changes the height of the movable block by 5mm to prevent picking the block instead of pushing it
    \item Episode Initialization: The poses of the two blocks and the goal pad are randomly initialized
    \item Expert Generation: An expert randomly chose one block to push towards the goal pad. If the block is pushable, it will continue pushing until it reaches the goal pad. Otherwise, the expert keeps trying for several timesteps before doing the same thing with the other (pushable) block
\end{itemize}

\subsubsection{\texttt{Drawer-Opening}}
\begin{itemize}
    \item Reward: A reward of 1.0 only when the unlocked drawer is opened more than 5cm
    \item Episode Initialization: Two drawers are randomly placed next to each other with the same heading angle
    \item Expert Generation: An expert randomly chooses one drawer to open. If it chooses the unlocked drawer, it will then open the drawer to achieve the task. Otherwise, the expert keeps opening the unlocked drawer several timesteps before opening the other drawer
\end{itemize}

\clearpage
\section{Implementation Details}
\label{app:implementation_details}

\subsection{Network Structure of Equivariant Recurrent A2C (Equi-RA2C)}
\cref{fig:app_ra2c} shows the specific architecture of Equi-RA2C used in \texttt{CarFlag} domains. Because the actions can be inferred from the observations in these domains, we do not include the feature extractor for the previous actions. We also omit the skip-connections. The input representation is some representation of the observation $\rho_o$, depending on the domains (see below).
\begin{figure}[htbp]
    \centering
    \includegraphics[width=0.7\linewidth]{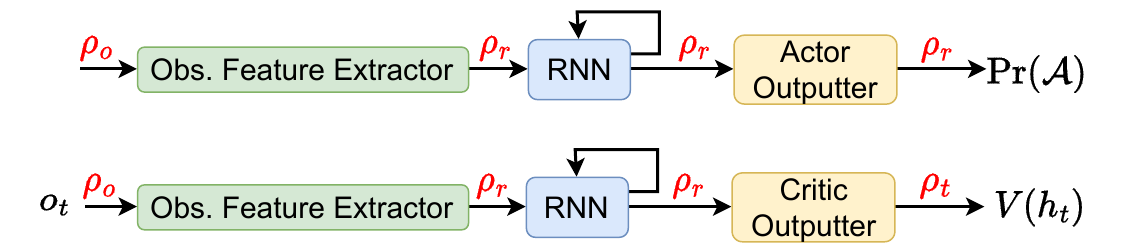}
    \caption{The architecture of Equi-RA2C used in \texttt{CarFlag-1D} and \texttt{CarFlag-2D}.}
    \label{fig:app_ra2c}
\end{figure}

\cref{fig:app_ra2c_1d} shows the details of Equi-RA2C used in \texttt{CarFlag-1D} for the \texttt{flip2dOnR2} group in the \texttt{escnn}~\footnote{https://github.com/QUVA-Lab/escnn}~\citep{e2cnn, cesa2022a} library. Notice that the input $x_t$ for the LSTM cell using the \emph{irreducible} representation of the \texttt{flip2dOnR2} group denoted as $\rho_{\text{irr}}$. For \texttt{CarFlag-1D}, using this representation in the input would negate the signs of every component in $x_t$, i.e., flipping the positions of the car, the sides of the green flag, and the previous actions in the history. Because the observation in this domain is feature-based, we remove the observation feature extractor and directly feed the observation to the equivariant LSTM.
\begin{figure}[htbp]
    \centering
    \includegraphics[width=1.0\linewidth]{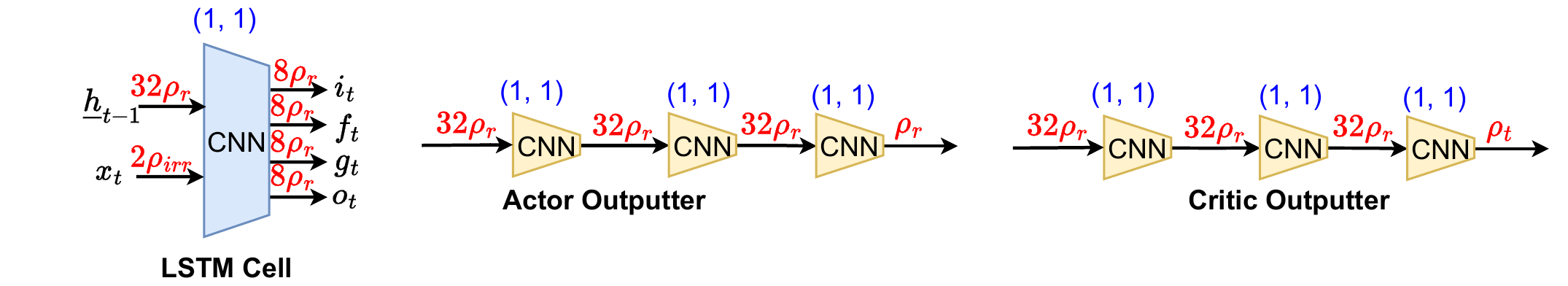}
    \caption{Details of Equi-RA2C used in \texttt{CarFlag-1D} for the \texttt{flip2dOnR2} group. Numbers inside brackets (blue - on top) denote the value of kernel sizes and strides used for the CNN modules on the bottom. The numbers next to the representations, e.g., 32$\rho_r$, denote the number of feature fields.}
    \label{fig:app_ra2c_1d}
\end{figure}

\cref{fig:app_ra2c_2d} shows the details of Equi-RA2C used in \texttt{CarFlag-2D} for the $C_4$ group.
\begin{figure}[htbp]
    \centering
    \includegraphics[width=0.9\linewidth]{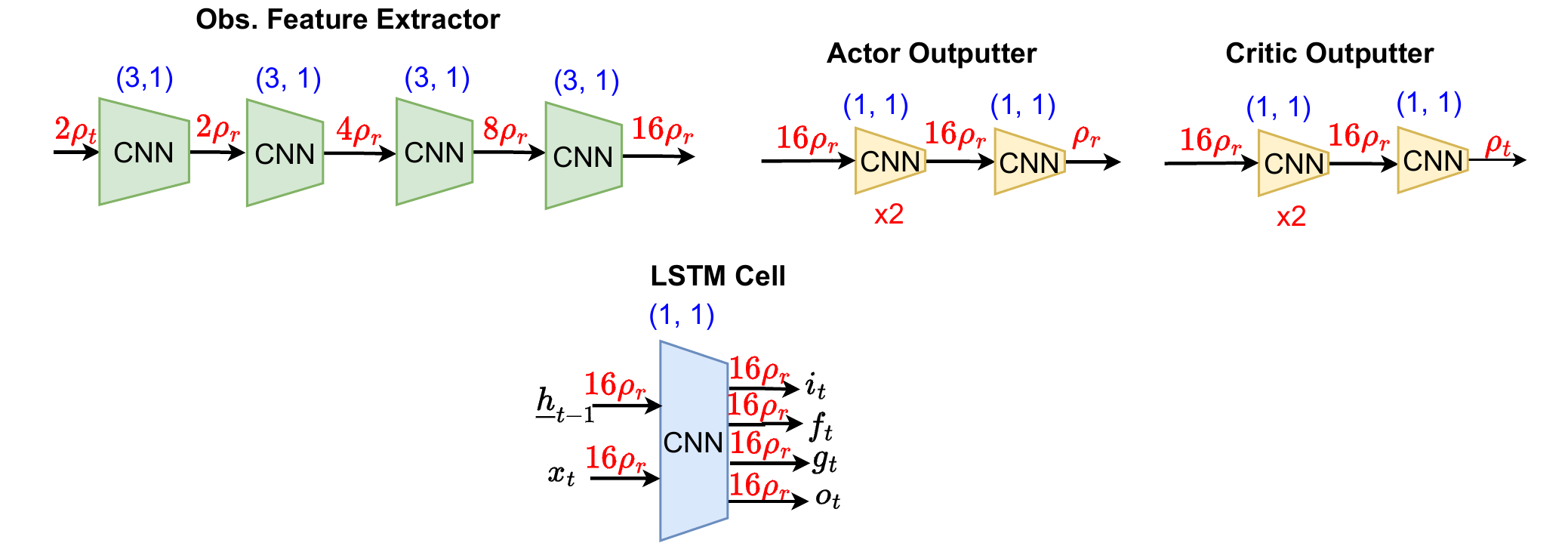}
    \caption{The details of Equi-RA2C used in \texttt{CarFlag-2D} for the $C_4$ group.}
    \label{fig:app_ra2c_2d}
\end{figure}
\subsection{Network Structure of Equivariant Recurrent SAC (Equi-RSAC) with $C_4$ Group}
\label{sect:explain_mixed}

\cref{fig:equi_rsac_details} shows the details of Equi-RSAC used in the robot manipulation domains with the $C_4$ group. The input representation is \emph{mixed} for the action feature extractor because the action input has components that transform differently under a rotation. Specifically, given an action $a = (\delta_w, \delta_x, \delta_y, \delta_z, \delta_r)$, the trivial representation $\rho_t$ is chosen for the $\delta_w, \delta_z, \delta_r$ components (which should be unchanged under the rotation). In contrast, the standard representation $\rho_s$ is chosen for the lateral components $(\delta_x, \delta_y)$, which should rotate. For the same reason, for the actor outputter, $\rho_\mu$ is mixed, i.e., the trivial representations $\rho_t$ are used for the $w, z, r$ components, and $\rho_s$ is used for the $x, y$ components.
\begin{figure}[htbp]
    \centering
    \includegraphics[width=0.9\linewidth]{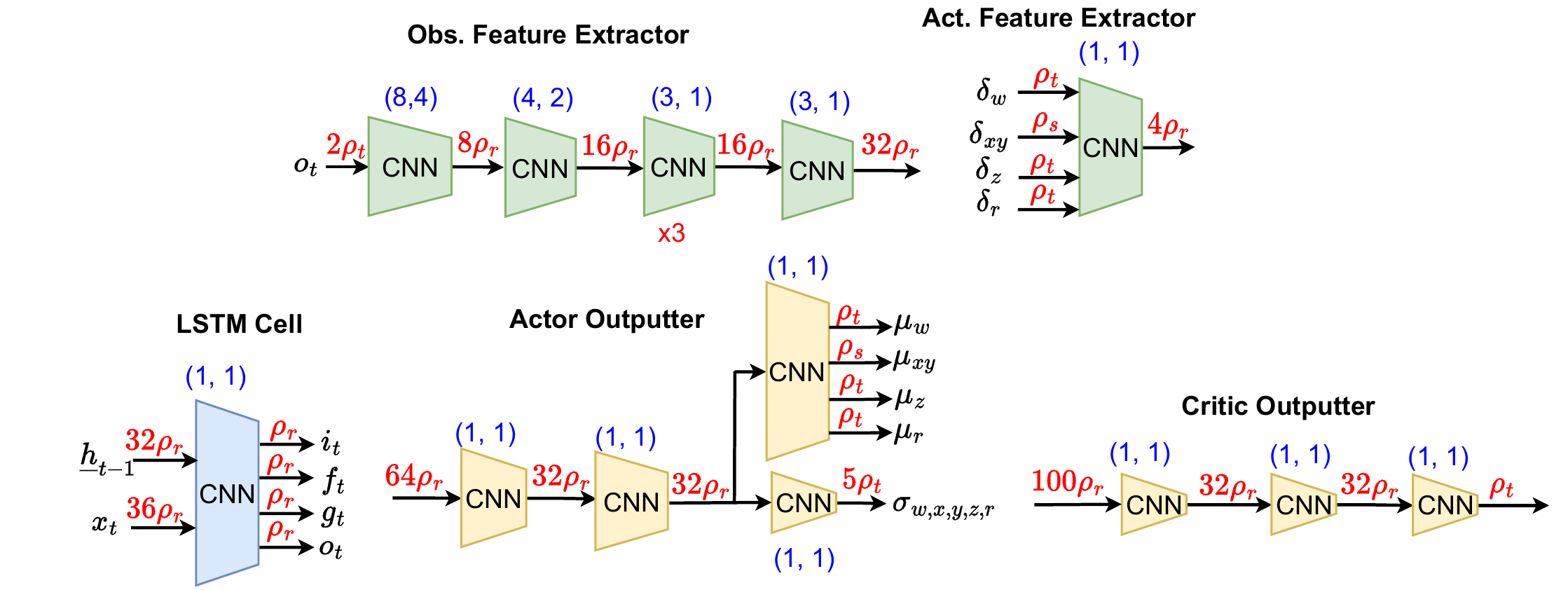}
    \caption{Details of Equi-RSAC with the robot manipulation domains and the $C_4$ group.}
    \label{fig:equi_rsac_details}
\end{figure}

\subsection{Implementation Using The ESCNN Library}
Given the definition of each equivariant component above, we can easily implement it with \texttt{escnn}. For instance, the following PyTorch~\citep{paszke2017automatic} code defines the observation feature extractor in~\cref{fig:app_ra2c_2d}a with ReLU as a non-linearity component:

\begin{python}
import escnn.nn as enn

# Define group C4
s = escnn.gspaces.rot2dOnR2(4)

# Define in/out representations
repr_i  = enn.FieldType(s,  2*[s.trivial_repr])
repr_m0 = enn.FieldType(s,  2*[s.regular_repr])
repr_m1 = enn.FieldType(s,  4*[s.regular_repr])
repr_m2 = enn.FieldType(s,  8*[s.regular_repr])
repr_o  = enn.FieldType(s, 16*[s.regular_repr])

obs_feature_extractor = enn.SequentialModule(
          enn.R2Conv(repr_i, repr_m0, 3, 1),
          enn.ReLU(repr_m0),
          enn.R2Conv(repr_m0, repr_m1, 3, 1),
          enn.ReLU(repr_m1),
          enn.R2Conv(repr_m1, repr_m2, 3, 1),
          enn.ReLU(repr_m2),
          enn.R2Conv(repr_m2, repr_o, 3, 1),
          enn.ReLU(repr_o),
)
\end{python}
Implementing the mixed representation is also straightforward by summing different field types. In order to create the actor and the critic, we simply chain components by using the \texttt{SequentialModule} as in native PyTorch.

\subsection{Training Details}
We implement using PyTorch. The batch size for all agents is 32 (episodes). The replay buffer has a capacity of 100,000 transitions. We use the Adam optimizer~\citep{kingma2014adam} with a learning rate of 3e-4 for actors and critics and 1e-3 for optimizing $\alpha$ for SAC-based agents. The target entropy $\bar{H}$ for SAC-based agents is -dim$(\mathcal{A})$ followed the common practice, and $\alpha$ is initialized at 0.1. After prepopulating the replay buffer with 80 expert episodes, the buffer is filled with 20 episodes with random actions. We use the same 1:1 environment/gradient step ratio for all agents.

\subsection{Implementing Equivariant LSTM}
We implement the equivariant LSTM~\citep{hochreiter1997long} based on a public code of ConvLSTM~\citep{shi2015convolutional} at \url{https://github.com/Hzzone/Precipitation-Nowcasting} as the authors did not release the official code.

\clearpage
\section{Baseline Details}
\label{app:baseline_details}

\paragraph{RA2C~\citep{pytorchrl}} We modified the code at \url{https://github.com/ikostrikov/pytorch-a2c-ppo-acktr-gail}. We used 16 environments in parallel and used recurrent policies. Other hyper-parameters are kept at default.
\paragraph{DPFRL~\citep{ma2020discriminative}} We used the authors' code at \url{https://github.com/Yusufma03/DPFRL}. We used 30 particles, MGF particle aggregation type, and the hidden dimension is 128.
\paragraph{RAD~\citep{laskin2020reinforcement}} We collected depth images of size 90x90 to perform random cropping to reduce the size to 84x84. We perform the same type of random cropping for every depth image within an episode.
\paragraph{DrQ~\citep{kostrikov2020image}} We used random shift of $\pm4$ pixels as suggested by the original work. The same type of shifting is used for every depth image within a sequence. We also followed the authors' suggestions when using the numbers of augmentations for calculating the Q-targets, and the Q-values are $K=2$ and $M=2$, respectively.
\paragraph{SLAC~\citep{lee2020stochastic}} We used a Pytorch implementation at \url{https://github.com/toshikwa/slac.pytorch}, which has been benchmarked against the performance reported in the original paper. We pre-train the latent variable model for 2k steps before iterating between data collection, model update, and evaluation. We also pre-fill the replay buffer with the same number of expert and random episodes before training and use four extra augmented episodes for each episode during training to ensure a fair comparison. The sequence length is extended from 8 (originally) to 50 (maximum episode length). We varied the sequence length for better performance, but the performance did not improve much. For any episode shorter than 50 steps, we zero-pad dummy transitions \emph{in front}.
\paragraph{DreamerV2~\citep{hafner2020mastering}} We used the official code at \url{https://github.com/danijar/dreamerv2}. For \texttt{CarFlag} domains, we mainly keep the default hyper-parameters (suggested by the authors). In \texttt{CarFlag-2D}, the observation image is extended to have the size of 64$\times$64$\times$3 by zero-padding around the original image and is added with a dummy channel (all zero).

\paragraph{DreamerV3~\citep{hafner2023mastering}} We used the official code at \url{https://github.com/danijar/dreamerv3} and performed similar steps like in the case of DreamerV2. We used the \emph{small} world models with about 18M trainable parameters (predefined in the repo's configuration file) for our \texttt{CarFlag} domains.

\clearpage

\section{Visualization of SLAC Reconstructed Images}
\label{app:slac_viz}

\cref{fig:app_slac_reconstruct} shows the comparison between the depth images produced by the trained latent model of SLAC~\citep{lee2020stochastic} (top row) and the ground-truth ones (bottom row) in \texttt{Block-Pulling} after 40k training steps. It can be seen that small squares representing the gripper have been reconstructed quite well, but the model fails to reconstruct the two blocks representing the gripper's position in the scene.

\begin{figure}[htbp]
    \centering
    \includegraphics[width=0.9\linewidth]{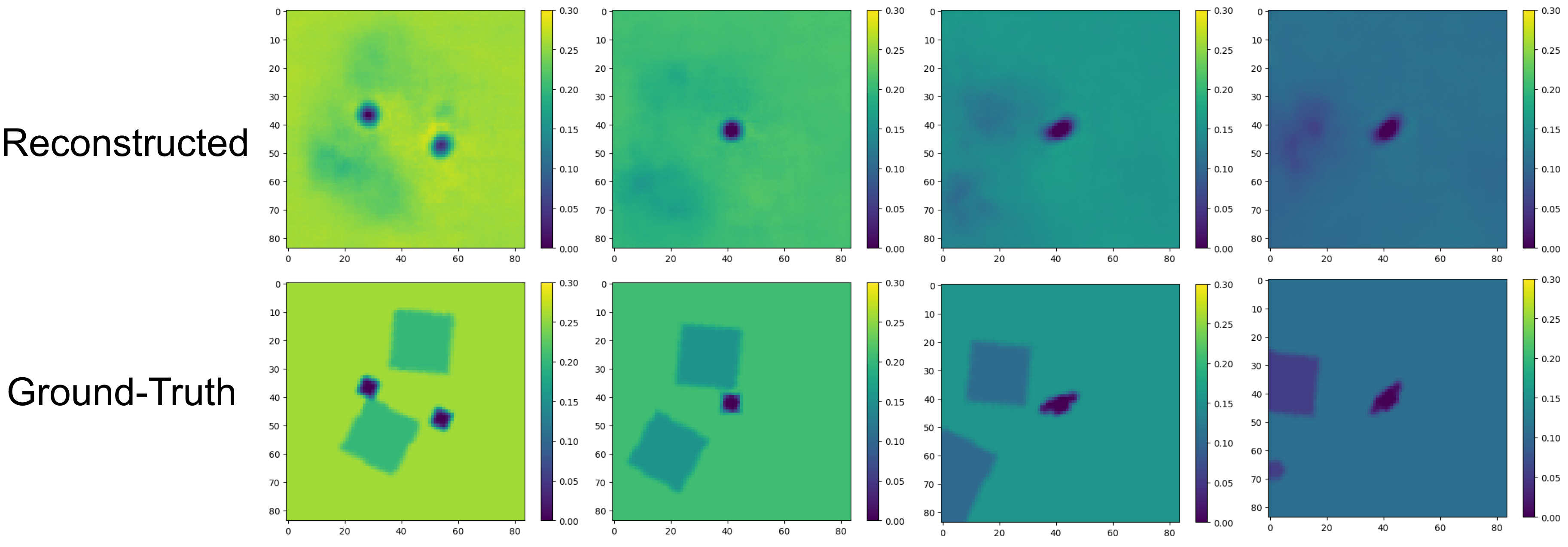}
    \caption{Images reconstructed by the latent model of SLAC~\citep{lee2020stochastic} in \texttt{Block-Pulling}: reconstructed (top row), ground-truth (bottom row).}
    \label{fig:app_slac_reconstruct}
\end{figure}

\clearpage
\section{Visualization of Data Augmentations}
\label{app:viz_data_aug}
We show visualizations of different ways for augmenting the observations within a training sequence in \texttt{Drawer-Opening}: random rotation (\cref{fig:app_rot_aug}), random crop (\cref{fig:app_rad_aug}), and random shift (\cref{fig:app_drq_aug}). Note that the same operation (rotation/crop/shift) is applied similarly to every observation in an episode. For each training episode, we perform this augmentation four times to generate four auxiliary episodes. 
\begin{figure}[htbp]
    \centering
    \includegraphics[width=0.9\linewidth]{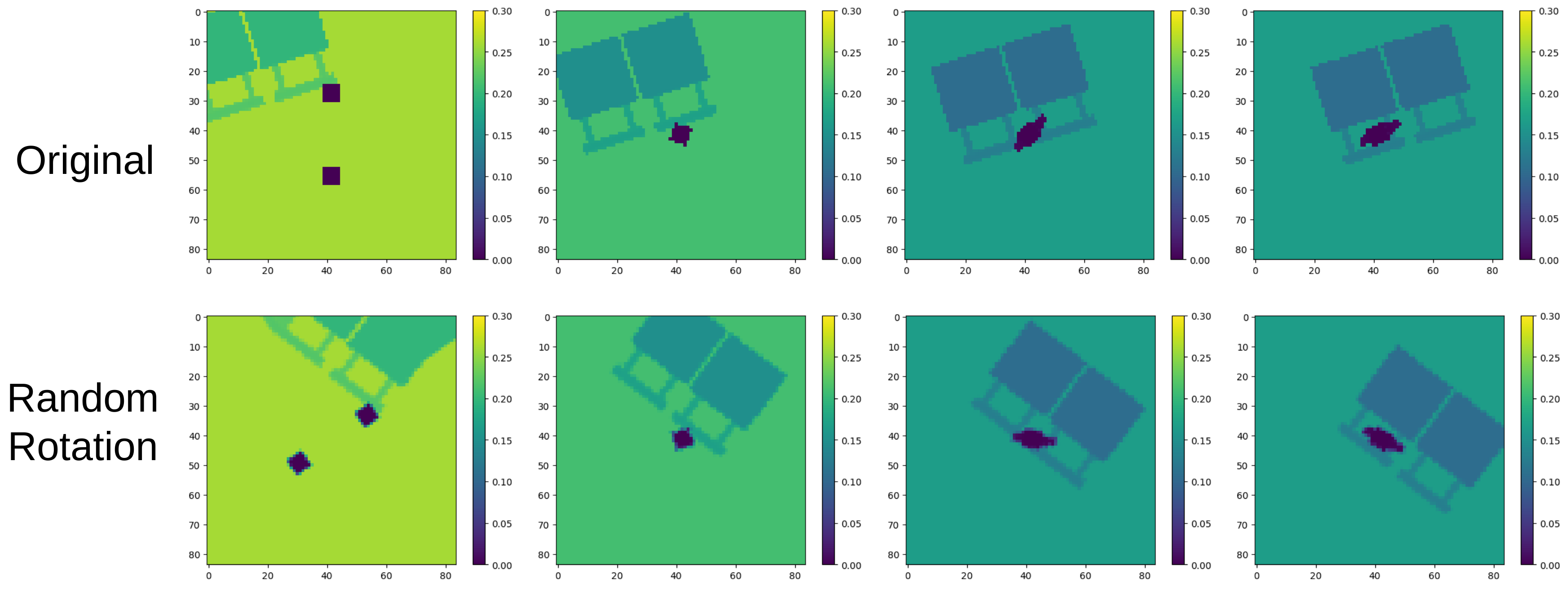}
    \caption{Visualization of randomly rotated augmentations in \texttt{Drawer-Opening}: original observations (top row), randomly rotated observations (bottom row).}
    \label{fig:app_rot_aug}
\end{figure}

\begin{figure}[htbp]
    \centering
    \includegraphics[width=0.9\linewidth]{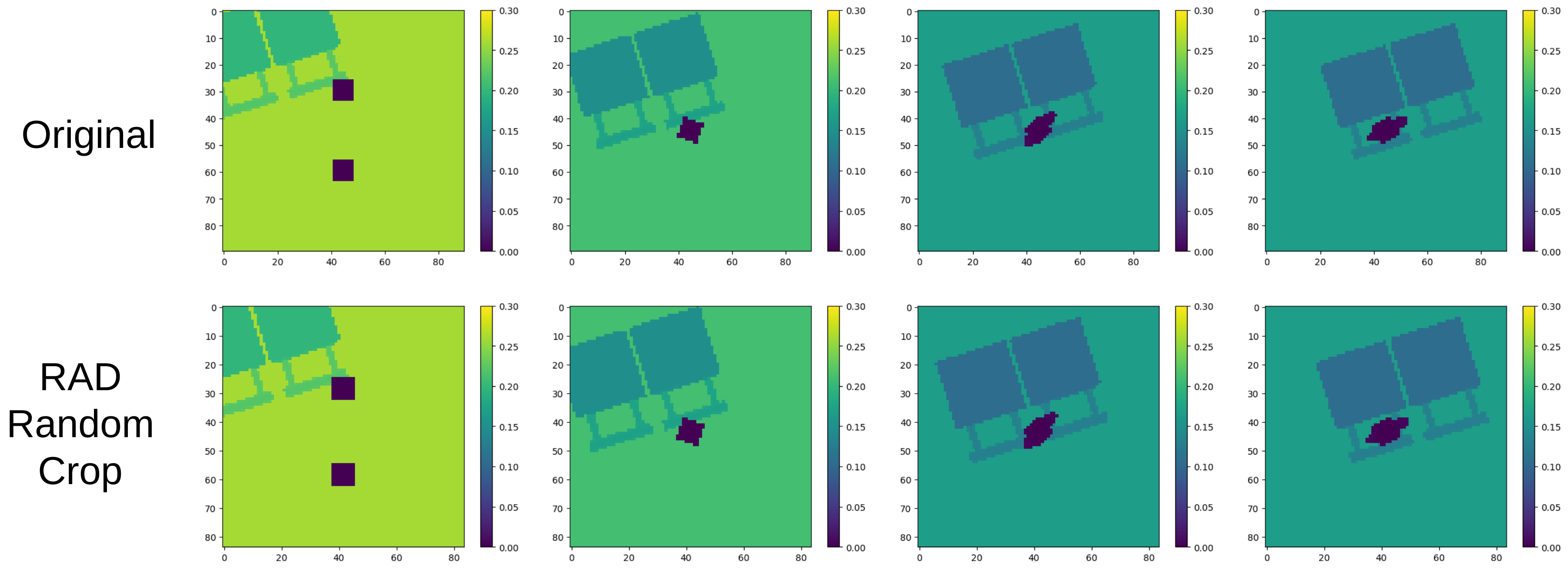}
    \caption{Visualization of randomly cropped augmentation for RAD~\citep{laskin2020reinforcement} in \texttt{Drawer-Opening}: original observations (top row), randomly cropped observations (bottom row).}
    \label{fig:app_rad_aug}
\end{figure}

\begin{figure}[htbp]
    \centering
    \includegraphics[width=0.9\linewidth]{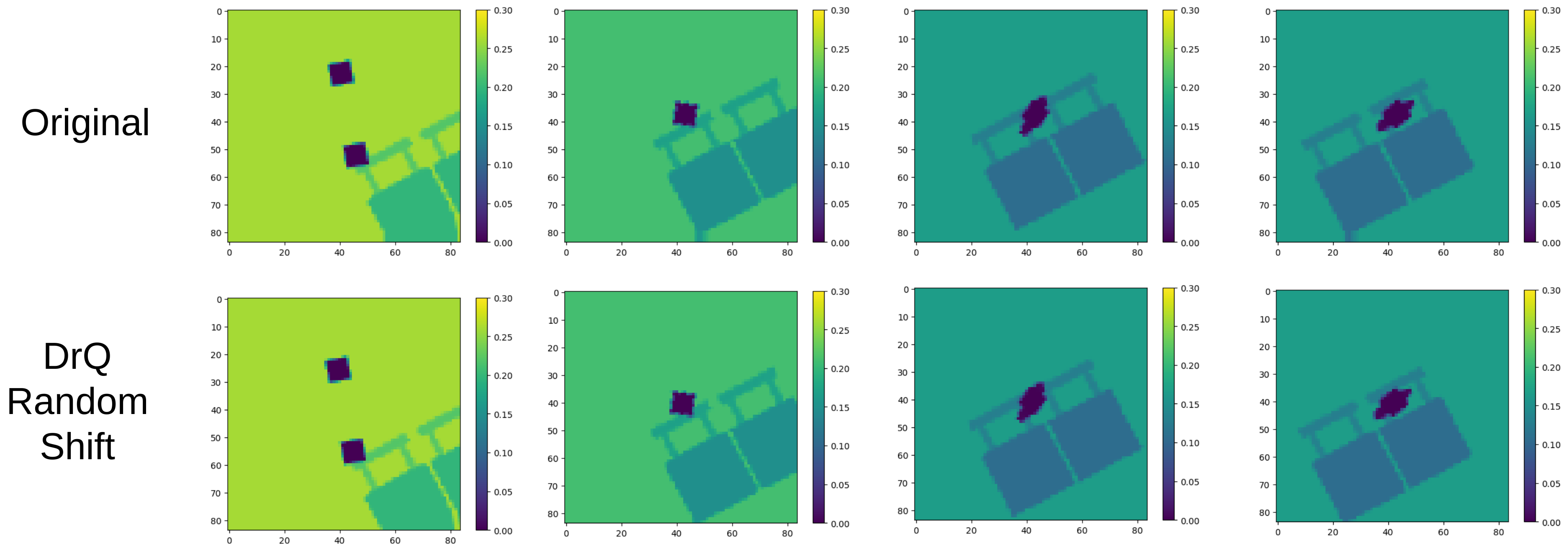}
    \caption{Visualization of randomly shifted augmentation for DrQ~\citep{kostrikov2020image} in \texttt{Drawer-Opening}: original observations (top row), randomly shifted observations (bottom row).}
    \label{fig:app_drq_aug}
\end{figure}

\clearpage
\section{Ablation Studies}
\label{app:ablation_studies}


\subsection{Equivariant Actor or Critic Only}
In~\cref{fig:app_actor_critic_type}, we additionally show the learning performance when only either actor or critic is equivariant in \texttt{Block-Pushing} and \texttt{Drawer-Opening}. From the figure, having an equivariant critic (purple) is more beneficial than having an equivariant actor (blue). However, having both being equivariant (green) yields the best performance.

\begin{figure}[htbp]
 \centering
  \begin{subfigure}[t]{0.4\linewidth}
    \includegraphics[width=\linewidth]{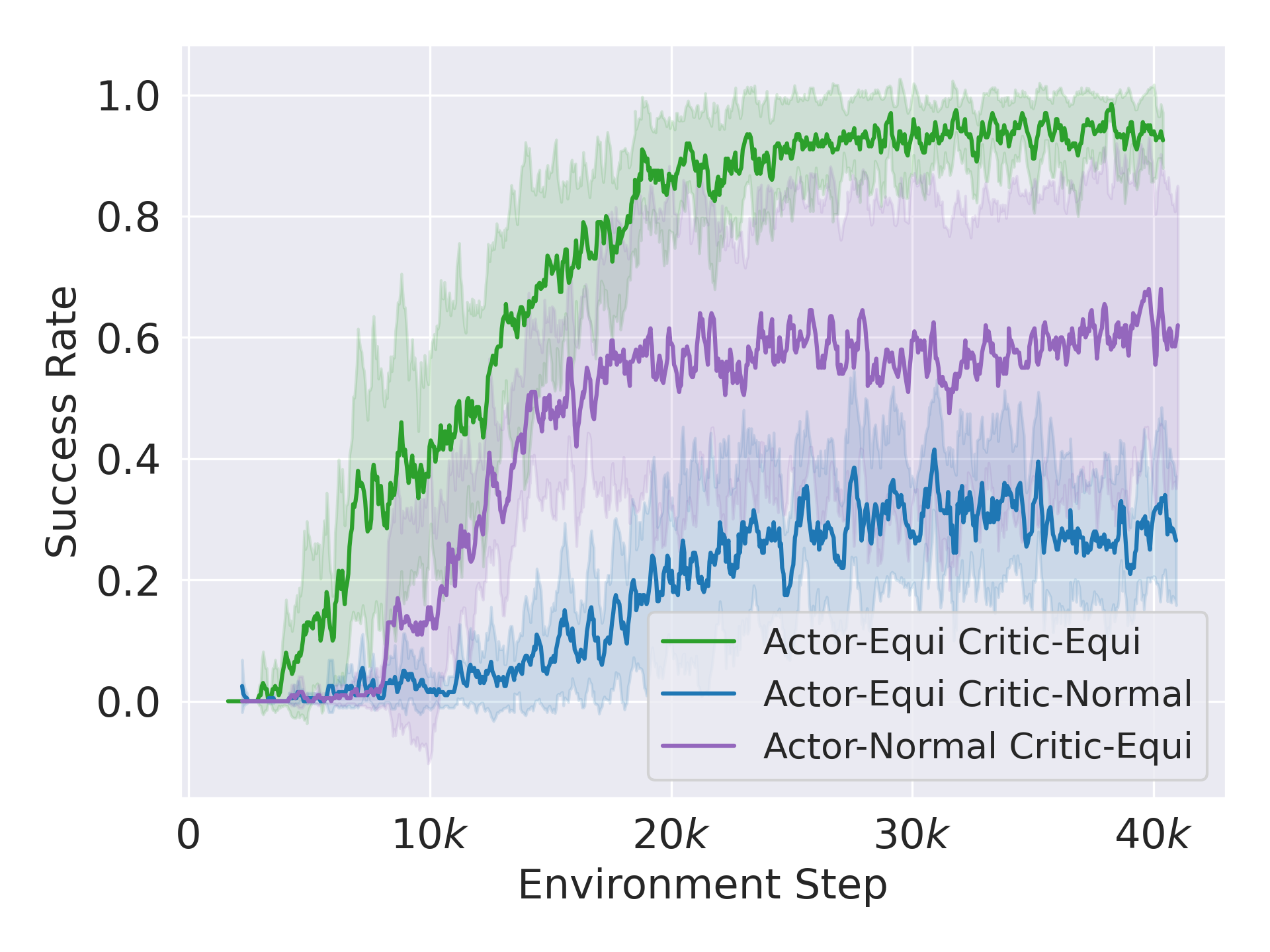}
    \caption{\texttt{Block-Picking}}
  \end{subfigure}
  \begin{subfigure}[t]{0.4\linewidth}
    \includegraphics[width=\linewidth]{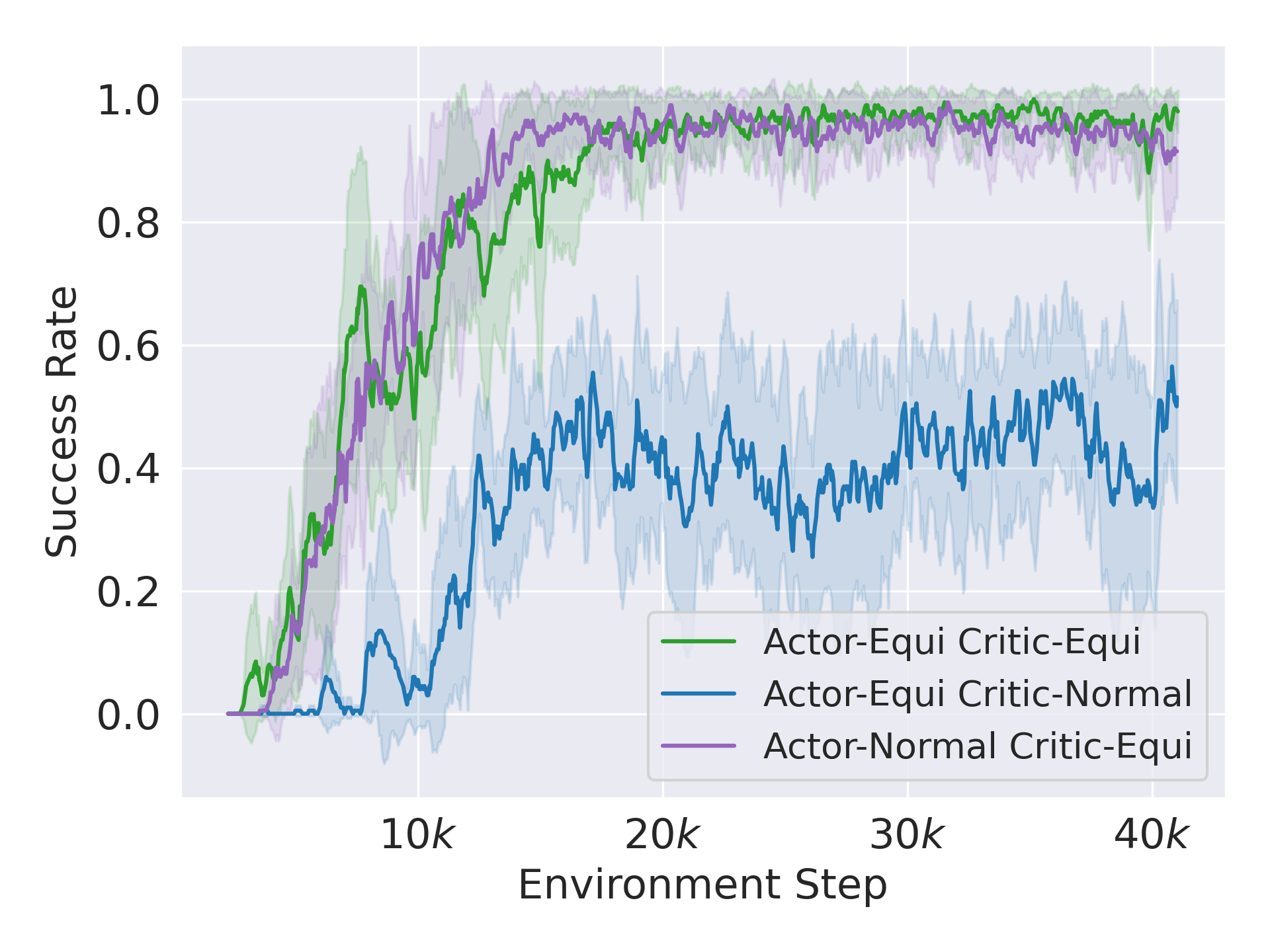}
    \caption{\texttt{Block-Pulling}} 
  \end{subfigure}
  \begin{subfigure}[t]{0.4\linewidth}
    \includegraphics[width=\linewidth]{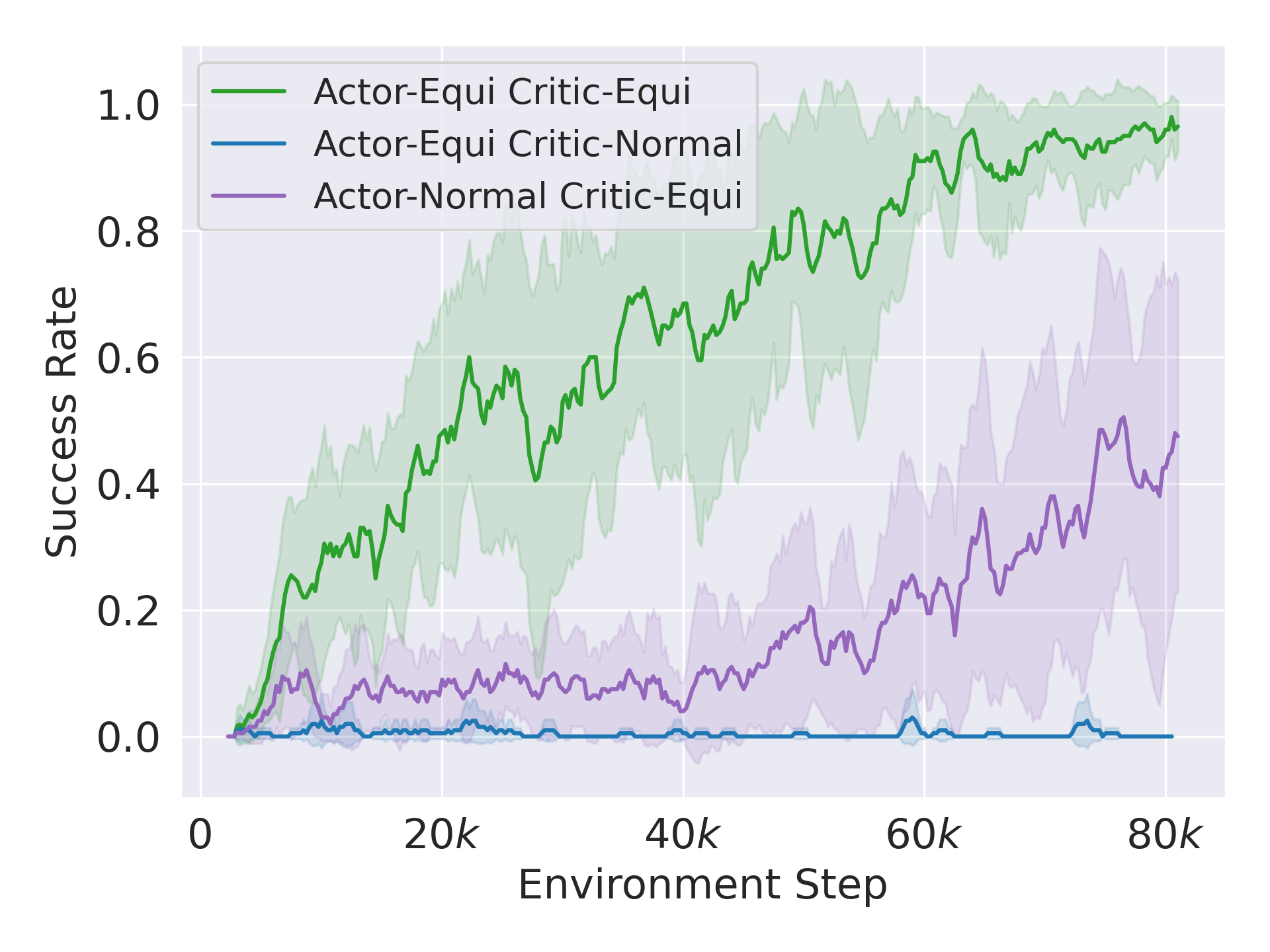}
    \caption{\texttt{Block-Pushing}}
  \end{subfigure}
  \begin{subfigure}[t]{0.4\linewidth}
    \includegraphics[width=\linewidth]{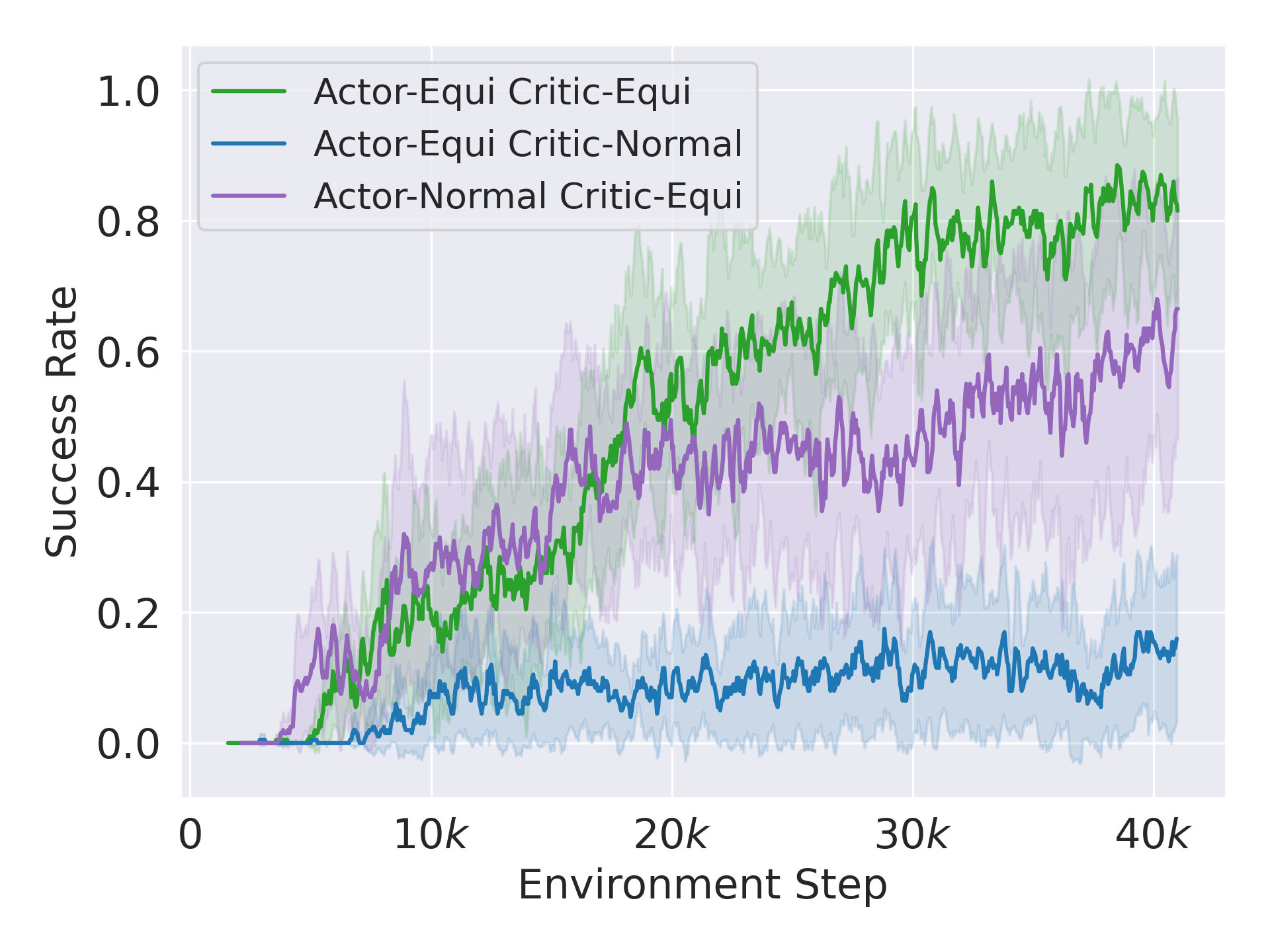}
    \caption{\texttt{Drawer-Opening}}
  \end{subfigure}
  \caption{Comparing the effect of only using equivariant actor or critic.}
  \label{fig:app_actor_critic_type}
\end{figure}

\clearpage
\subsection{Different Symmetry Groups}
\cref{fig:app_abl_groups} shows the performance when the $C_4$ and $C_8$ symmetry groups in the robot manipulation domains. Using $C_4$ is much better than using $C_8$ in \texttt{Block-Pushing}, but the two groups perform similarly in the remaining domains. Furthermore, it is possible to use other group symmetries that extend $C_n$ with reflection, such as the dihedral groups $D_4$ or $D_8$.

\begin{figure}[htbp]
 \centering
   \begin{subfigure}[t]{0.4\linewidth}
    \includegraphics[width=\linewidth]{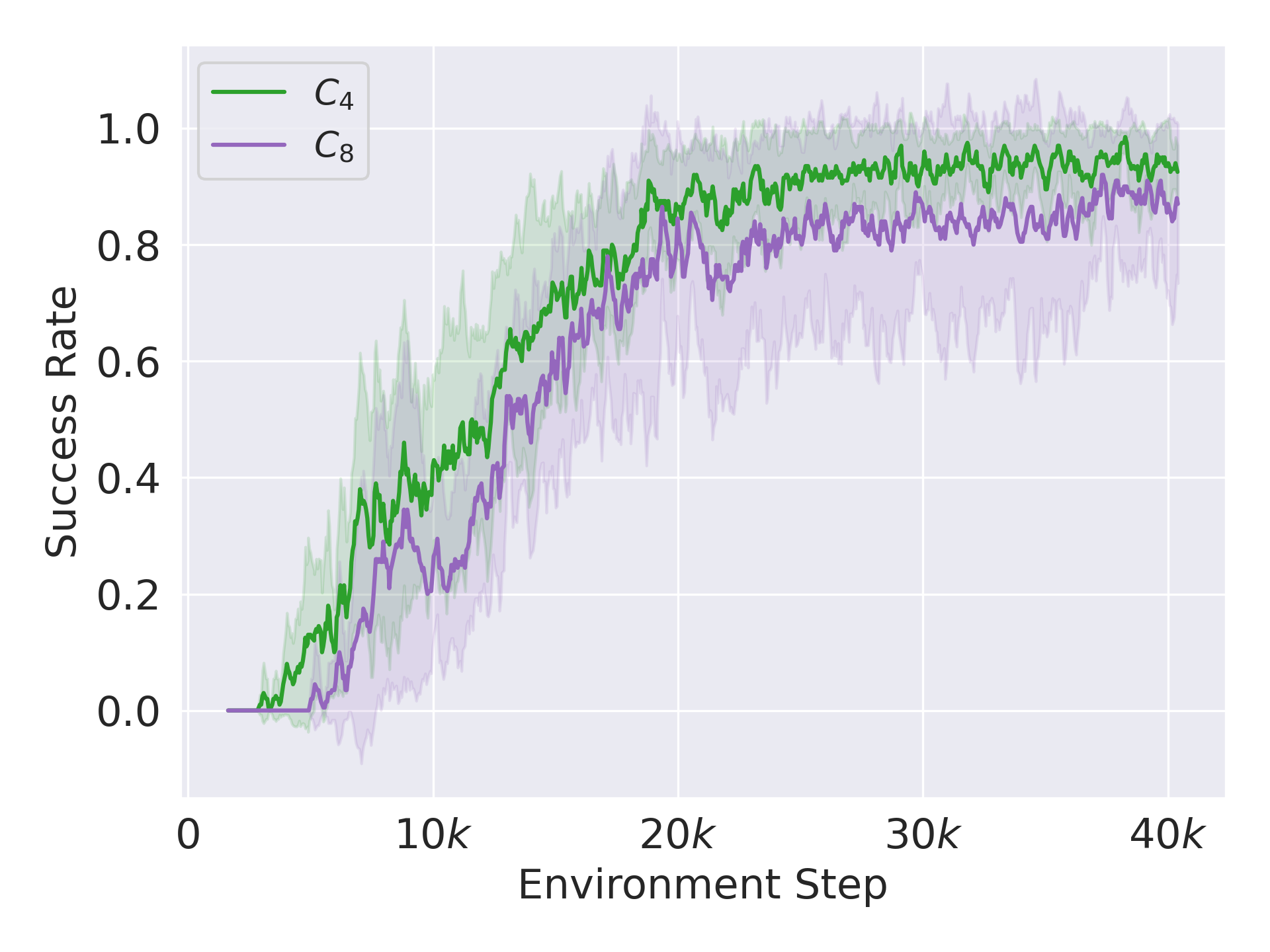}
    \caption{\texttt{Block-Picking}}
  \end{subfigure}
  \begin{subfigure}[t]{0.4\linewidth}
    \includegraphics[width=\linewidth]{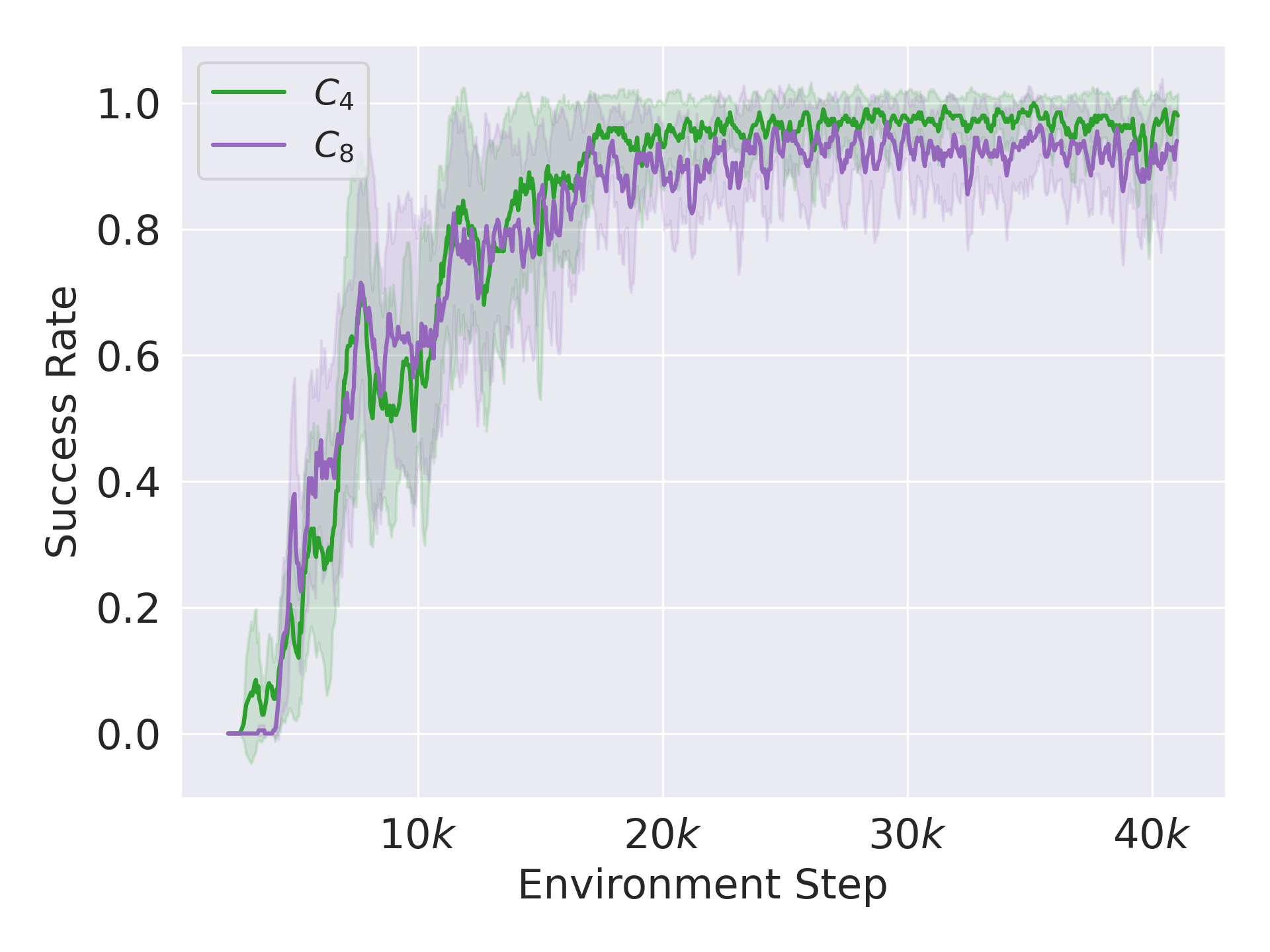}
    \caption{\texttt{Block-Pulling}}
  \end{subfigure}
  \begin{subfigure}[t]{0.4\linewidth}
    \includegraphics[width=\linewidth]{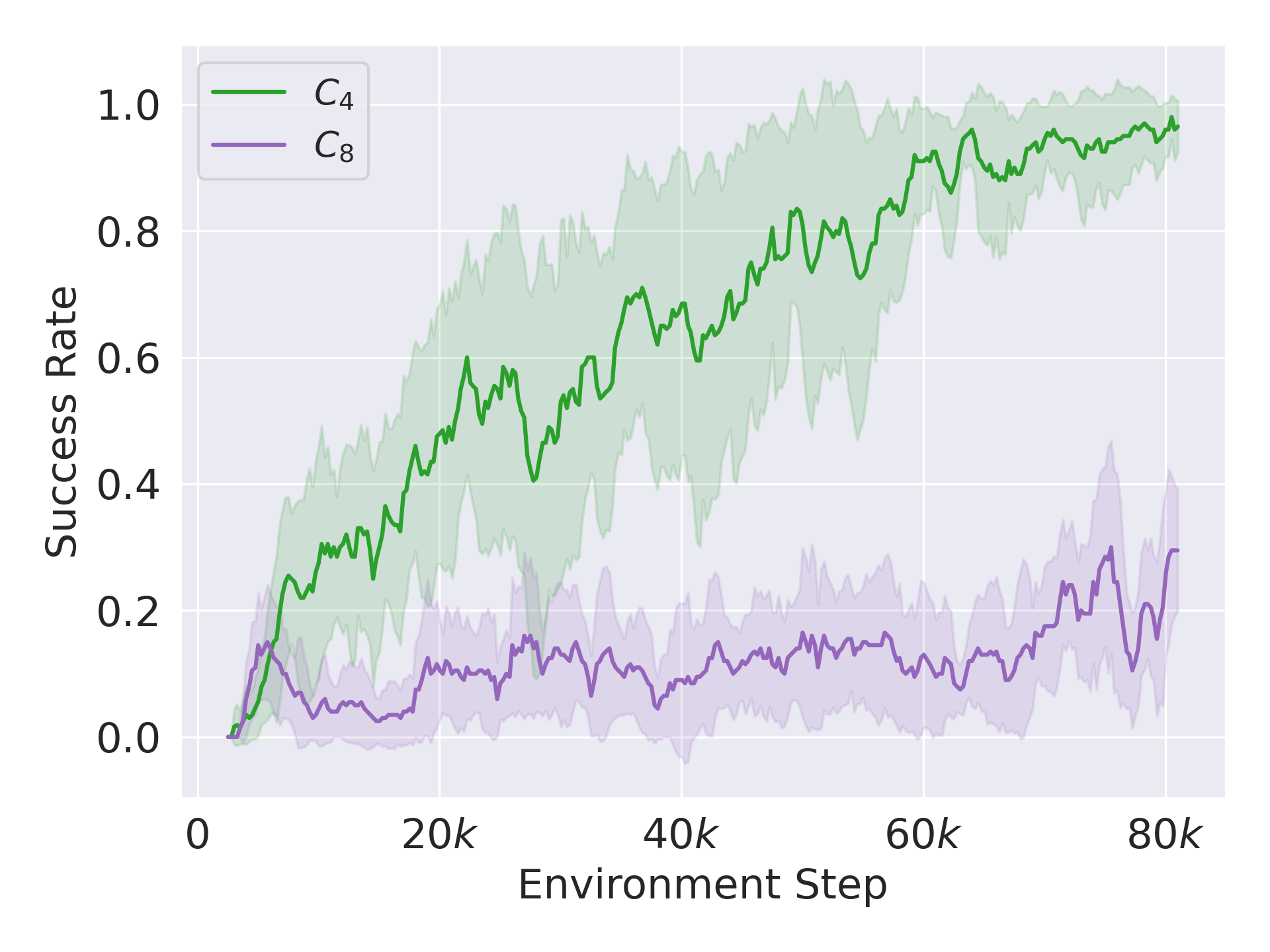}
    \caption{\texttt{Block-Pushing}}
  \end{subfigure}
  \begin{subfigure}[t]{0.4\linewidth}
    \includegraphics[width=\linewidth]{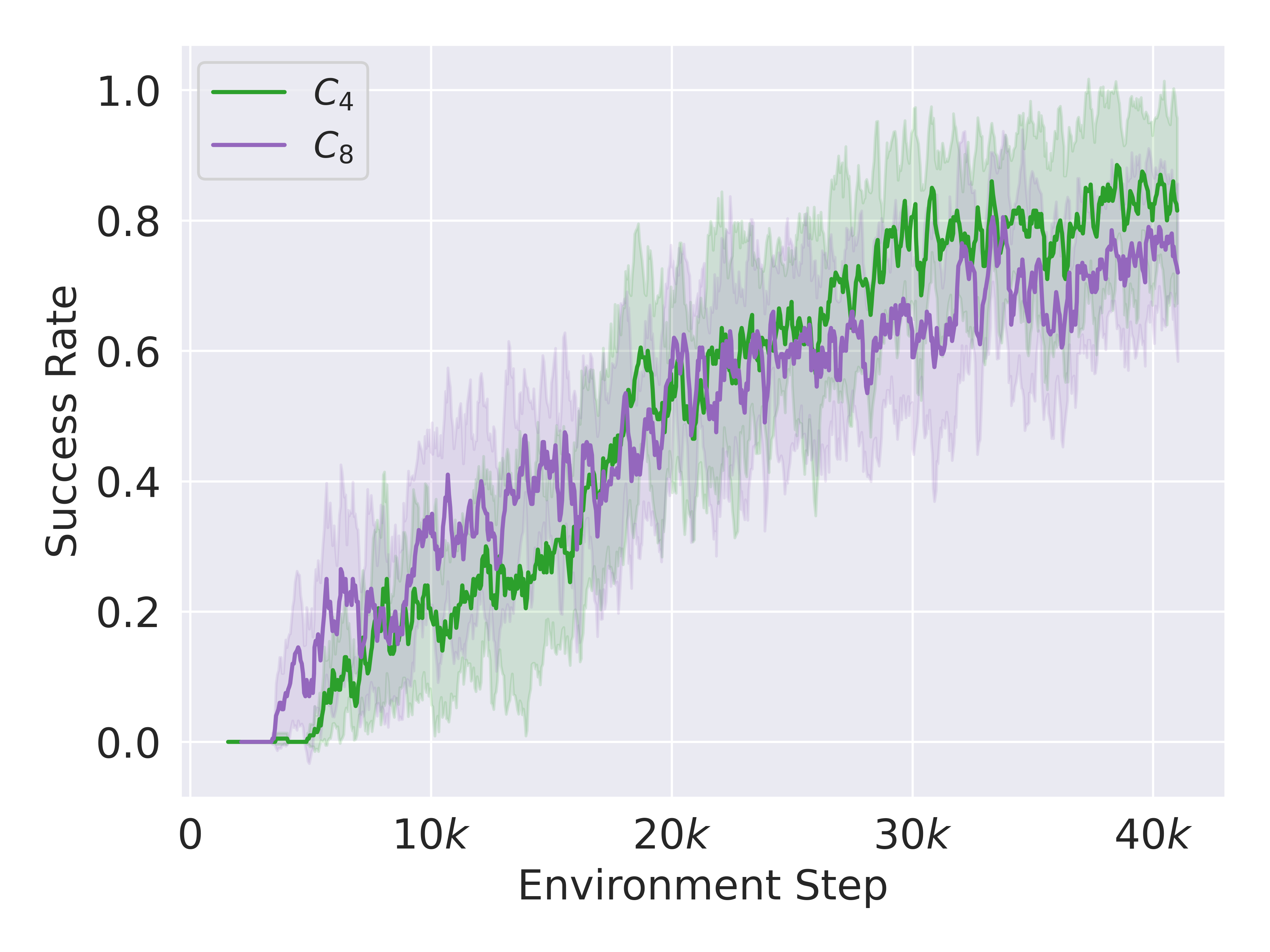}
    \caption{\texttt{Drawer-Opening}}
  \end{subfigure}
  \caption{Comparing the effect of using symmetry groups $C_4$ and $C_8$.}
  \label{fig:app_abl_groups}
\end{figure}

\clearpage
\subsection{Randomly Initialized Cell and Hidden States of Equivariant LSTM}
\cref{fig:app_abl_random_init_lstm} shows the performance when the equivariant LSTM is initialized with random instead of zero cell and hidden states. Random initialization results in a worse performance because the equivariance of the actor and the critic is broken. However, our method is generally robust to this change when the performance is still better than the baselines.

\begin{figure}[htbp]
 \centering
   \begin{subfigure}[t]{0.4\linewidth}
    \includegraphics[width=\linewidth]{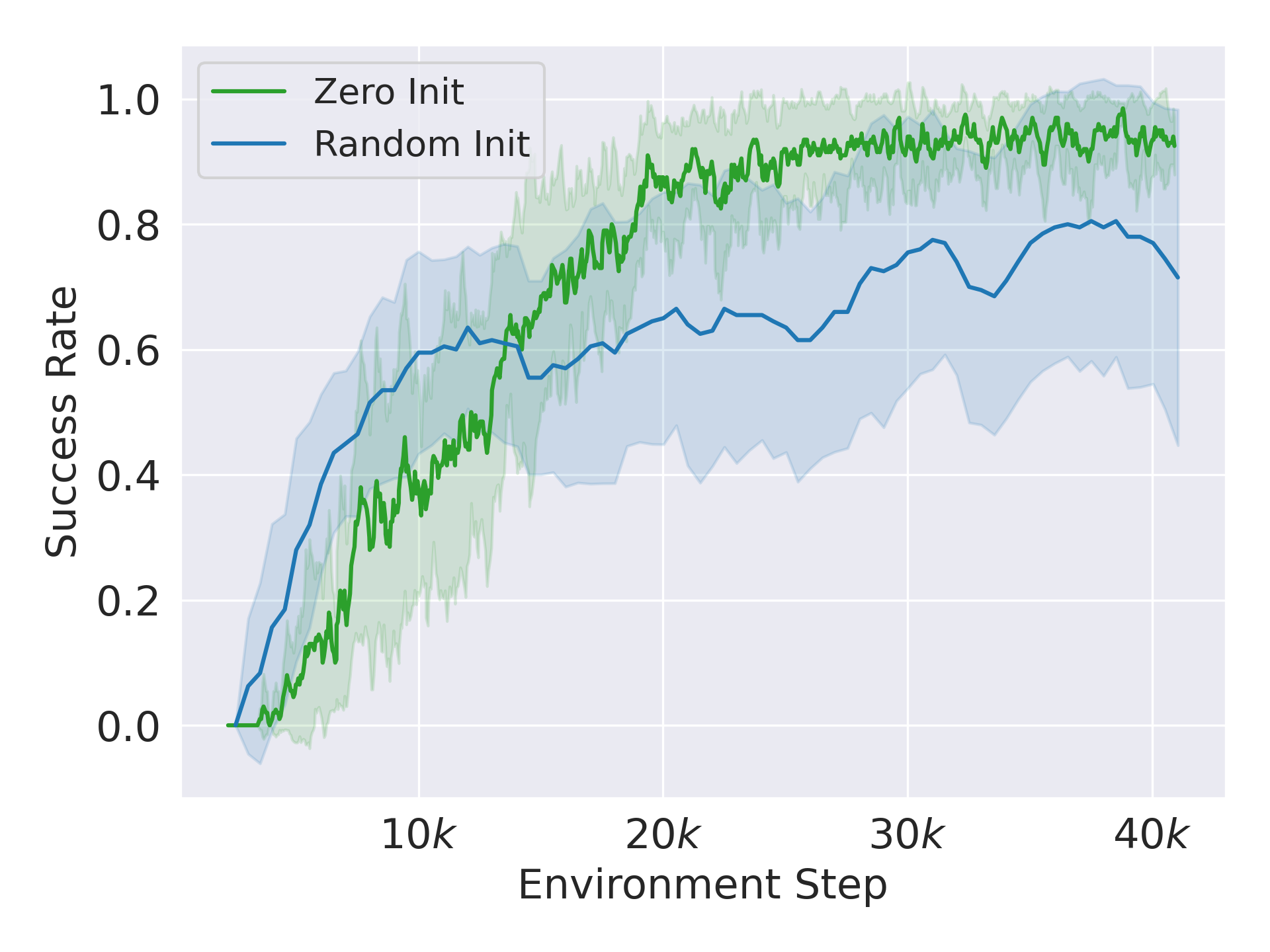}
    \caption{\texttt{Block-Picking}}
  \end{subfigure}
  \begin{subfigure}[t]{0.4\linewidth}
    \includegraphics[width=\linewidth]{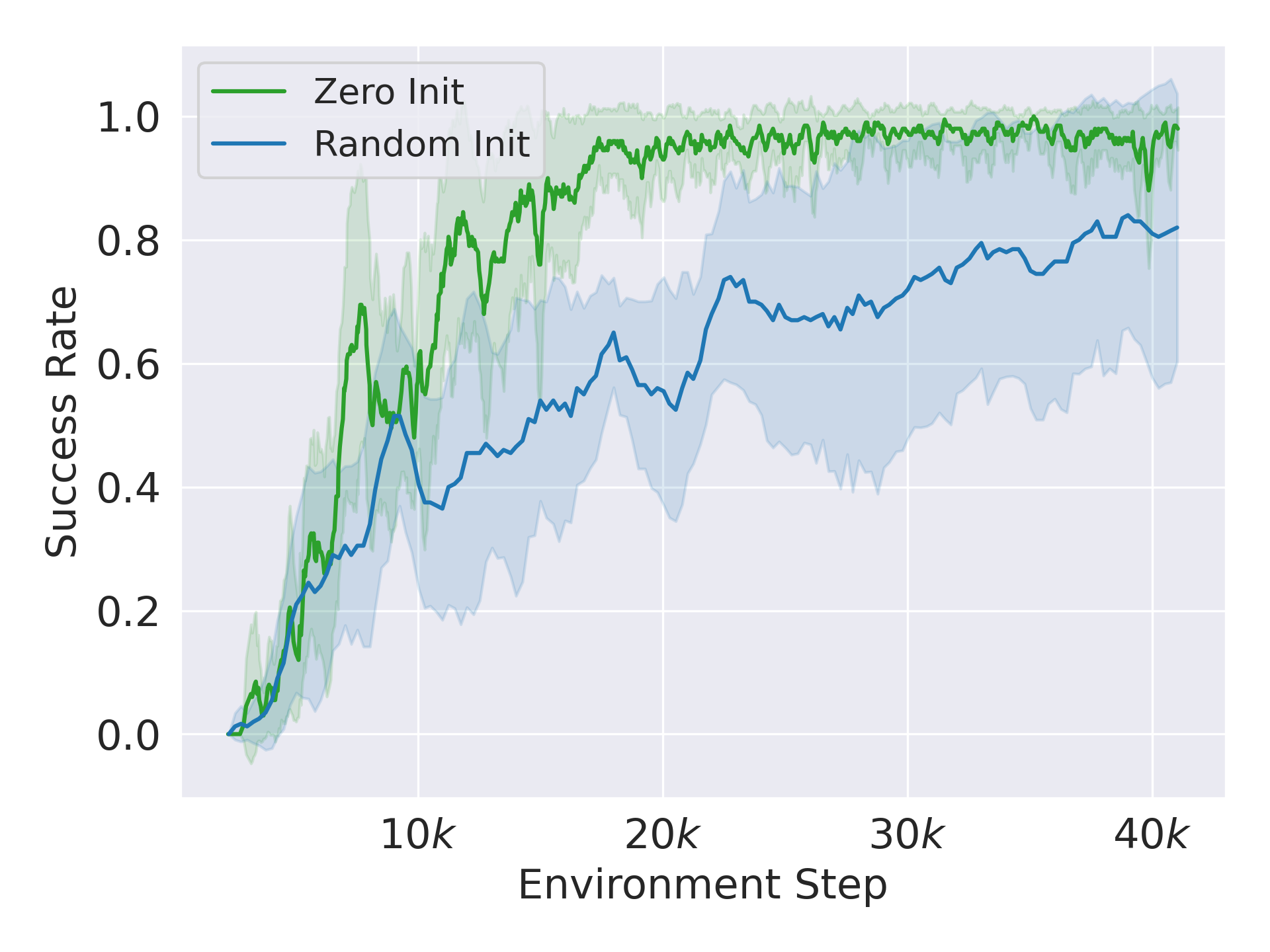}
    \caption{\texttt{Block-Pulling}}
  \end{subfigure}
  \begin{subfigure}[t]{0.4\linewidth}
    \includegraphics[width=\linewidth]{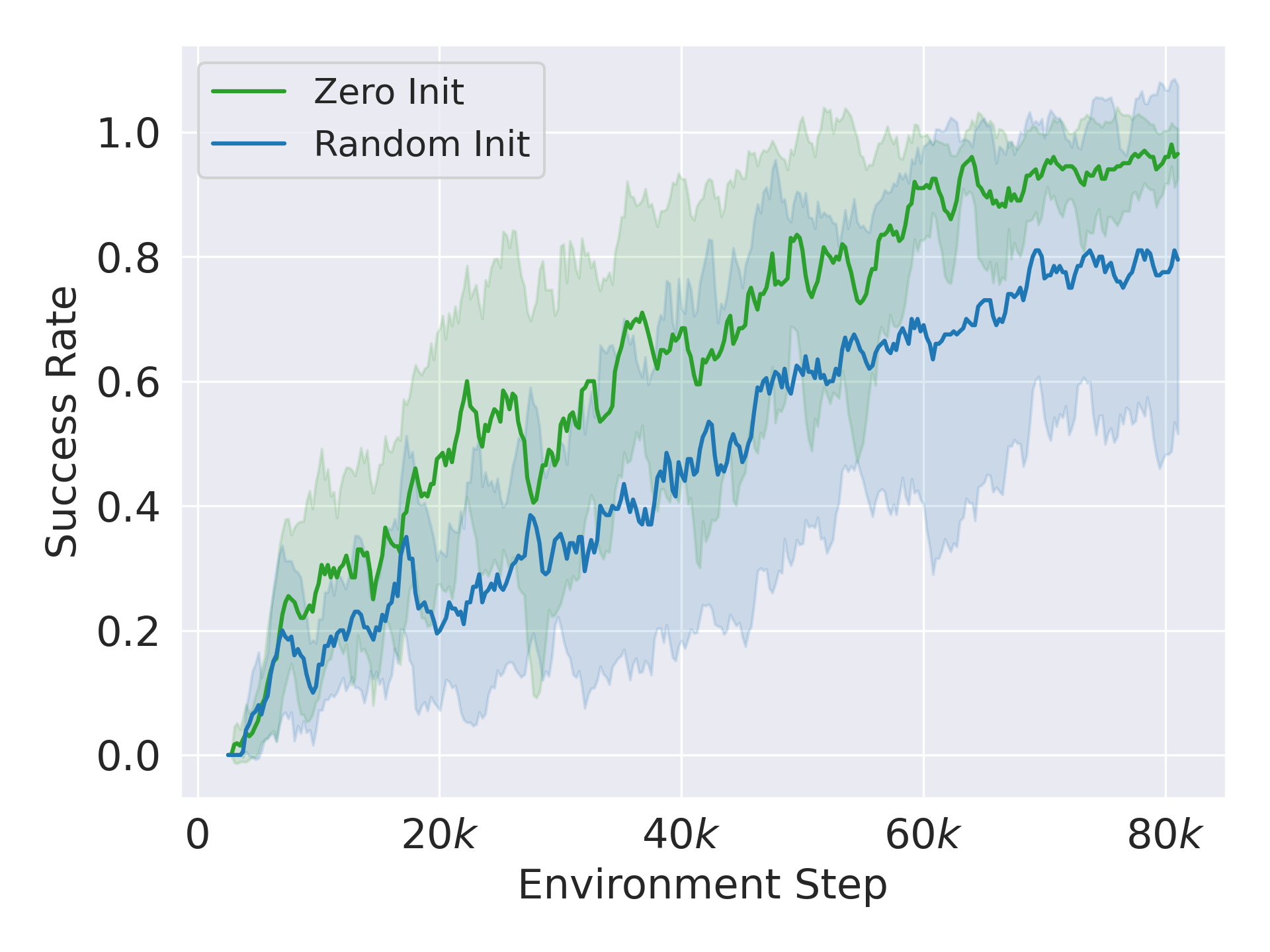}
    \caption{\texttt{Block-Pushing}}
  \end{subfigure}
  \begin{subfigure}[t]{0.4\linewidth}
    \includegraphics[width=\linewidth]{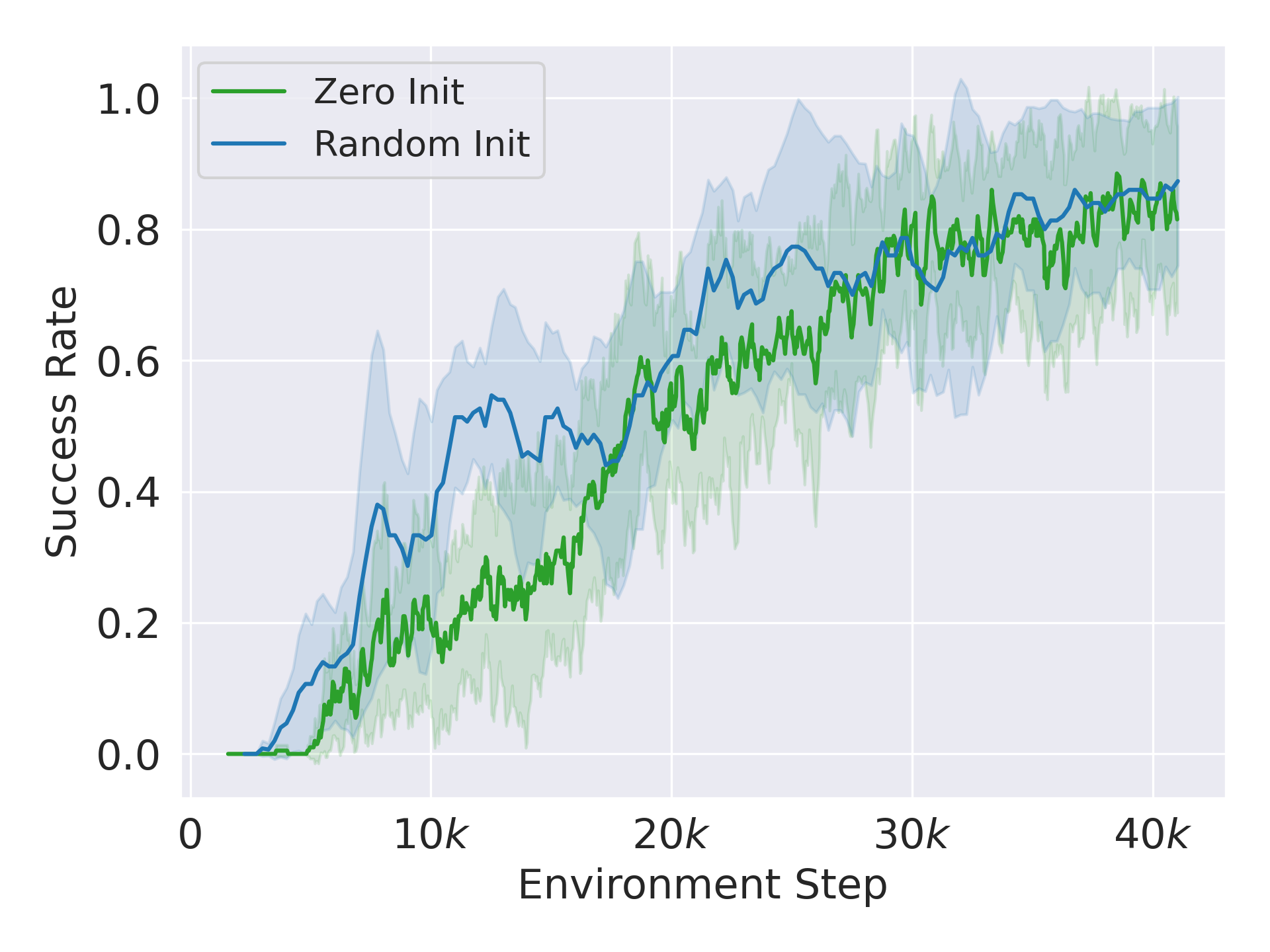}
    \caption{\texttt{Drawer-Opening}}
  \end{subfigure}
  \caption{Comparing the performance when initializing the cell and hidden states of the equivariant LSTM with zero and random values. Random initialization results in a worse performance because the actor's and the critic's equivariance is broken.}
  \label{fig:app_abl_random_init_lstm}
\end{figure}

\clearpage
\section{Additional Experimental Results}
\label{app:extra_results}

\subsection{Performance in Asymmetric \texttt{CarFlag} Domains}
\label{app:extra_asym_results}

\cref{fig:app_asym_carflag} shows the evaluation success rates in asymmetric variants of \texttt{CarFlag} domains with different offsets.
\begin{figure}[htbp]
 \centering
  \begin{subfigure}[t]{0.4\linewidth}
    \includegraphics[width=\linewidth]{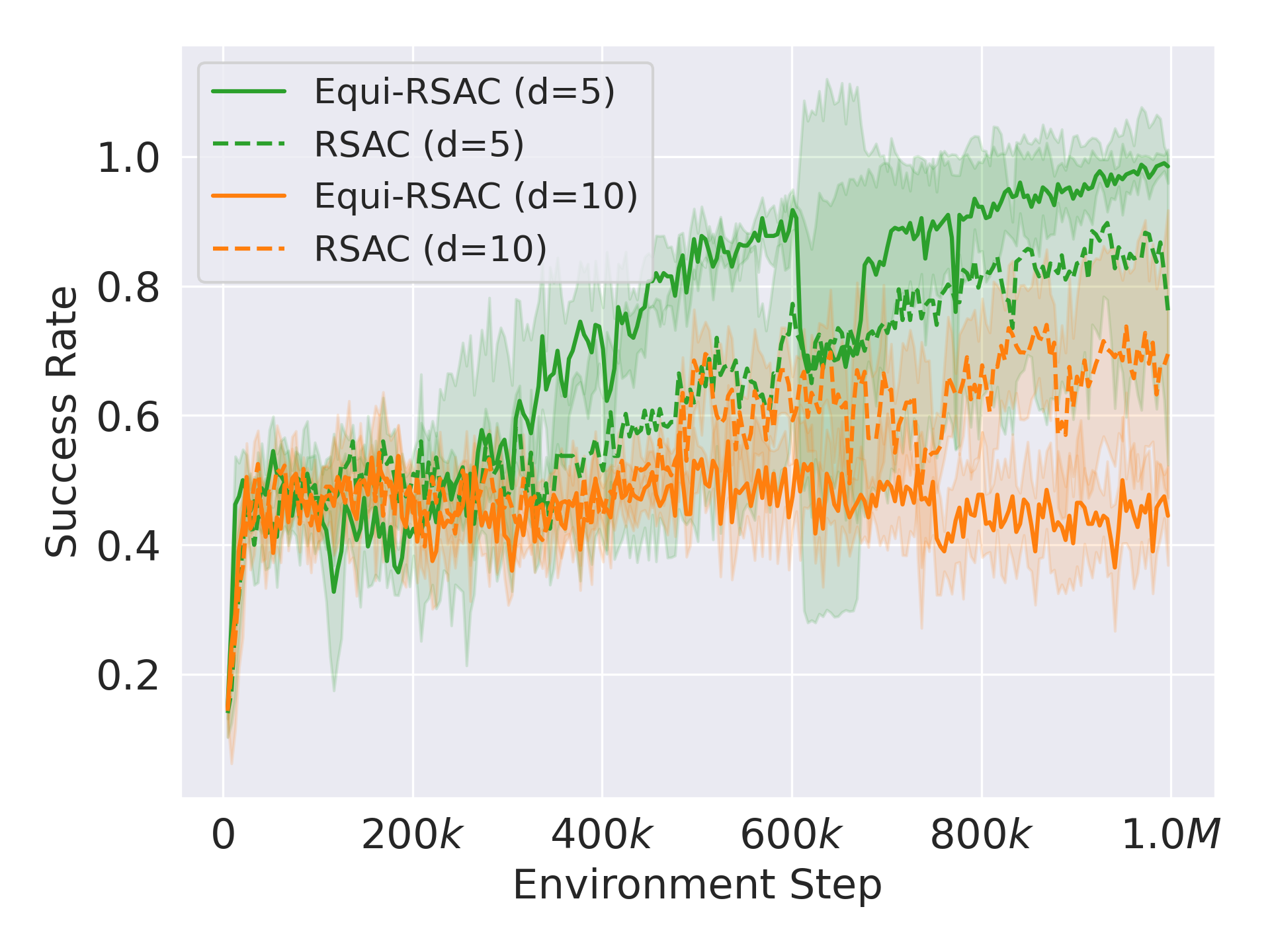}
    \caption{\texttt{Asym-CarFlag-1D} w/ positive offsets}
  \end{subfigure}
  \begin{subfigure}[t]{0.4\linewidth}
    \includegraphics[width=\linewidth]{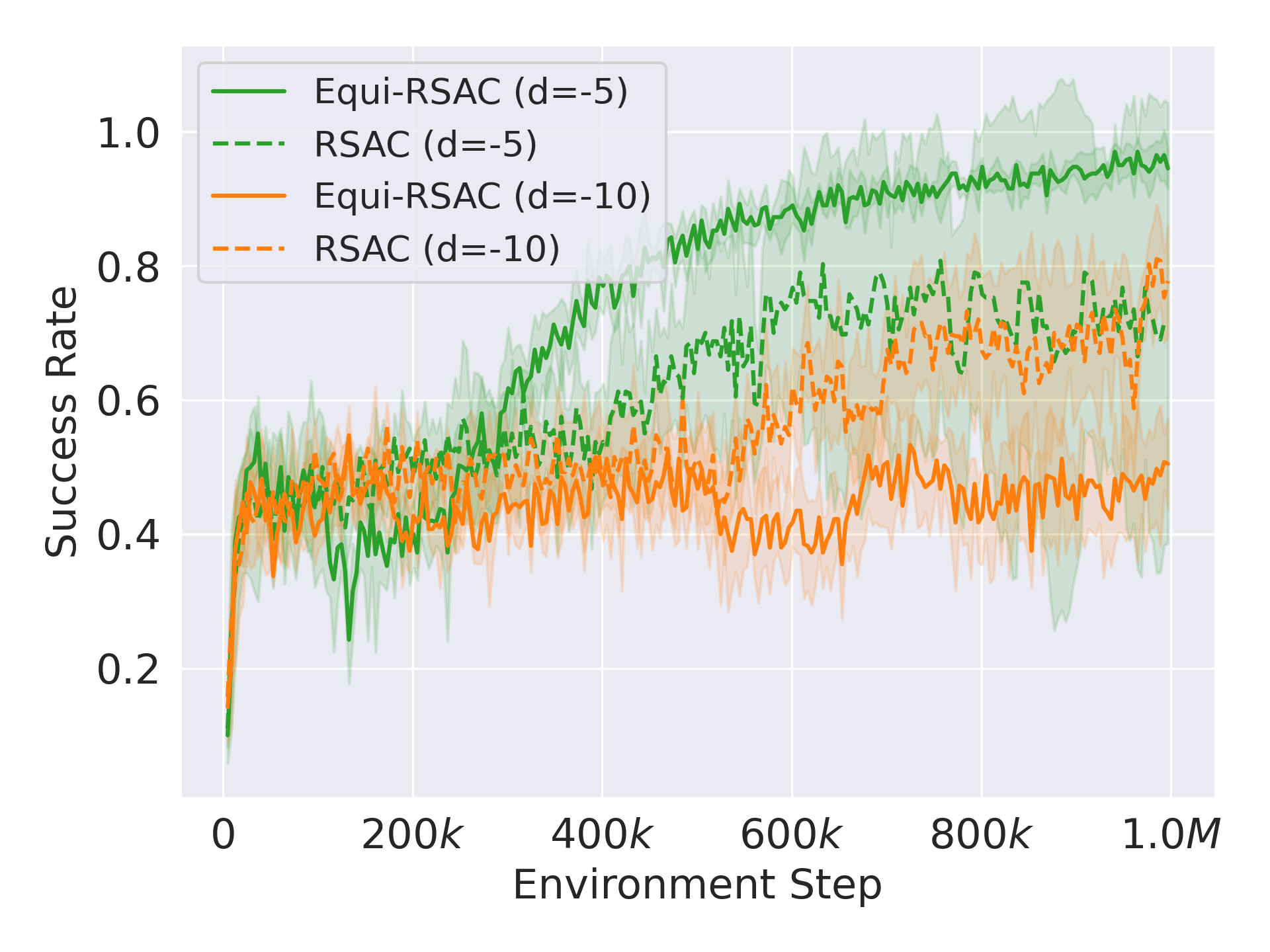}
    \caption{\texttt{Asym-CarFlag-1D} w/ negative offsets}
  \end{subfigure}
  \begin{subfigure}[t]{0.4\linewidth}
    \includegraphics[width=\linewidth]{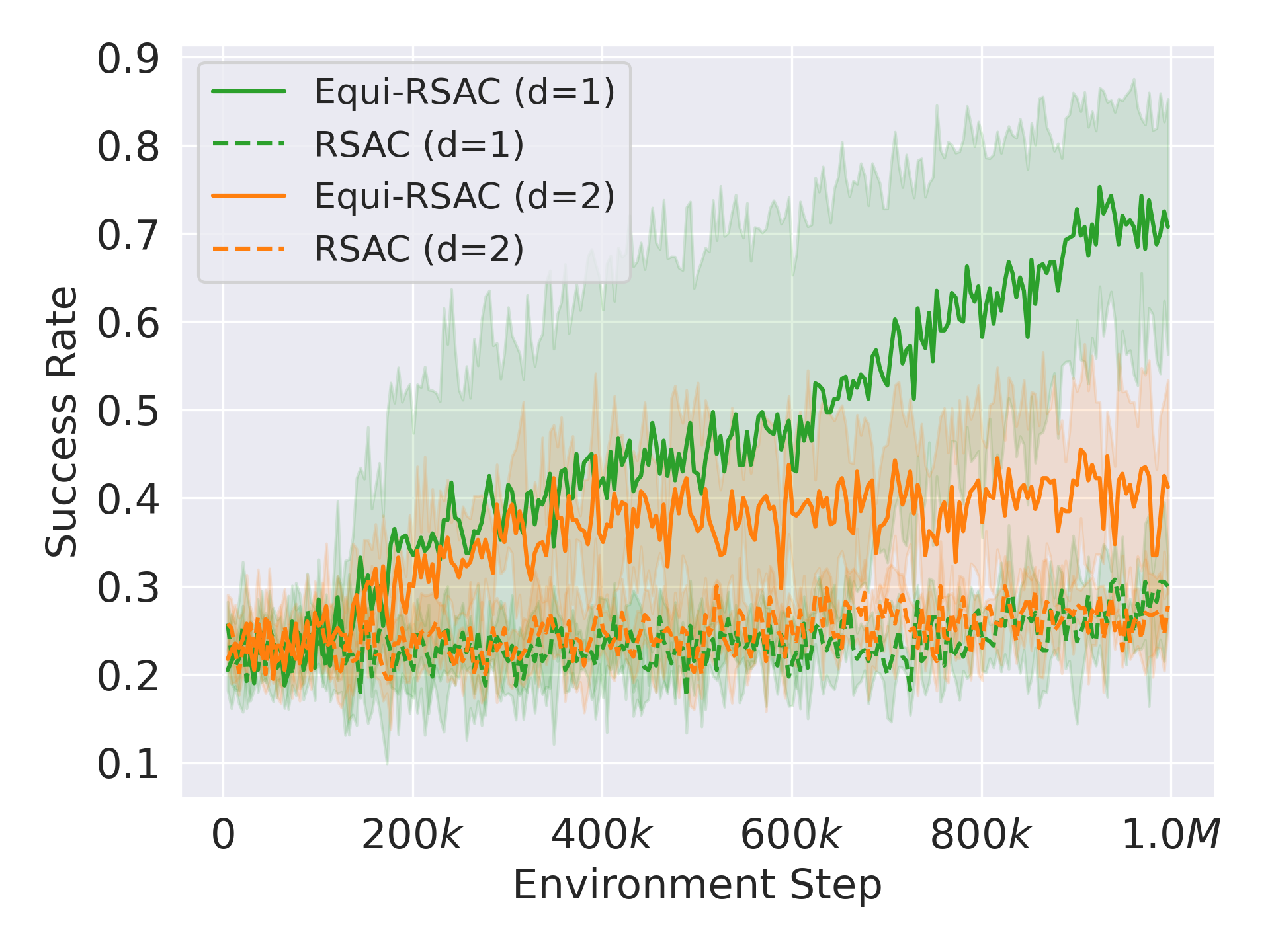}
    \caption{\texttt{Asym-CarFlag-2D} w/ positive offsets}
  \end{subfigure}
  \begin{subfigure}[t]{0.4\linewidth}
    \includegraphics[width=\linewidth]{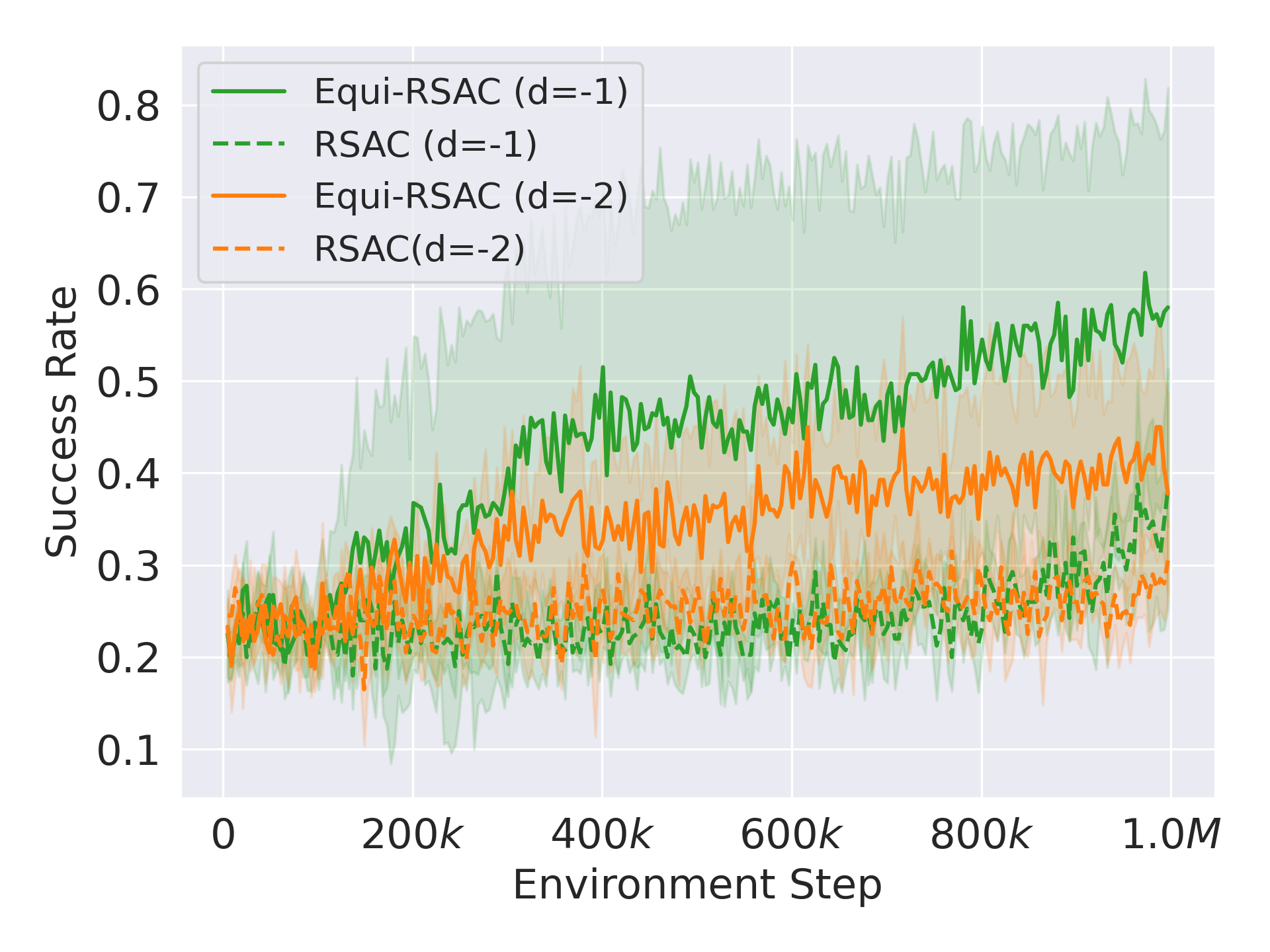}
    \caption{\texttt{Asym-CarFlag-2D} w/ negative offsets}
  \end{subfigure}
  \caption{Learning performance with asymmetric version of \texttt{CarFlag} domains.}
  \label{fig:app_asym_carflag}
\end{figure}

\subsection{Performance in Variants of \texttt{CarFlag} Domains}
\cref{fig:app_extra_carflag} show the evaluation success rates in different variants of \texttt{CarFlag} domains with a different world size and grid size. Our equivariant agent still outperforms other baselines.
\begin{figure}[htbp]
 \centering
  \begin{subfigure}[t]{0.4\linewidth}
    \includegraphics[width=\linewidth]{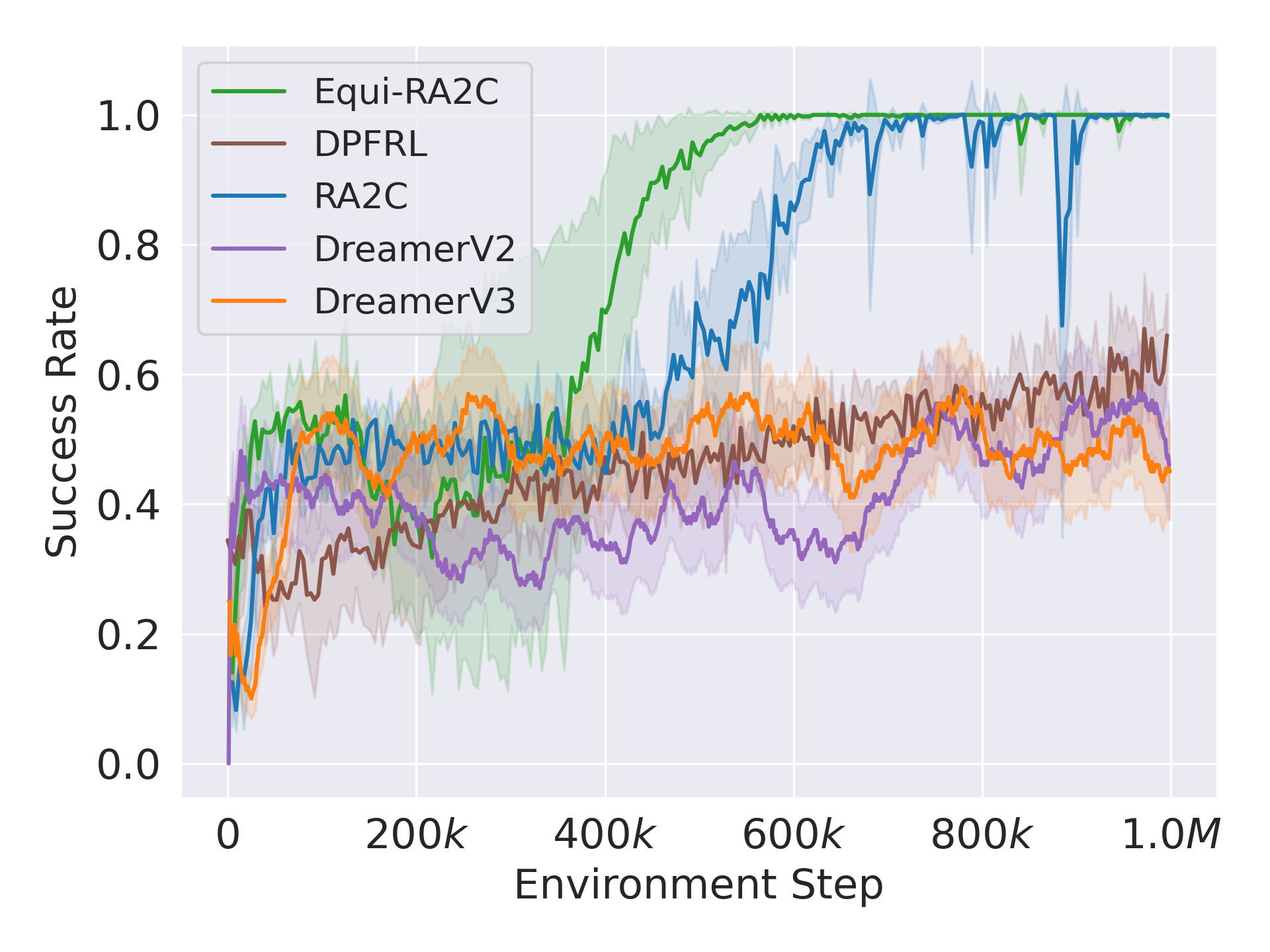}
    \caption{\texttt{CarFlag-1D} with world size 40}
  \end{subfigure}
  \begin{subfigure}[t]{0.4\linewidth}
    \includegraphics[width=\linewidth]{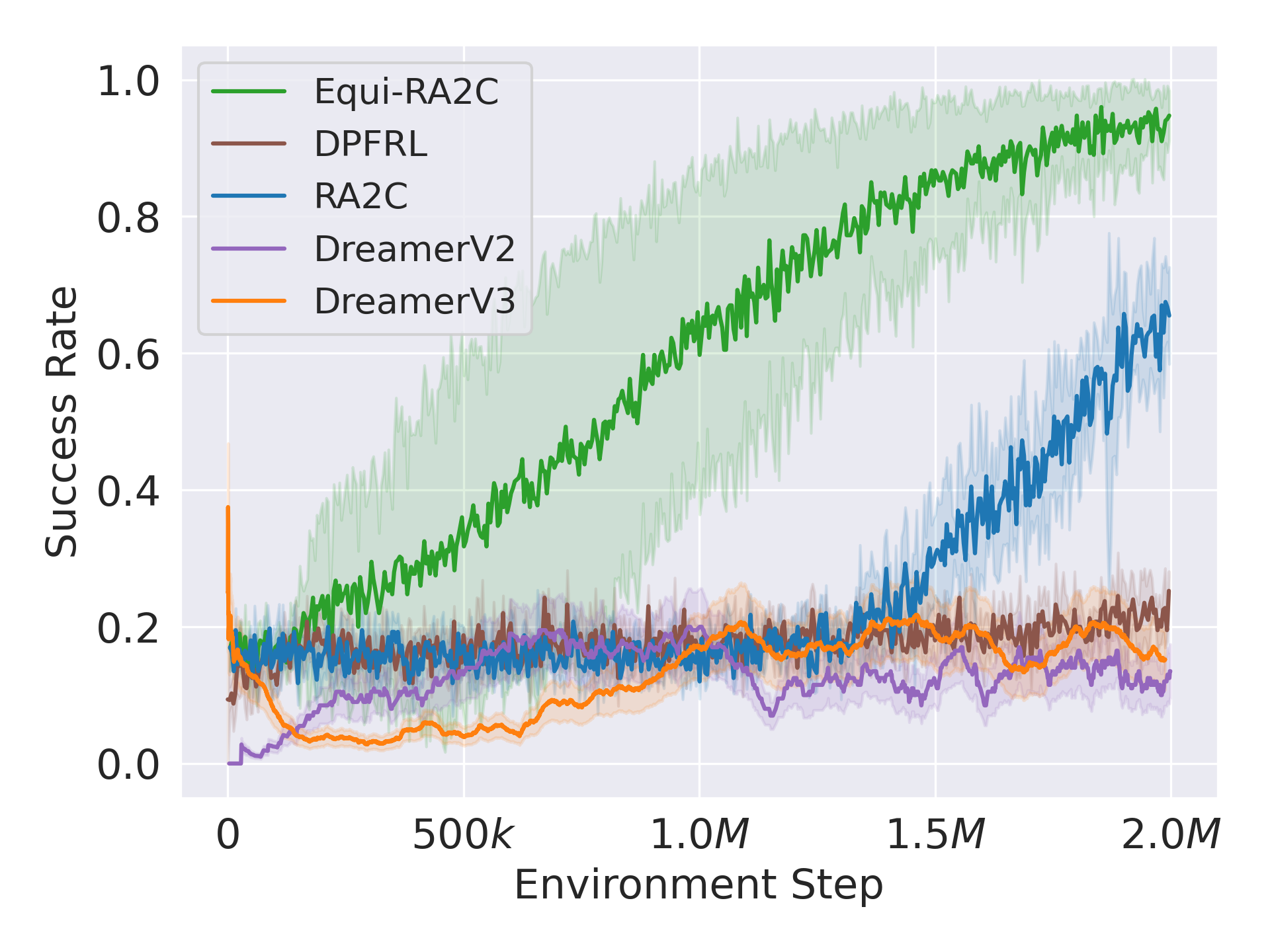}
    \caption{\texttt{CarFlag-2D} with grid size 11}
  \end{subfigure}
  \caption{Learning performance in \texttt{CarFlag} domains with different sizes.}
  \label{fig:app_extra_carflag}
\end{figure}

\subsection{Effect of Rotational Augmentation}
\cref{fig:app_no_rot} shows that including rotational augmented episodes significantly improves the learning performance of equivariant agents. These rotational augmented episodes possibly help equivariant agents distinguish different discrete rotations within a group, thus boosting performance.
\begin{figure}[htbp]
 \centering
  \begin{subfigure}[t]{0.4\linewidth}
    \includegraphics[width=\linewidth]{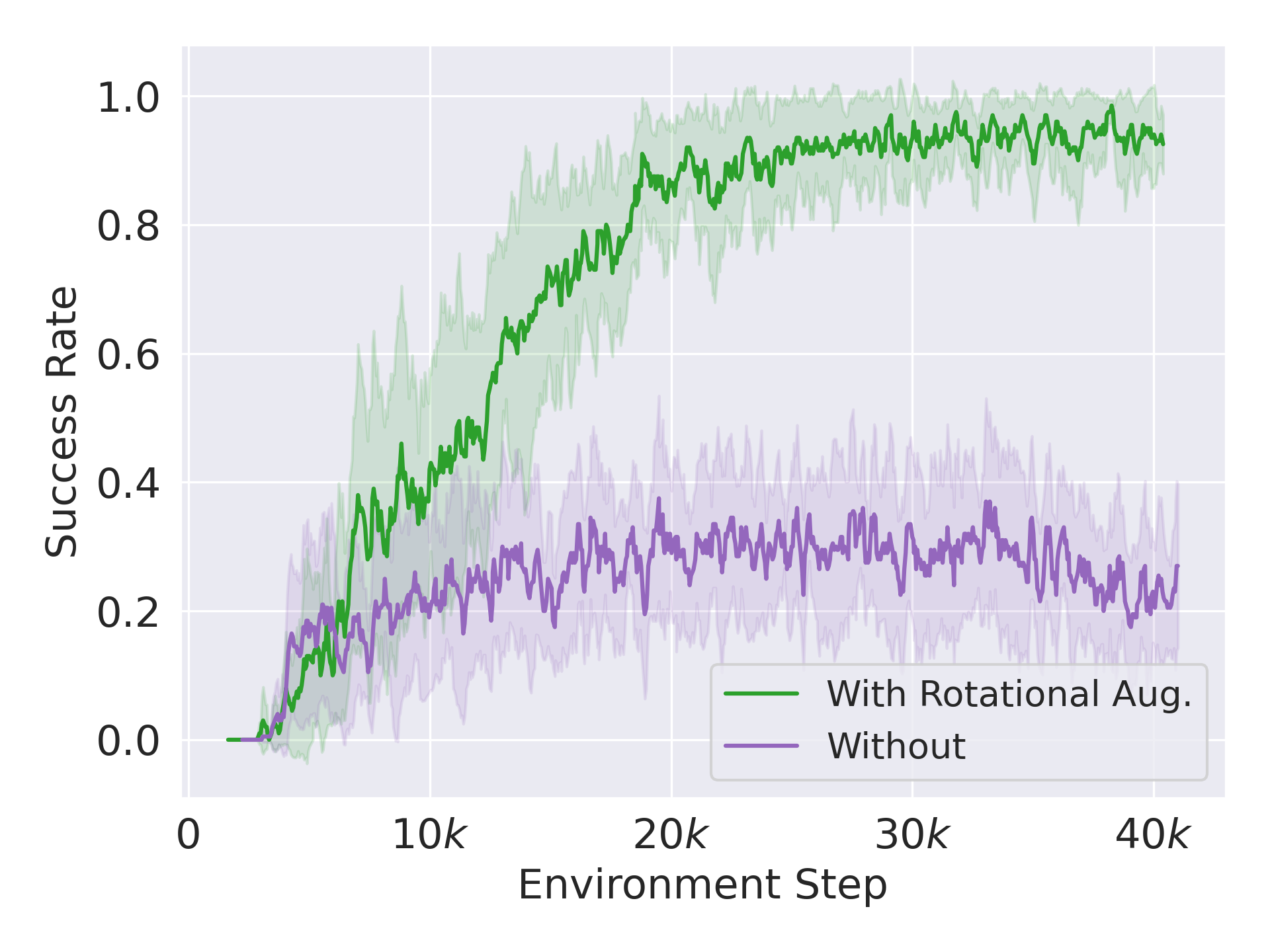}
    \caption{\texttt{Block-Picking}}
  \end{subfigure}
  \begin{subfigure}[t]{0.4\linewidth}
    \includegraphics[width=\linewidth]{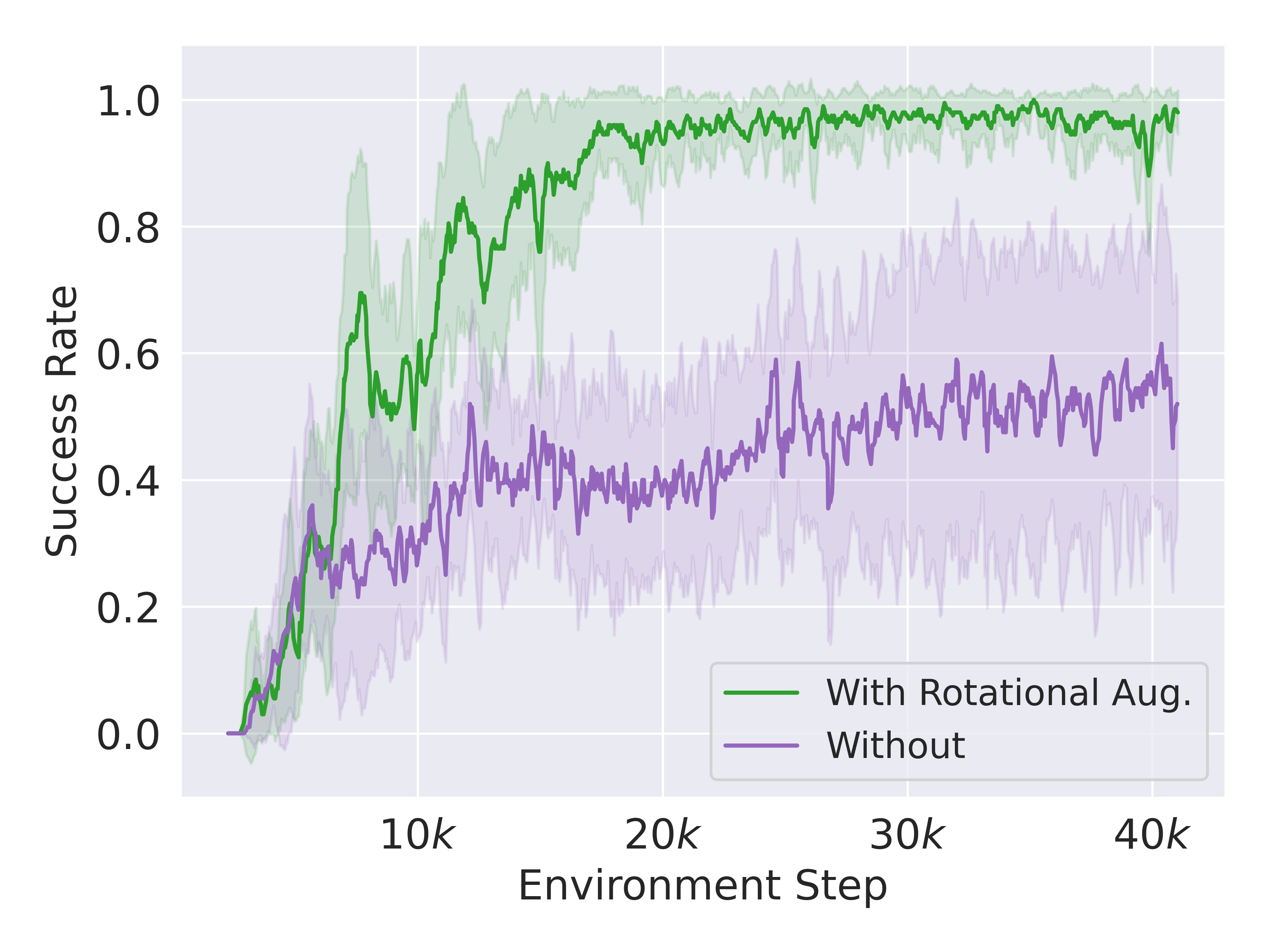}
    \caption{\texttt{Block-Pulling}}
  \end{subfigure}
  \begin{subfigure}[t]{0.4\linewidth}
    \includegraphics[width=\linewidth]{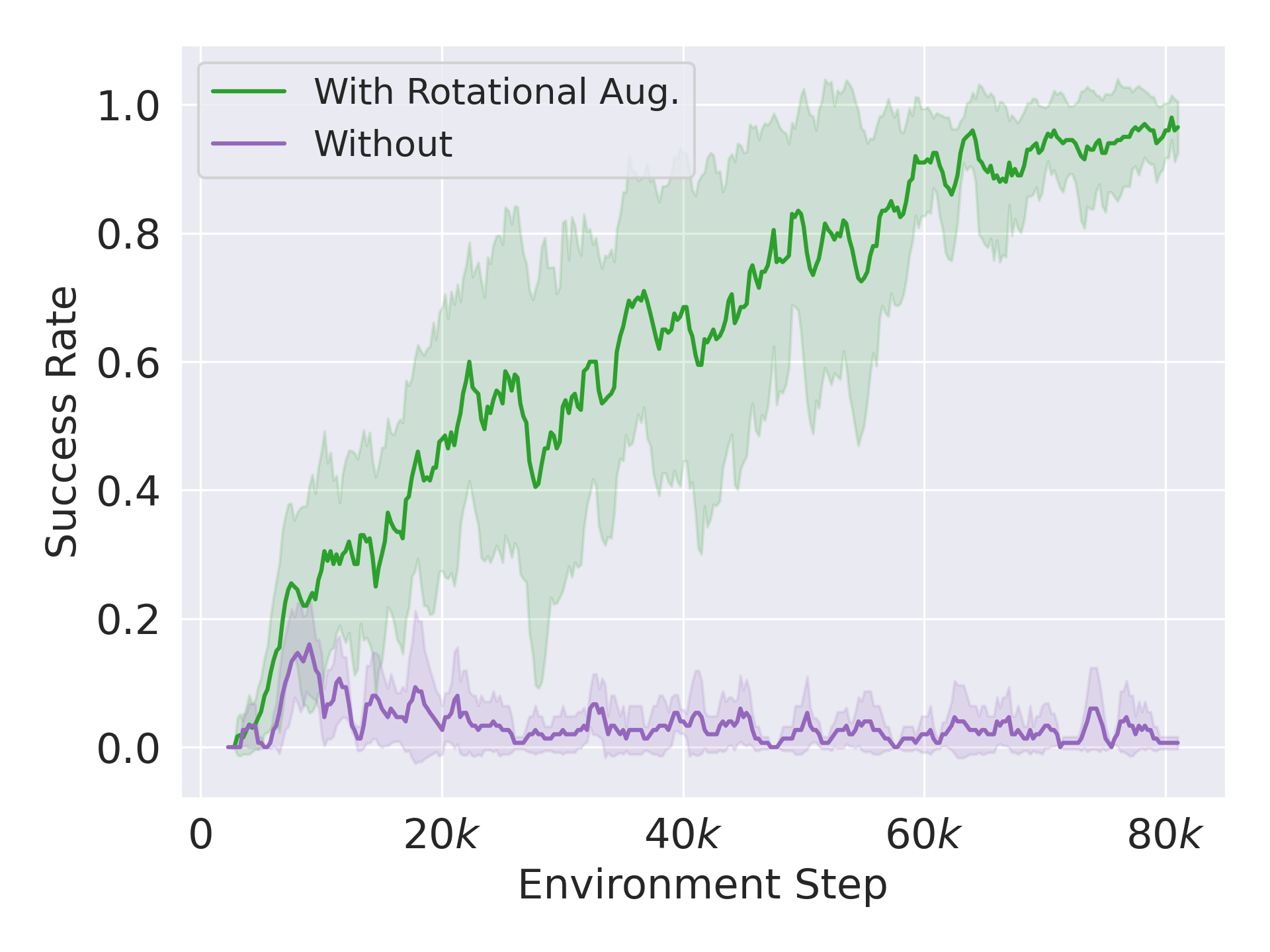}
    \caption{\texttt{Block-Pushing}}
  \end{subfigure}
  \begin{subfigure}[t]{0.4\linewidth}
    \includegraphics[width=\linewidth]{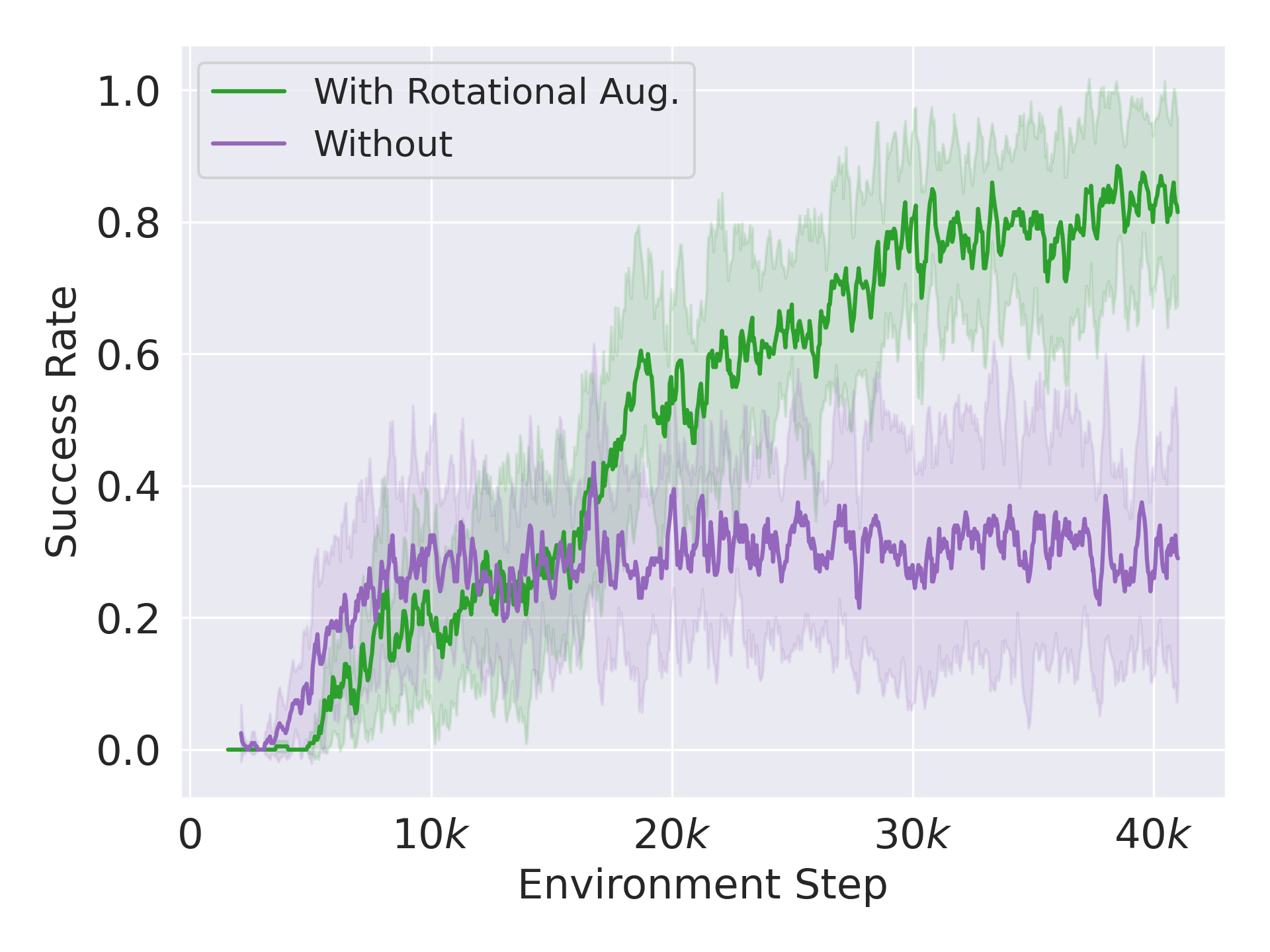}
    \caption{\texttt{Drawer-Opening}}
  \end{subfigure}
  \caption{Comparing the performance of our equivariant agents when using/not using rotational augmentation episodes.}
  \label{fig:app_no_rot}
\end{figure}

\clearpage
\subsection{Effect of Number of Demonstration Episodes}
\cref{fig:app_abl_demonstrations} shows that the performance improves when using more demonstrations in all domains, as expected.
\begin{figure}[htbp]
 \centering
  \begin{subfigure}[t]{0.4\linewidth}
    \includegraphics[width=\linewidth]{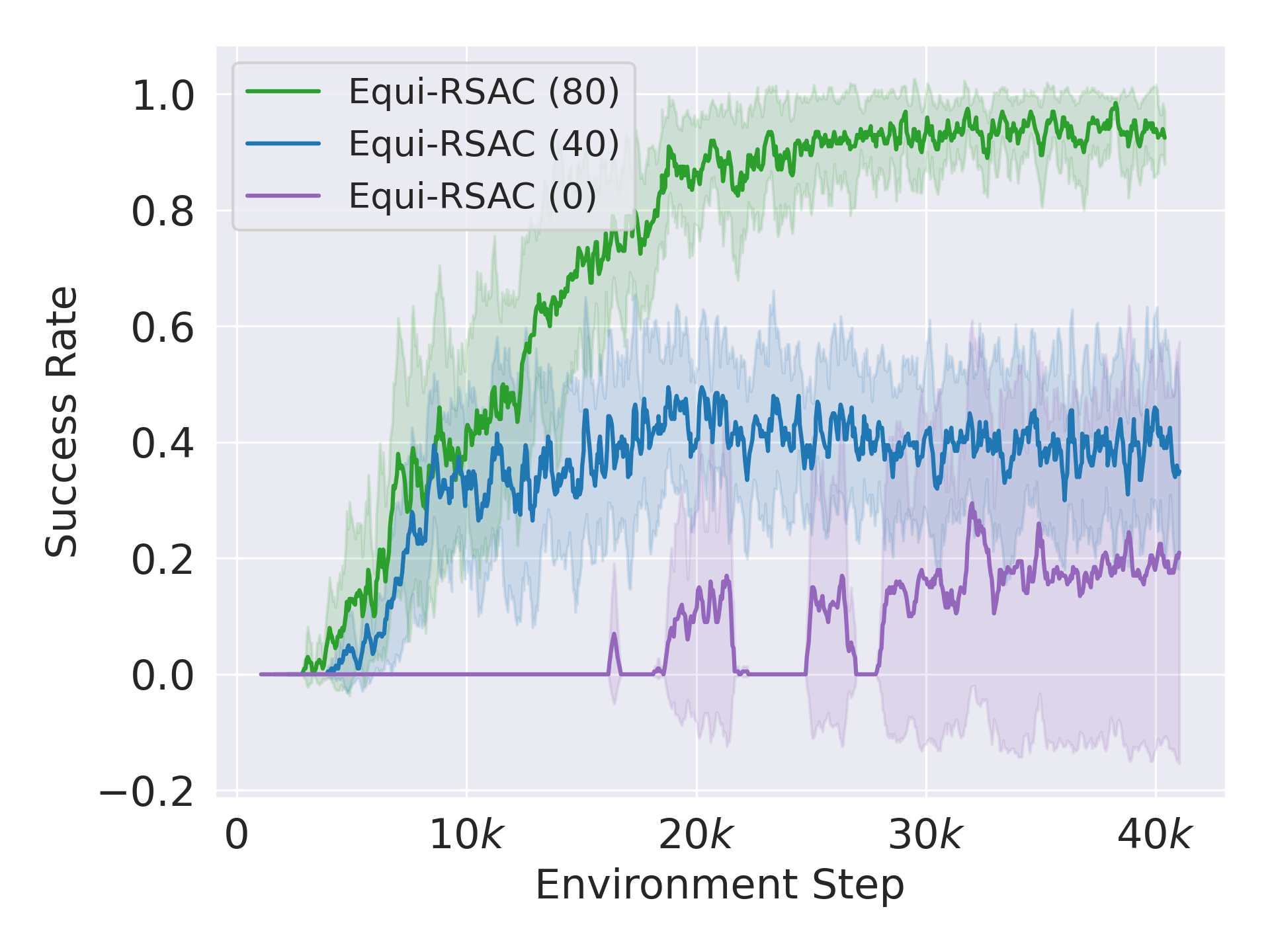}
    \caption{\texttt{Block-Picking}}
  \end{subfigure}
  \begin{subfigure}[t]{0.4\linewidth}
    \includegraphics[width=\linewidth]{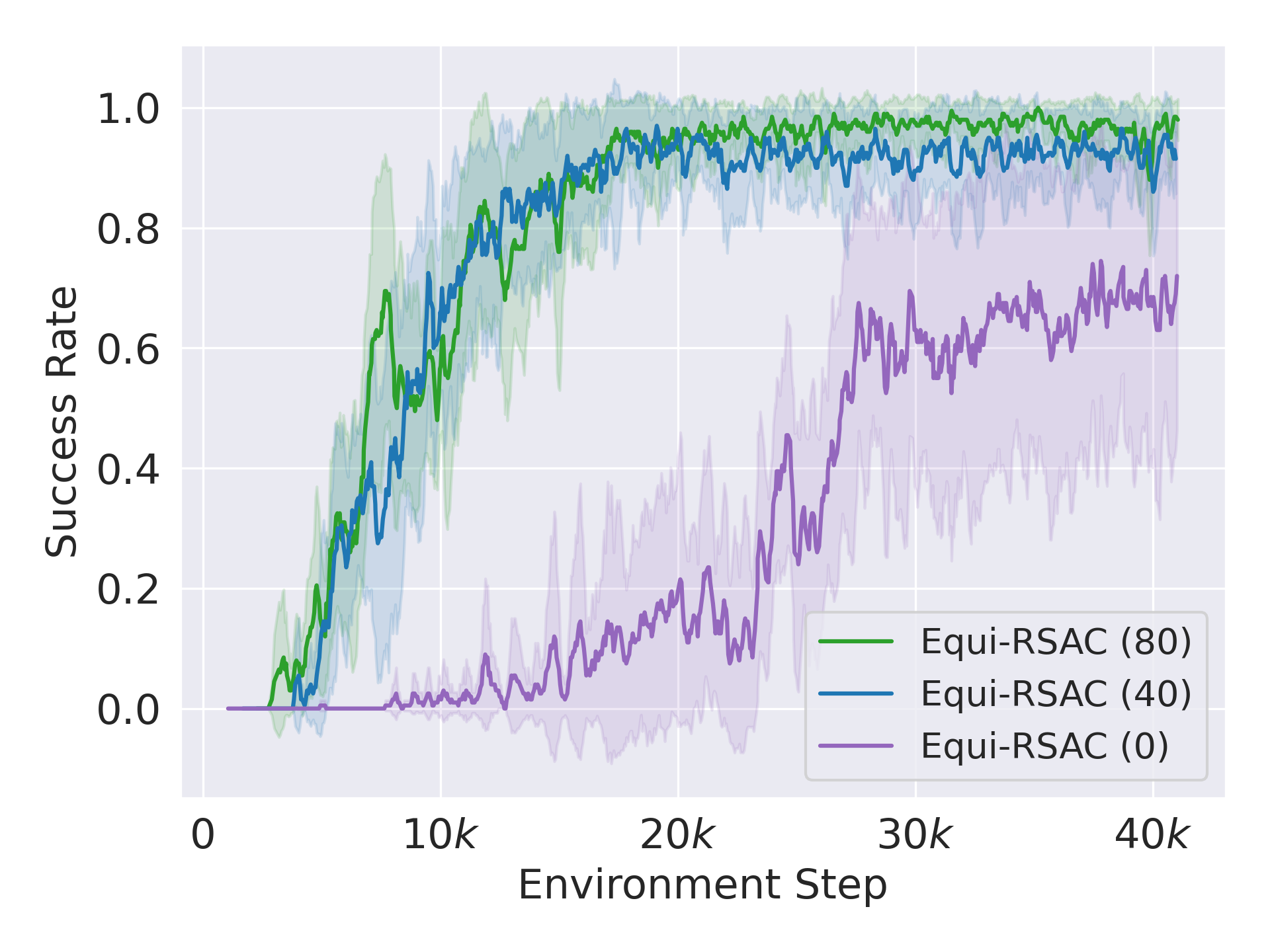}
    \caption{\texttt{Block-Pulling}}
  \end{subfigure}
  \begin{subfigure}[t]{0.4\linewidth}
    \includegraphics[width=\linewidth]{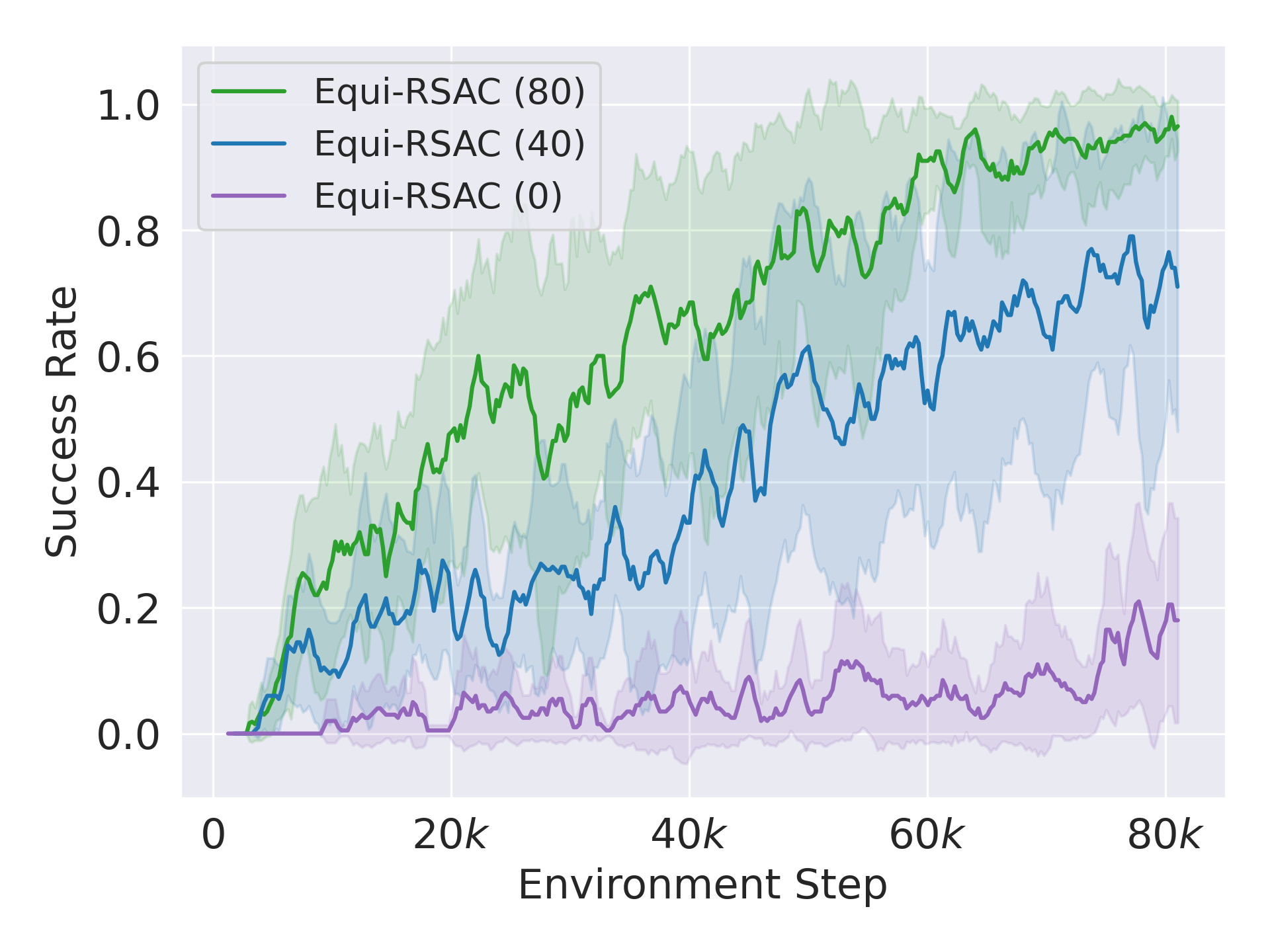}
    \caption{\texttt{Block-Pushing}}
  \end{subfigure}
  \begin{subfigure}[t]{0.4\linewidth}
    \includegraphics[width=\linewidth]{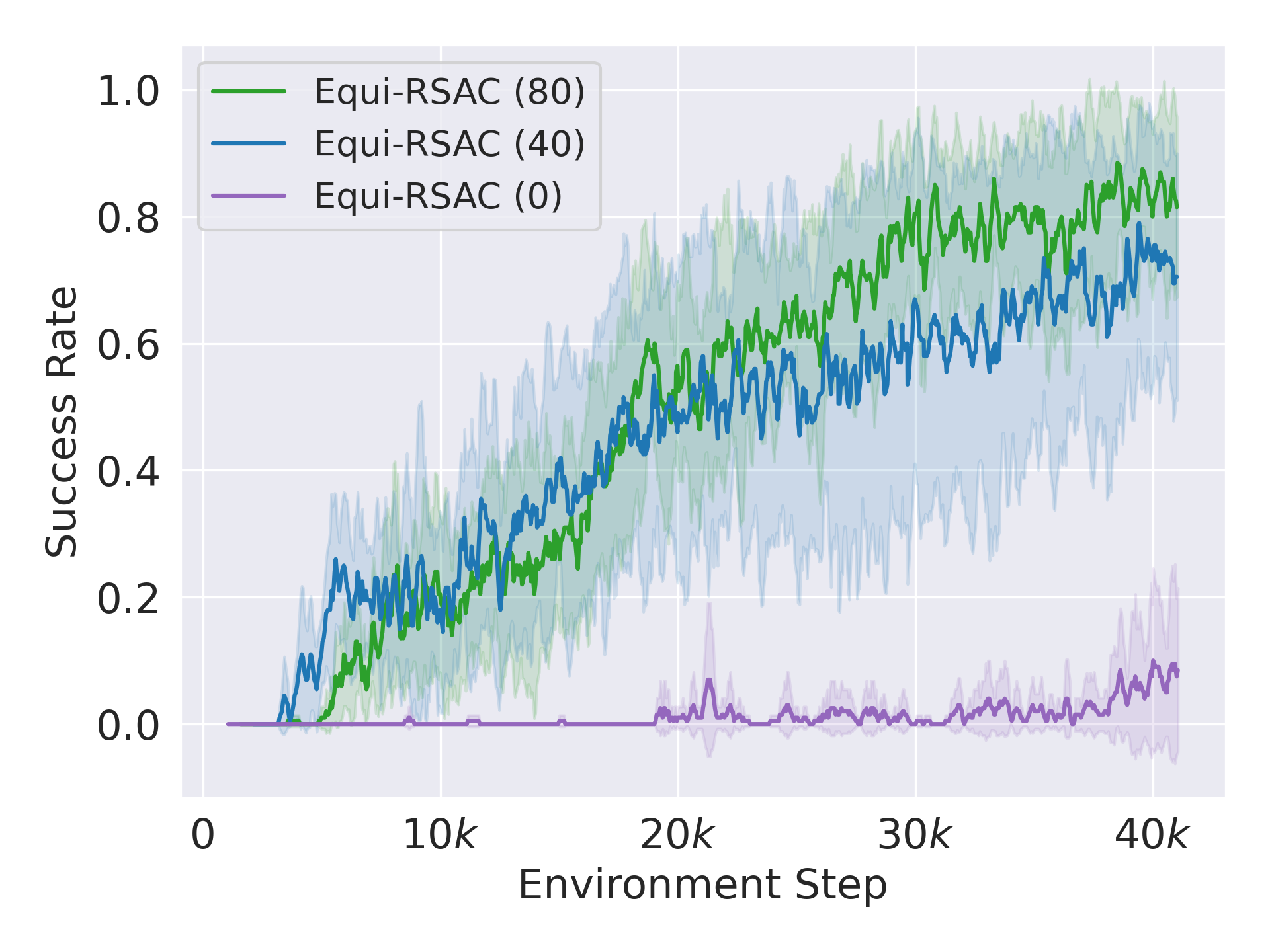}
    \caption{\texttt{Drawer-Opening}}
  \end{subfigure}
  \caption{Using different numbers of demonstration episodes.}
  \label{fig:app_abl_demonstrations}
\end{figure}

\end{document}